\documentclass{article}
\PassOptionsToPackage{numbers,compress}{natbib}
\usepackage{arxiv}

\usepackage[numbers,compress]{natbib}

\usepackage{times}

\usepackage{nicefrac}       % compact symbols for 1/2, etc.
\usepackage{microtype}      % microtypography

\usepackage[utf8]{inputenc}
\usepackage[T1]{fontenc}
\usepackage{microtype}
\usepackage{graphicx}
\usepackage{subfigure}
\usepackage{booktabs}

\usepackage{hyperref}
\hypersetup{colorlinks=true,citecolor=blue,linkcolor=blue,urlcolor=blue}
\usepackage{url}
\usepackage{nicefrac}
\usepackage{listings}
\usepackage{mathtools}
\usepackage{amsfonts}
\usepackage{amssymb}
\usepackage{amsmath}
\usepackage{amsthm}
\usepackage{indentfirst}
\usepackage{bm}
\usepackage{here}
\usepackage{booktabs}       % professional-quality tables
\usepackage{cancel}
\usepackage{multirow}
\usepackage{wrapfig}
\usepackage{bbm}
\usepackage{thm-restate}

\usepackage{algorithm,algorithmic}

\newtheorem{lemma}{Lemma}
\newtheorem{theorem}{Theorem}
\newtheorem{proposition}{Proposition}
\newtheorem{corollary}{Corollary}

\newtheorem{remark}{Remark}

\DeclarePairedDelimiterX{\KL}[2]{\mathrm{KL}[}{]}{#1\;\delimsize\|\;#2}

\DeclarePairedDelimiterX\braket[2]{\langle}{\rangle}{#1 \delimsize\vert #2}

\newcommand{\calx}{\mathcal{X}}

\newcommand{\Law}{\mathrm{Law}}

\newcommand{\calw}{\mathcal{W}}

\newcommand{\calz}{\mathcal{Z}}

\newcommand{\caly}{\mathcal{Y}}

\usepackage{multirow}

\usepackage{mathtools}
\usepackage{xcolor}
\mathtoolsset{showonlyrefs=true}
\let\KL\relax
\newcommand{\KL}{\mathrm{KL}}

\providecommand{\E}{\mathbb E}
\providecommand{\Law}{\operatorname{Law}}
\providecommand{\Unif}{\mathrm{Unif}}

\providecommand{\cA}{\mathcal A}

\title{Information-Theoretic Generalization Bounds \\for Sequential Decision Making}
\author{%
  Futoshi Futami\thanks{Equal contribution.}\\
    The University of Osaka / RIKEN AIP / The University of Tokyo \\
  \texttt{futami.futoshi.es@osaka-u.ac.jp}\\
  \And  
  Masahiro Fujisawa$^{*}$\\
  The University of Osaka / RIKEN AIP \\
  \texttt{fujisawa@ist.osaka-u.ac.jp} 
  }
\date{\today}

\begin{document}
\maketitle

\begin{abstract}
Information-theoretic generalization bounds based on the supersample construction
are a central tool for algorithm-dependent generalization analysis in the batch i.i.d.~setting.
However, existing supersample conditional mutual information (CMI) bounds do not directly apply to sequential decision-making
problems such as online learning, streaming active learning, and bandits, where data are revealed
adaptively and the learner evolves along a causal trajectory.
To address this limitation, we develop a sequential supersample framework that separates the
learner filtration from a proof-side enlargement used for ghost-coordinate comparisons.
Under a row-wise exchangeability assumption, the sequential generalization gap is controlled by
sequential CMI, a sum of roundwise selector--loss information
terms.
We also establish a Bernstein-type refinement that yields faster rates under suitable variance
conditions.
The selector-SCMI proof strategy applies to online learning, streaming active learning with importance
weighting, and stochastic multi-armed bandits.
\end{abstract}

\section{Introduction}
\label{sec_intro}

Information-theoretic (IT) generalization bounds~\citep{russo16,Xu2017} are now a standard tool for algorithm-dependent generalization analysis. Rather than controlling generalization only through the size of a hypothesis class, these bounds measure how strongly a particular learning procedure couples the training data to the output it returns. Mutual information (MI)~\citep{Xu2017} and conditional mutual information (CMI)~\citep{steinke20a} are especially useful for this purpose because they quantify the information about the sample that is retained in the learned predictor, parameters, or evaluated losses.

Among IT approaches, supersample-based CMI bounds provide particularly sharp and often non-vacuous bounds in the batch data setting, where the training data are independent and identically distributed (i.i.d.)~\citep{steinke20a,hellstrom2022a,harutyunyan2021}. A paired supersample separates the selected training coordinate from an unselected ghost coordinate, and the resulting CMI term measures how much the algorithm reveals about the selector. This viewpoint captures stability~\citep{steinke20a,futami2023timeindependent,wang2022on}, compression~\citep{steinke20a,sefidgaran2023minimum}, and implicit regularization~\citep{wang2022on,wang2023} effects that are often invisible to worst-case complexity measures. Recent scalar refinements further improve the numerical behavior of these bounds~\citep{wang23}. The same framework also clarifies how deterministic algorithms can generalize and how algorithm-dependent information terms connect to classical combinatorial quantities such as the VC dimension~\citep{Mohri}.

Existing CMI arguments are essentially batch arguments.
They use the symmetry of paired i.i.d. supersamples to compare, in each row, the selected training coordinate with its unselected ghost coordinate.
This symmetry is not directly available in adaptive settings.
In online learning~\citep{haddouche2022online,haddouche2023pacbayes}, active learning~\citep{beygelzimer09}, stochastic bandits~\citep{seldin2012pacbayes}, and other sequential problems, each observation is generated through a history-dependent interaction between the learner and the environment.
A terminal CMI term can measure the dependence between the final output and the whole selector vector, but it does not reveal where the information needed for each selected--ghost comparison enters the trajectory.
Thus, extending CMI methods to such settings requires a construction that preserves the ghost-sample comparison while respecting causality.

To handle adaptivity, we introduce a causal version of the supersample construction used in batch CMI.
The construction keeps the selected--ghost comparison, but separates what the learner observes from what is used only in the proof.
The learner history contains only selected past observations; for the analysis, we additionally reveal the current paired row so that the selected coordinate can be compared with its ghost coordinate at that round.
This separation is essential because a terminal CMI term can measure dependence on the whole selector vector, but it does not show how much selector information enters at each selected--ghost comparison.
We therefore measure, at each round, the CMI between the learner-induced loss and the current selector after fixing the selected past and the current paired row.
We call these roundwise quantities sequential CMI (SCMI).
Under selected-coordinate updates and row-wise exchangeability, the train--holdout gap decomposes into roundwise selector--loss correlations, which are controlled by SCMI in Theorem~\ref{thm:general-rademacher}.
We also give a Bernstein refinement in Theorem~\ref{thm:general-bernstein}, which yields fast-rate bounds when the corresponding conditional second-moment condition holds.

We apply the selector-SCMI proof strategy in three sequential examples. Section~\ref{sec:online-application} studies online learning and sequential complexity. Here, selector prefixes generate a binary tree of counterfactual histories, connecting SCMI with sequential Rademacher complexity and, under pathwise realizability, with the Littlestone dimension~\citep{rakhlin2015online}. Section~\ref{sec:active-learning-application} considers streaming active learning with importance weighting.
The sequential supersample includes query coins and label masks, so the learner observes only queried labels while the proof compares selected and ghost coordinates.
This yields an evaluated SCMI bound for the terminal predictor in Theorem~\ref{thm:active-iw-slow}.

Section~\ref{sec:bandit-application} treats stochastic multi-armed bandits with smoothing.
Using the observed feedback, we view the empirical bandit gap estimate and its ghost counterpart as a train--holdout pair for an importance-weighted gap process. Under the standard assumption on the gap from the optimal arm, a Bernstein-inspired variance estimate yields a regret bound with square-root horizon dependence, up to logarithmic and gap-dependent factors, improving the horizon order relative to existing PAC-Bayes bandit bounds~\citep{seldin2012pacbayes,haddouche2023pacbayes}.

\section{Preliminaries}
\label{sec:preliminaries}
We use uppercase letters for random variables and lowercase letters for their realizations.
For $a\in\mathbb N$, write $[a]\coloneqq \{1,\dots,a\}$.
We denote MI and CMI by $I(\cdot;\cdot)$ and $I(\cdot;\cdot\mid\cdot)$, respectively, and Kullback--Leibler divergence by $\KL(\cdot\|\cdot)$.

We first recall the batch supersample CMI setting in a form that highlights the
selector--loss correlation controlled by information-theoretic bounds~\citep{steinke20a}.
Let $\widetilde Z=((Z_{i,0},Z_{i,1}))_{i=1}^n$ be an $n\times 2$ supersample whose entries are drawn i.i.d.~from $\mathcal{D}$.
Let $U=(U_1,\dots,U_n)\sim\Unif(\{0,1\}^n)$ be independent of $\widetilde Z$, and define
$S=\widetilde Z_U \coloneqq (Z_{i,U_i})_{i=1}^n$ as a training dataset.
Let $\calw\subset\mathbb{R}^d$ be a parameter space, and let $\cA:\calz^n\to\calw$
be a possibly randomized learning algorithm, which is characterized by a conditional distribution $P(W|S)$. For a bounded loss $\ell:\mathcal{W}\times\mathcal{Z}\to[0,1]$, define
$L_{\mathcal D}(W)\coloneqq \E_{Z\sim\mathcal D}[\ell(W,Z)\mid W]$, $\widehat L_S(W)\coloneqq \frac1n\sum_{i=1}^n \ell(W,Z_{i,U_i})$, and $L_i^+ \coloneqq  \ell(W,Z_{i,0})$, $\varepsilon_i\coloneqq 2U_i - 1$.
We refer to this as the \emph{batch supersample setting}. In this setting, the standard one-coordinate supersample identity~\citep{wang23} is
\begin{align}
\E[L_{\mathcal D}(W)-\widehat L_S(W)]
&=
\frac1n\sum_{i=1}^n\E[\ell(W,Z_{i,1-U_i})-\ell(W,Z_{i,U_i})]
=
\frac2n\sum_{i=1}^n\E[\varepsilon_iL_i^+],
\label{eq:batch-one-coordinate-id}
\end{align}
where the generalization gap is written as a signed correlation between the selector noise and the evaluated loss.
Rademacher-complexity arguments~\citet{Mohri} control analogous signed correlations in the worst case; the CMI approach controls them through the information revealed about the selector.

\begin{theorem}[Batch single-loss eCMI bound~\citep{wang23}]
\label{thm:batch-single-loss-ecmi}
Under the batch supersample setting,
\begin{align}
\bigl|\E[L_{\mathcal D}(W)-\widehat L_S(W)]\bigr|
&\le
\frac2n\sum_{i=1}^n\sqrt{2\,I(L_i^+;U_i\mid \widetilde Z)}.
\label{eq:batch-one-coordinate-final}
\end{align}
\end{theorem}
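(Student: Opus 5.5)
Starting from the one-coordinate identity \eqref{eq:batch-one-coordinate-id}, the generalization gap equals $\frac2n\sum_{i=1}^n\E[\varepsilon_i L_i^+]$. By the triangle inequality it therefore suffices to prove, for each fixed $i$,
\begin{align}
\bigl|\E[\varepsilon_i L_i^+]\bigr|\le \sqrt{2\,I(L_i^+;U_i\mid \widetilde Z)}.
\end{align}
I will condition on $\widetilde Z=\tilde z$ throughout and handle each row separately. The key observation is that $U_i$ is independent of $\widetilde Z$, so under the product measure $P_{L_i^+\mid \tilde z}\otimes P_{U_i}$ the pair $(L_i^+,\varepsilon_i)$ has $\E[\varepsilon_i L_i^+]=\E[\varepsilon_i]\,\E[L_i^+\mid\tilde z]=0$. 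Hence $\E[\varepsilon_i L_i^+\mid \tilde z]$ is the difference of the expectations of $f(\ell,u)=(2u-1)\ell$ under the joint distribution $P_{L_i^+,U_i\mid\tilde z}$ and under the product of its marginals.

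Next I apply the Donsker--Varadhan variational formula to this pair of measures. For any $\lambda\in\mathbb R$,
\begin{align}
\lambda\,\E_{P}[f]\le \KL(P\|Q)+\log \E_{Q}[e^{\lambda f}],
\end{align}
with $P=P_{L_i^+,U_i\mid\tilde z}$ and $Q=P_{L_i^+\mid\tilde z}\otimes P_{U_i}$. The KL term is the disintegrated mutual information $I^{\tilde z}(L_i^+;U_i)$. Under $Q$, for each fixed $\ell\in[0,1]$ the variable $\varepsilon_i\ell$ is a Rademacher variable scaled by $\ell$, so $\E_Q[e^{\lambda f}]=\E[\cosh(\lambda L_i^+)]\le e^{\lambda^2/2}$. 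Optimizing over $\lambda$ of either sign gives
\begin{align}
\bigl|\E[\varepsilon_i L_i^+\mid\tilde z]\bigr|\le\sqrt{2 I^{\tilde z}(L_i^+;U_i)}.
\end{align}

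Finally I take the expectation over $\tilde z$ and use Jensen's inequality with the concavity of the square root:
\begin{align}
\E_{\tilde z}\sqrt{2I^{\tilde z}}\le\sqrt{2\,\E_{\tilde z} I^{\tilde z}}=\sqrt{2I(L_i^+;U_i\mid\widetilde Z)}.
\end{align}
Summing over $i$ with the factor $2/n$ then yields \eqref{eq:batch-one-coordinate-final}.

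The main point needing care is the centering step: the product-measure expectation must vanish, and for that the marginal of $U_i$ must remain uniform after conditioning on $\widetilde Z$. This holds because $U\perp\widetilde Z$. Note that we do not condition on $U_{-i}$, which is why the bound involves $I(L_i^+;U_i\mid\widetilde Z)$ rather than a quantity that also conditions on the other selectors. The remaining steps are routine sub-Gaussian estimates.
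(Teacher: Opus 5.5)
Your proof is correct and follows the same route the paper uses for its SCMI analogue (the batch statement itself is cited from the literature rather than reproved): in the proof of Theorem~\ref{thm:general-rademacher} the paper applies the Donsker--Varadhan formula against the product reference law with an independent uniform selector, bounds the log-moment by $\cosh(\lambda\ell)\le e^{\lambda^2/2}$, and optimizes over $\lambda$. The only difference is cosmetic: you optimize $\lambda$ separately for each $\tilde z$ and then apply Jensen, as the paper also does in Proposition~\ref{prop:two-coordinate-nonexchangeable-scmi}, whereas the proof of Theorem~\ref{thm:general-rademacher} uses a single global $\lambda$ and sign in the averaged conditional formula; both yield the same final bound, and your route passes through the slightly sharper quantity $\E_{\tilde z}\sqrt{2I^{\tilde z}(L_i^+;U_i)}$.
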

The CMI term in Theorem~\ref{thm:batch-single-loss-ecmi} is often called \emph{loss CMI}.
In supervised learning, it can be related to function-level and output-level CMI terms.
Let $\calx$ and $\caly$ be the input and label spaces, let $\calz\coloneqq \calx\times\caly$, and write
$Z_{i,u}=(X_{i,u},Y_{i,u})$.
For a predictor $f_W:\calx\to\caly$, define $F_{\widetilde Z}\coloneqq (f_W(X_{i,u}))_{i\le n,u\in\{0,1\}}$ as the predictions on the $2n$ supersample inputs.
For this comparison, assume that the evaluated loss factors through these predictions, so conditional on $\widetilde Z$ each $L_i^+$ is a measurable function of $F_{\widetilde Z}$.
Jensen's inequality and the data-processing inequality~\citep{cover2012element} give
\begin{align}
\frac1n\sum_{i=1}^n\sqrt{I(L_i^+;U_i\mid \widetilde Z)}
\leq
\sqrt{\frac1n I(F_{\widetilde Z};U\mid\widetilde Z)}
\leq
\sqrt{\frac1n I(W;U\mid \widetilde Z)}.
\label{eq:batch-cmi-comparison}
\end{align}
Here $I(F_{\widetilde Z};U\mid\widetilde Z)$ is the function CMI~\cite{harutyunyan2021}, while $I(W;U\mid\widetilde Z)$ is the output CMI, or parameter CMI when $W$ is a parameter.
Thus the generalization gap is controlled by the information about the selector encoded in the learned object.

For deterministic function classes, the function CMI is further controlled by a pattern count.
In particular, if the VC dimension of the class of $f_W$ is $d<\infty$, then
\citet{harutyunyan2021} show that
\begin{align}
I(F_{\widetilde Z};U\mid\widetilde Z)
\leq
\max\{(d + 1)\log2,d \log (2en/d) \}.
\label{eq:batch-ecmi-vc}
\end{align}
Thus the supersample argument connects algorithm-dependent information bounds to classical
algorithm-independent complexity measures.
Batch CMI therefore has two complementary interpretations: it is an algorithm-dependent
correlation bound, and in VC classes it is also controlled by a deterministic pattern count.

\section{General Sequential Conditional Mutual Information Bounds}
\label{sec:general-framework}
The batch supersample setting in Section~\ref{sec:preliminaries} and Eq.~\eqref{eq:batch-one-coordinate-id} expresses the
generalization gap as a selector--loss correlation after conditioning on a paired
supersample. We now develop its sequential analogue.
The main difficulty is that, in sequential learning, the learner only observes the
selected path, whereas the proof must temporarily compare the selected coordinate
with its unselected ghost counterpart.
Our construction therefore separates the learner filtration from a local proof-side
enlargement and then controls the resulting roundwise selector--loss correlations
by sequential CMI (SCMI).

\subsection{Proposed SCMI setting}
\label{subsec:seq-setting}
We describe the sequential supersample setting used throughout the paper.
Informally, at each round \(t\), the environment generates a paired latent row
\(\widetilde Z_t=(Z_{t,0},Z_{t,1})\), a fair selector \(U_t\) chooses one coordinate,
and the learner updates only from the selected coordinate.
The unselected coordinate is never observed by the learner, but is used in the proof
as a ghost coordinate for the train--holdout comparison. Thus, the construction involves two information objects.
The first is the learner history $H_t$, with sigma-field $\mathcal{H}_t \coloneqq  \sigma(H_t)$, which contains only the information retained along the selected path.
The second is a local proof-side enlargement $G_{t-1}\coloneqq (H_{t-1},\widetilde Z_t)$, where $\mathcal{G}_{t-1}\coloneqq \mathcal H_{t-1}\vee\sigma(\widetilde Z_t)$, which exposes the current latent row only for the purpose of comparing selected and unselected coordinates. The learner does not observe $G_{t-1}$.

We now give the formal construction. All random elements below take values in standard Borel spaces, so regular conditional
laws are available.
The construction proceeds round by round.
An application specifies an initial history $H_0$ and sets $\mathcal{H}_0\coloneqq \sigma(H_0)$.
Assume that $H_{t-1}$ and $\mathcal H_{t-1}$ have already been defined.
At round $t$, the latent supersample row is sampled from a history-dependent kernel: $\Law(\widetilde Z_t\mid \mathcal H_{t-1})=P_t(\cdot\mid H_{t-1})$.
This kernel may depend on the selected past encoded in $H_{t-1}$; it is not an
independence assumption across time.
It covers, for example, history-dependent online data, active queries, and bandit feedback.

For the proof, we then form the local enlargement $G_{t-1}\coloneqq (H_{t-1},\widetilde Z_t)$, with sigma-field $\mathcal G_{t-1}\coloneqq \mathcal{H}_{t-1}\vee\sigma(\widetilde Z_t)$.
This is a roundwise proof object rather than a learner filtration.
In particular, \(H_{t-1}\) records only selected-path information, whereas
\(G_{t-1}\) additionally reveals the full current pair
\((Z_{t,0},Z_{t,1})\) for analysis. Next, draw $U_t\in\{0,1\}$ so that, for all $t \in [n]$,
\begin{align}
    \Pr(U_t=0\mid \mathcal G_{t-1})
    =\Pr(U_t=1\mid \mathcal G_{t-1})
    =\frac{1}{2} \qquad \text{a.s.}
    \label{eq:selector-sym}
\end{align}
Set $Z_t \coloneqq  Z_{t,U_t}$ and $Z'_t \coloneqq  Z_{t,1-U_t}$, and use the shorthand
\begin{align*}
    \widetilde Z^t \coloneqq  (\widetilde Z_1,\ldots,\widetilde Z_t),
    \quad
    U^t \coloneqq  (U_1,\ldots,U_t),
    \quad
    Z^t \coloneqq  (Z_1,\ldots,Z_t),
    \quad
    W^{t-1}\coloneqq (W_1,\ldots,W_{t-1}).
\end{align*}
Although $Z_t$ is defined as $Z_{t,U_t}$ and therefore depends on $U_t$,
conditioning on the learner history means conditioning on the observed value retained
by the learner, not conditioning on the selector itself.
Unless an application explicitly records the selector, $U_t$ is not part of $H_t$.

The learner state at round $t$ is generated from the past and the selected coordinate only, for all $t$:
\begin{align}
    \Law(W_t\mid H_{t-1},\widetilde Z_t,U_t)
    =
    K_t(\cdot\mid H_{t-1},Z_t)
    \qquad \text{a.s.}
    \label{eq:selected-feedback-kernel}
\end{align}
Here $K_t$ is a probability kernel on the learner-state space $\mathcal W$.
Deterministic updates correspond to the special case where $K_t$ is a Dirac kernel.
After $W_t$ and any other retained round-$t$ variables have been formed, the next learner history $H_t$ is defined recursively by adjoining only those retained selected-path variables to $H_{t-1}$.
Set $\mathcal H_t\coloneqq \sigma(H_t)$, so that $\mathcal H_{t-1}\subseteq \mathcal H_t$.
By construction, $H_t$ never contains the unselected coordinate $Z'_t$ unless an application explicitly changes the observation model. Representative choices of \(H_t\) and \(G_{t-1}\) are summarized in Table~\ref{tab:framework-correspondence}.

The distinction between $H_t$ and $G_{t-1}$ is essential.
The process $(\mathcal H_t)_{t\ge0}$ is the learner filtration.
By contrast, $G_{t-1}$ is a local proof-side enlargement that fixes the current
latent row before the current selector is averaged over.
Consequently, the information terms below do not condition on the current selector.
Terms such as $I(L_t^+;U_t\mid G_{t-1})$ measure information about $U_t$ after fixing $G_{t-1}$; the selector $U_t$ is the variable being learned about, not part of the conditioning field.

We also allow predictable learner-side weights. Let \(Q_t\) satisfy, for every \(t\in[n]\),
\begin{align}
0\le Q_t\le 1,
\qquad
Q_t\text{ is }\mathcal H_{t-1}\text{-measurable}.
\label{eq:predictable-weight}
\end{align}
For a loss \(\ell:\mathcal W\times\mathcal Z\to[0,1]\), define
$L_t^+ \coloneqq Q_t\,\ell(W_t,Z_{t,0})$ and
$L_t^- \coloneqq Q_t\,\ell(W_t,Z_{t,1})$.
Predictable weights are useful for learner-side stopping rules and pre-round importance weights.
The weighted training and holdout risks are
\begin{align}
L_n^{\mathrm{tr}}\coloneqq \frac1n\sum_{t=1}^n\E[\E[Q_t \, \ell(W_t,Z_t)\mid G_{t-1}]],\qquad L_n^{\mathrm{ho}}\coloneqq \frac1n\sum_{t=1}^n\E[\E[Q_t\,\ell(W_t,Z_t')\mid G_{t-1}]].
\label{eq:def-weighted-risks}
\end{align}
We write
$\mathrm{Err}_n^{\mathrm{seq}}
\coloneqq 
L_n^{\mathrm{ho}}-L_n^{\mathrm{tr}}$ and 
$\varepsilon_t\coloneqq 2U_t-1\in\{-1,+1\}$. 
We refer to the construction above as the \emph{SCMI setting}.

\paragraph{Meaning of the holdout loss.}
The quantity $L_n^{\rm ho}$ is the average loss of the evaluated learner state
on the unselected coordinate of the same supersample row.
It is therefore a ghost-coordinate test risk.
If, more specifically, the current row is conditionally i.i.d.,
\begin{align}
    P_t(\cdot\mid H_{t-1})
    =
    P_t^0(\cdot\mid H_{t-1})
    \otimes
    P_t^0(\cdot\mid H_{t-1}),
    \label{eq:conditional-product-row}
\end{align}
then the unselected coordinate is a fresh conditional draw from $P_t^0(\cdot\mid H_{t-1})$ and is not used to construct $W_t$.
In this case, $L_n^{\rm ho}=\frac1n\sum_{t=1}^n\E\!\left[
Q_t\int\ell(W_t,z)\,P_t^0(\mathrm{d}z\mid H_{t-1})
\right]$, so the holdout loss is the conditional population risk of the post-update state $W_t$.
This differs from the prequential online loss used in PAC-Bayes supermartingale analyses~\citep{haddouche2022online}, where the hypothesis at round $t$ is chosen before the round-$t$ observation is seen.
That prequential target can be represented in the present notation by evaluating a pre-update state, for example by replacing $W_t$ with $W_{t-1}$ in the loss.

The main bounds do not require the conditional product structure in Eq.~\eqref{eq:conditional-product-row}.
Instead, we assume row-wise exchangeability. Let $\tau(z_0,z_1)\coloneqq (z_1,z_0)$. 
We assume, for all $t\in[n]$,
\begin{align}
    P_t(\cdot\mid H_{t-1})
    =
    \tau_{\#}P_t(\cdot\mid H_{t-1})
    \qquad \text{a.s.}
    \label{eq:row-exchange-general}
\end{align}
A conditionally i.i.d.~row is the canonical sufficient case.  Even without row-wise exchangeability, one can still obtain a two-coordinate SCMI bound by keeping the signed difference \(L_t^+-L_t^-\) instead of reducing to the one-coordinate loss \(L_t^+\).  Appendix~\ref{app:without-exchangeability-scmi} records this variant.

\begin{table}[t]
\centering
\small
\caption{Application-specific instances of the SCMI setting.}
\label{tab:framework-correspondence}
\resizebox{\textwidth}{!}{
\begin{tabular}{lll}
\toprule
Setting & Learner history & Proof-side enlargement \\
\midrule
Online learning & selected observations and states $(Z_s,W_s)$ & current paired row $(Z_{t,0},Z_{t,1})$ \\
Active learning & selected features, query coins, observed labels, and states $(X_s,V_s,\bar Y_s,W_s)$ & current row $(X_{t,0},Y_{t,0},V_{t,0}),(X_{t,1},Y_{t,1},V_{t,1})$ \\
Bandits & selected policies, actions, and rewards $(\pi_s,A_s,R_s)$ & policy and paired feedback $\pi_t,(A_{t,0},R_{t,0}),(A_{t,1},R_{t,1})$ \\
\bottomrule
\end{tabular}}
\end{table}

Throughout the paper, $I(X;Y\mid G)$ denotes the usual scalar CMI, namely the expectation of the regular CMI given the sigma-field generated by the random element $G$.
In all main bounds, the current selector appears as the second argument of MI, not as a conditioning variable.

\subsection{Information-theoretic generalization bounds under SCMI}
\label{subsec:general-slow}
We prove the general SCMI bounds.
The first step is a sequential analogue of the one-coordinate supersample identity of Eq.~\eqref{eq:batch-one-coordinate-id}. 
Under the update condition~\eqref{eq:selected-feedback-kernel} and row-wise exchangeability~\eqref{eq:row-exchange-general}, the train--holdout gap can be written as a sum of roundwise selector--loss correlations.

\begin{lemma}
\label{lem:basic-seq-sym}
Suppose that \eqref{eq:selected-feedback-kernel} and \eqref{eq:row-exchange-general} hold. 
Then, for every predictable weight process $Q$,
\begin{align}
\mathrm{Err}_n^{\rm seq}
    =
    \frac{2}{n}
    \sum_{t=1}^n
    \E\!\left[
        \E\!\left[
            \varepsilon_t L_t^+
            \mid G_{t-1}
        \right]
    \right].
\label{eq:basic-seq-sym}
\end{align}
\end{lemma}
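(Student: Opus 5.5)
The plan is to work round by round: fix $t$, condition on $G_{t-1}=(H_{t-1},\widetilde Z_t)$, and show that
\[
\E[Q_t\ell(W_t,Z_t')-Q_t\ell(W_t,Z_t)\mid G_{t-1}]=2\,\E[\varepsilon_t L_t^+\mid G_{t-1}]
\]
up to a term whose outer expectation vanishes. Summing over $t$, dividing by $n$, and taking outer expectations then gives \eqref{eq:basic-seq-sym}, because by \eqref{eq:def-weighted-risks} $\mathrm{Err}_n^{\rm seq}$ is exactly the average of these conditional differences.

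Since $Q_t$ is $\mathcal H_{t-1}\subseteq\mathcal G_{t-1}$-measurable, it factors out of every conditional expectation given $G_{t-1}$. Writing the selected and ghost losses on the events $\{U_t=0\}$ and $\{U_t=1\}$ gives the pointwise identity
\[
Q_t\ell(W_t,Z_t')-Q_t\ell(W_t,Z_t)=-\varepsilon_t\,(L_t^+-L_t^-)\cdot(-1)^{0}\cdot(-1),
\]
which simplifies to $\varepsilon_t(L_t^+-L_t^-)$. Indeed, if $U_t=1$ (so $\varepsilon_t=1$), the ghost is $Z_{t,0}$ and the selected point is $Z_{t,1}$, so the difference is $L_t^+-L_t^-$; the case $U_t=0$ gives the opposite sign. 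This is the exact analogue of \eqref{eq:batch-one-coordinate-id}. It therefore remains to show
\[
\E[\E[\varepsilon_t L_t^-\mid G_{t-1}]]=-\E[\E[\varepsilon_t L_t^+\mid G_{t-1}]],
\]
i.e.\ that swapping the two coordinates together with the selector leaves the law unchanged.

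For this symmetry step, I would use \eqref{eq:selector-sym} and \eqref{eq:selected-feedback-kernel} to write
\[
\E[\varepsilon_t L_t^+\mid G_{t-1}]=\tfrac12 Q_t\bigl(\kappa_t(Z_{t,1},Z_{t,0})-\kappa_t(Z_{t,0},Z_{t,0})\bigr),
\qquad
\kappa_t(z,z')\coloneqq\int\ell(w,z')\,K_t(\mathrm dw\mid H_{t-1},z).
\]
This expresses the conditional expectation as an explicit function $\phi(H_{t-1},Z_{t,0},Z_{t,1})$. Similarly,
\[
\E[\varepsilon_t L_t^-\mid G_{t-1}]=\tfrac12 Q_t\bigl(\kappa_t(Z_{t,1},Z_{t,1})-\kappa_t(Z_{t,0},Z_{t,1})\bigr)=-\phi(H_{t-1},Z_{t,1},Z_{t,0}).
\]
Conditioning on $H_{t-1}$ and applying row-wise exchangeability \eqref{eq:row-exchange-general}, the pair $(Z_{t,0},Z_{t,1})$ has the same conditional law as $(Z_{t,1},Z_{t,0})$. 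Hence $\E[\phi(H_{t-1},Z_{t,1},Z_{t,0})]=\E[\phi(H_{t-1},Z_{t,0},Z_{t,1})]$, which is what we need.

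The main obstacle is making this swap rigorous. It requires that $K_t$ depends on the row only through the selected coordinate, which is exactly \eqref{eq:selected-feedback-kernel}, and that the selector be a fair coin independent of the row given $H_{t-1}$, which is exactly \eqref{eq:selector-sym}. Without both, $\phi$ would not be a fixed measurable function of $(H_{t-1},\widetilde Z_t)$ that is invariant in form under $\tau$. I would handle measurability by invoking the standard Borel assumption to obtain regular conditional laws, and then apply the tower property over $\mathcal H_{t-1}$. Combining the steps gives
\[
\E[\E[Q_t\ell(W_t,Z_t')-Q_t\ell(W_t,Z_t)\mid G_{t-1}]]=2\,\E[\E[\varepsilon_tL_t^+\mid G_{t-1}]],
\]
and summing over $t\in[n]$ completes the proof.
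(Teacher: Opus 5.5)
Your proof is correct and is essentially the paper's argument: your $\kappa_t$ (with $Q_t=q_t(H_{t-1})$ folded in) is the paper's $\varphi_t$. The only organizational difference is that you first pass through the pointwise two-coordinate identity $\varepsilon_t(L_t^+-L_t^-)$, which is the paper's exchangeability-free Lemma~\ref{lem:two-coordinate-sym-no-exchange}, and then use a single row swap to get $\E[\varepsilon_tL_t^-]=-\E[\varepsilon_tL_t^+]$; the paper instead computes the train and holdout terms separately and uses the cross and diagonal swaps. The stray factor $\cdot(-1)^{0}\cdot(-1)$ in your pointwise display should be dropped, so that the identity reads directly as $\varepsilon_t(L_t^+-L_t^-)$.
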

Eq.~\eqref{eq:basic-seq-sym} is the sequential counterpart of the batch correlation identity in Eq.~\eqref{eq:batch-one-coordinate-id}.
The static correlation $\E[\varepsilon_i L_i^+]$ is replaced by the roundwise conditional correlation $\E\!\left[\E[\varepsilon_t L_t^+\mid G_{t-1}]\right]$, which keeps the current latent row fixed and conditions on the information available before the current selector is averaged over. 
This is where the proof-side enlargement is used: $G_{t-1}$ exposes $(Z_{t,0},Z_{t,1})$ for the comparison, while the learner state $W_t$ is still generated only from the selected coordinate.

The next theorem controls these conditional correlations by CMI terms.

\begin{theorem}[General SCMI bound]
\label{thm:general-rademacher}
Suppose that \eqref{eq:selected-feedback-kernel} and \eqref{eq:row-exchange-general} hold. Let $\ell:\mathcal W\times\mathcal Z\to[0,1]$. Then, 
\begin{align}
    \bigl|\mathrm{Err}_n^{\rm seq}\bigr|
    \le
    \frac{2}{n}
    \sum_{t=1}^n
    \sqrt{
        2\,I(L_t^+;U_t\mid G_{t-1})
    }
    \le
    2
    \sqrt{
        \frac{2}{n}
        \sum_{t=1}^n
        I(W_t;U_t\mid G_{t-1})
    }.
\label{eq:general-rademacher-bound}
\end{align}
\end{theorem}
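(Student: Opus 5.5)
Starting from Lemma~\ref{lem:basic-seq-sym}, it suffices to bound each roundwise term $\E[\E[\varepsilon_t L_t^+\mid G_{t-1}]]$ by $\sqrt{2 I(L_t^+;U_t\mid G_{t-1})}$ and then sum using the triangle inequality. Fix a realization $g$ of $G_{t-1}$. Conditionally on $G_{t-1}=g$, the selector $U_t$ is uniform on $\{0,1\}$ by \eqref{eq:selector-sym}, and $L_t^+ = Q_t\ell(W_t,Z_{t,0})\in[0,1]$ because $Q_t\in[0,1]$ is $\mathcal H_{t-1}$-measurable. Let $P$ denote the conditional joint law of $(L_t^+,U_t)$ given $g$, and let $P'$ be the product of its marginals. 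Under $P'$ we have $\E_{P'}[\varepsilon_t L_t^+]=\E[\varepsilon_t]\,\E[L_t^+]=0$, since $\varepsilon_t$ has mean zero.

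The core step is a Donsker--Varadhan decoupling applied to $f=\lambda\varepsilon_t L_t^+$:
\[
\lambda\E_P[\varepsilon_t L_t^+]\le \KL(P\|P')+\log\E_{P'}[e^{\lambda\varepsilon_t L_t^+}].
\]
Under $P'$, $\varepsilon_t$ is an independent Rademacher sign and $|L_t^+|\le1$. Conditioning on $L_t^+$ therefore gives $\E_{P'}[e^{\lambda\varepsilon_t L_t^+}]=\E[\cosh(\lambda L_t^+)]\le e^{\lambda^2/2}$, so $\varepsilon_tL_t^+$ is $1$-sub-Gaussian under $P'$. Optimizing over $\lambda\in\mathbb R$ (both signs) yields
\[
\bigl|\E[\varepsilon_t L_t^+\mid G_{t-1}=g]\bigr|\le\sqrt{2\,I(L_t^+;U_t\mid G_{t-1}=g)}.
\]
Taking the outer expectation and applying Jensen's inequality to the concave square root gives
\[
\bigl|\E[\E[\varepsilon_tL_t^+\mid G_{t-1}]]\bigr|\le\sqrt{2I(L_t^+;U_t\mid G_{t-1})},
\]
where the right-hand side uses the averaged scalar CMI. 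Multiplying by $2/n$ and summing establishes the first inequality in \eqref{eq:general-rademacher-bound}.

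For the second inequality, the plan is to show that, conditionally on $G_{t-1}$, $L_t^+$ is a (possibly randomized) function of $W_t$ alone. Given $G_{t-1}$, both $Q_t$ (which is $\mathcal H_{t-1}$-measurable) and $Z_{t,0}$ are fixed, so $L_t^+=Q_t\ell(W_t,Z_{t,0})$ is a deterministic function of $W_t$. The Markov chain $U_t - W_t - L_t^+$ therefore holds given $G_{t-1}$, and the data-processing inequality gives $I(L_t^+;U_t\mid G_{t-1})\le I(W_t;U_t\mid G_{t-1})$. Cauchy--Schwarz (equivalently, Jensen) then yields
\[
\frac1n\sum_t\sqrt{2I_t}\le\sqrt{\tfrac{2}{n}\sum_t I_t},
\]
which produces the stated factor $2\sqrt{\tfrac2n\sum_t I(W_t;U_t\mid G_{t-1})}$.

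I expect the main obstacle to be the measure-theoretic care needed in the conditional Donsker--Varadhan step. One has to work with regular conditional laws given the enlarged $G_{t-1}$ (these exist because the spaces are standard Borel), and one must confirm that the uniformity of $U_t$ holds given $G_{t-1}$ and not merely given $\mathcal H_{t-1}$. Both facts are supplied directly by the setup, namely by \eqref{eq:selector-sym}. The remaining ingredients are the measurability of $Q_t$ with respect to $G_{t-1}$, which holds because $\mathcal H_{t-1}\subseteq\mathcal G_{t-1}$, and the identity from Lemma~\ref{lem:basic-seq-sym}, which already absorbs the uses of \eqref{eq:selected-feedback-kernel} and \eqref{eq:row-exchange-general}.
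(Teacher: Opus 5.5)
Your proposal is correct and follows essentially the same route as the paper. Both arguments combine Lemma~\ref{lem:basic-seq-sym}, a conditional Donsker--Varadhan step with the Rademacher bound $\E[\cosh(\lambda L_t^+)\mid G_{t-1}]\le e^{\lambda^2/2}$, and then conditional data processing plus Jensen for the state-SCMI line.

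The only difference is where $\lambda$ is optimized. You optimize pointwise in $g$ and then apply Jensen to $\sqrt{\cdot}$, which is the device the paper uses for its two-coordinate variant. The paper's proof instead applies the averaged conditional Donsker--Varadhan inequality with a single $\lambda$ and the global sign $s_t$ of $\E[\varepsilon_tL_t^+]$. Both yield the same final bound; yours passes through a slightly sharper disintegrated intermediate, the expected square root of the pointwise CMI, before Jensen.
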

The first inequality is an evaluated or loss-SCMI bound: it only charges information about the selector that is present in the evaluated loss $L_t^+$.
For the second inequality, data processing first gives $I(L_t^+;U_t\mid G_{t-1})\le I(W_t;U_t\mid G_{t-1})$, because, after conditioning on $G_{t-1}$, the variables $Q_t,Z_{t,0},Z_{t,1}$ are fixed and $L_t^+=Q_t\ell(W_t,Z_{t,0})$ is a measurable function of $W_t$. Jensen's inequality then aggregates the resulting state-SCMI terms.
Appendix~\ref{app:without-exchangeability-scmi} records a two-coordinate SCMI bound without row-wise exchangeability, replacing the one-coordinate loss \(L_t^+\) by the difference \(L_t^+-L_t^-\).

The cumulative quantity $\sum_{t=1}^n I(W_t;U_t\mid G_{t-1})$ is the overfitting budget of the sequential learner. 
It has a directed-information interpretation, discussed in Appendix~\ref{app:directed-information}. 
The round-$t$ increment is large only when the selected coordinate at round $t$ reveals information about the current selector through the updated learner state, after the selected past and the current latent pair have already been fixed. 
Thus SCMI charges information at the time it can affect the train--holdout comparison, rather than through a single terminal CMI term.

We next record a Bernstein-type refinement.
The slow-rate bound above uses only boundedness and therefore gives a square-root
dependence on the information term.
When the signed excess process satisfies a Bernstein condition, the information term
enters linearly.
For the ordinary supervised excess-risk form, fix a comparator \(w^\star\in\mathcal W\) and define $e_{t,0}(w^\star)\coloneqq Q_t\bigl(\ell(W_t,Z_{t,0})-\ell(w^\star,Z_{t,0})\bigr)$ and $e_{t,1}(w^\star)\coloneqq Q_t\bigl(\ell(W_t,Z_{t,1})-\ell(w^\star,Z_{t,1})\bigr)$.
With this convention, $e_{t,1-U_t}$ is the holdout-coordinate excess and $e_{t,U_t}$ is the selected-coordinate excess.
Assume \(|e_{t,u}(w^\star)|\le b\) almost surely for \(u\in\{0,1\}\), with \(b\ge0\). Define the holdout and selected excess risks by
\begin{align*}
    R_n^{\mathrm{ho}}\coloneqq \frac{1}{n}\sum_{t=1}^n\E\!\left[\E\!\left[e_{t,1-U_t}(w^\star)\mid G_{t-1}\right]\right],
    \qquad
    R_n^{\mathrm{tr}}\coloneqq \frac{1}{n}\sum_{t=1}^n\E\!\left[\E\!\left[e_{t,U_t}(w^\star)\mid G_{t-1}\right]\right].
\end{align*}
The same row-swap argument as in Lemma~\ref{lem:basic-seq-sym} gives $R_n^{\mathrm{ho}}-R_n^{\mathrm{tr}}=\frac{2}{n}\sum_{t=1}^n\E\!\left[\E[\varepsilon_t e_{t,0}(w^\star)\mid G_{t-1}]\right]$, which is the key for deriving the following theorem.

\begin{theorem}[Bernstein-type SCMI bound]
\label{thm:general-bernstein}
Suppose that Eqs.~\eqref{eq:selected-feedback-kernel} and \eqref{eq:row-exchange-general} hold. Assume the nonnegative-excess condition $R_n^{\mathrm{ho}}\ge0$ and that the Bernstein condition
\begin{align}
\frac{1}{n}\sum_{t=1}^n
\E\!\left[
\E\!\left[(e_{t,0}(w^\star))^2\mid G_{t-1}\right]
\right]
\le
B R_n^{\mathrm{ho}}
\label{eq:general-bernstein-condition}
\end{align}
holds for some $B>0$.
If $b=0$, the claim is trivial. Otherwise, for every $0<\lambda<3/b$ such that $1-B\lambda/(1-\lambda b/3)>0$, we have
\begin{align}
\left(1-\frac{B\lambda}{1-\lambda b/3}\right)R_n^{\mathrm{ho}}
\le
R_n^{\mathrm{tr}}
+
\frac{2}{\lambda n}\sum_{t=1}^n I(e_{t,0}(w^\star);U_t\mid G_{t-1}).
\label{eq:bernstein-fast-main}
\end{align}
\end{theorem}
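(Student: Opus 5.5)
Starting from the row-swap identity recorded before the theorem,
\begin{align}
R_n^{\mathrm{ho}}-R_n^{\mathrm{tr}}=\frac{2}{n}\sum_{t=1}^n\E\!\left[\E[\varepsilon_t e_{t,0}(w^\star)\mid G_{t-1}]\right],
\end{align}
the plan is to bound each roundwise correlation $\E[\varepsilon_t X_t\mid G_{t-1}]$, with $X_t\coloneqq e_{t,0}(w^\star)$, by a Donsker--Varadhan (change of measure) argument. The information term then enters linearly, and Bernstein's moment-generating-function inequality supplies the variance penalty. Conditionally on $G_{t-1}=g$, let $P$ be the joint law of $(X_t,U_t)$ and $P'$ the product of its marginals, so that under $P'$ the variable $\varepsilon_t$ is an independent Rademacher sign by \eqref{eq:selector-sym}. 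Donsker--Varadhan gives, for $\lambda>0$,
\begin{align}
\lambda\,\E_P[\varepsilon_t X_t]\le \KL(P\|P')+\log \E_{P'}\bigl[e^{\lambda\varepsilon_t X_t}\bigr],
\end{align}
and the expected KL is exactly $I(X_t;U_t\mid G_{t-1})$.

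For the cumulant term I will use $|\varepsilon_tX_t|\le b$ together with $\E_{P'}[\varepsilon_tX_t]=0$. Bernstein's inequality then gives
\begin{align}
\log \E_{P'}\bigl[e^{\lambda\varepsilon_tX_t}\bigr]\le \frac{\lambda^2\E[X_t^2\mid G_{t-1}]}{2(1-\lambda b/3)},
\end{align}
valid for $0<\lambda<3/b$. Dividing by $\lambda$, averaging over $G_{t-1}$, and summing over $t$ with the factor $2/n$ yields
\begin{align}
R_n^{\mathrm{ho}}-R_n^{\mathrm{tr}}\le \frac{2}{\lambda n}\sum_t I(X_t;U_t\mid G_{t-1})+\frac{\lambda}{1-\lambda b/3}\cdot\frac1n\sum_t\E\!\left[\E[X_t^2\mid G_{t-1}]\right].
\end{align}
The factor $2$ from the identity cancels the $1/2$ from Bernstein in the variance term, which is consistent with the stated constant $B\lambda/(1-\lambda b/3)$. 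Inserting the Bernstein condition \eqref{eq:general-bernstein-condition} bounds the last term by $\frac{B\lambda}{1-\lambda b/3}R_n^{\mathrm{ho}}$. Moving it to the left-hand side gives \eqref{eq:bernstein-fast-main}; the nonnegativity $R_n^{\mathrm{ho}}\ge0$ and the positivity of the coefficient are only needed for the rearranged statement to be meaningful.

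The main obstacle is the measure-theoretic step: applying Donsker--Varadhan conditionally on $G_{t-1}$ and checking that the conditional KL between the joint and product laws averages to the scalar CMI $I(e_{t,0};U_t\mid G_{t-1})$. This should follow from the regular conditional laws available on standard Borel spaces. One also has to check that the conditional second moment under the product measure $P'$ equals $\E[X_t^2\mid G_{t-1}]$; it does, because $X_t^2$ does not involve $\varepsilon_t$ and $P'$ has the same $X_t$-marginal as $P$. A minor point is that the optimal $\lambda$ is not chosen pointwise per round, so a single fixed $\lambda$ is used uniformly, exactly as in the statement. The case $b=0$ is trivial, since then $X_t\equiv0$.
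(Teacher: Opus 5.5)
Your proposal is correct and follows essentially the same route as the paper. Both arguments use the row-swap identity for the excess process, then a conditional Donsker--Varadhan step against the product law with an independent Rademacher sign, then the cumulant bound $\lambda^2\E[X_t^2\mid G_{t-1}]/(2(1-\lambda b/3))$, and finally the Bernstein condition followed by rearrangement. The only difference is cosmetic: the paper derives the cumulant bound explicitly through the Bennett envelope $\psi_b(\lambda)=(e^{\lambda b}-\lambda b-1)/b^2\le\lambda^2/(2(1-\lambda b/3))$ in Lemma~\ref{lem:one-step-bernstein}, whereas you cite Bernstein's moment-generating-function inequality directly.
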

Theorem~\ref{thm:general-bernstein} is the fast-rate component of the framework. It is stated in a supervised excess-risk notation, but the proof only uses the row-swap identity above, boundedness, and the Bernstein second-moment condition. Thus an application may use the theorem once the relevant proof-side second moment is controlled by the holdout excess risk. In partial-feedback problems, this verification can be difficult. The bandit application shows that the Bernstein approach remains useful, but the terminal exponential-weights posterior requires an additional selector-comparison correction rather than a naive substitution into the theorem. Explicit choices of $\lambda$ and the bounded nonnegative special case are given in Appendix~\ref{subsec:bernstein-additional-results}, while the shifted-Rademacher fast-rate refinement is given in Appendix~\ref{app:wang-refinement}.

\subsection{Application: Online learning and sequential complexity}
\label{sec:online-application}
Online learning is the canonical specialization of the SCMI framework.
Set $Q_t\equiv 1$, let $H_t^{\rm on}=(H_{t-1}^{\rm on},Z_t,W_t)$ and $G_{t-1}^{\rm on}=(H_{t-1}^{\rm on},\widetilde Z_t)$, where $Z_t=Z_{t,U_t}$.
The selected-coordinate update condition becomes $\Law(W_t\mid H_{t-1}^{\rm on},\widetilde Z_t,U_t)=K_t^{\rm on}(\cdot\mid H_{t-1}^{\rm on},Z_t)$.
Thus the learner may update adaptively from the selected past and the current selected observation, but it cannot use the unselected coordinate.
Theorem~\ref{thm:general-rademacher} then gives the online train--holdout bound with $L_t^+=\ell(W_t,Z_{t,0})$ and $L_t^-=\ell(W_t,Z_{t,1})$.
A standard example is a randomized Gibbs online learner: the update kernel may be chosen as $K_t^{\rm on}(\mathrm{d}w\mid H_{t-1}^{\rm on},Z_t)
\propto
    \exp\{-\eta\ell(w,Z_t)\}\Pi_{t-1}(\mathrm{d}w\mid H_{t-1}^{\rm on})$, where \(\Pi_{t-1}\) is a posterior measure determined by the selected past.
This example illustrates that the SCMI bound applies to randomized, distribution-driven updates.
The online specialization also explains why SCMI is the sequential analogue of batch CMI.

\paragraph{Selector-prefix trees and sequential Rademacher complexity.}
The online supersample construction can be viewed as a tree-indexed analogue
of the batch supersample construction.  In the batch setting, after conditioning
on the supersample \(\widetilde Z\), the \(t\)-th pair is attached to the fixed
index \(t\).  In the online setting, the current pair is generated after the
counterfactual history determined by the previously selected coordinates.
Thus, before round \(t\), a possible counterfactual history is indexed by a
deterministic selector prefix \(u_{<t}\in\{0,1\}^{t-1}\), and the actual online
process follows the single random path \(U^n\) through this selector-prefix
tree.

This fixed-sample-to-tree replacement is the same viewpoint that underlies
sequential Rademacher complexity and Littlestone dimension~\citep{rakhlin2015online}.
Sequential Rademacher complexity studies worst-case signed correlations over
such trees, whereas SCMI keeps the realized data-dependent online experiment
and measures, through \(I(L_t^+;U_t\mid G_{t-1}^{\rm on})\), how much
information about the current random branch \(U_t\) is carried by the
learner-induced loss.  This is analogous to the relationship between batch CMI
and Rademacher complexity discussed in Section~\ref{sec:preliminaries}.  For
binary-valued losses, the same tree viewpoint yields a Littlestone-type
pattern-counting bound: under a pathwise realizability condition on the
algorithm-induced loss paths, the SCMI sum is controlled by the sequential
pattern growth of the fixed loss class, and hence by its Littlestone
dimension.  The formal tree and Littlestone definitions, as well as the precise
assumption and proof, are given in Appendix~\ref{app:seq-covers-littlestone}.

The predictable-stopping case \(Q_t=\mathbf 1\{\tau\ge t\}\) is given in Appendix~\ref{sec:stopping-application}.

\section{Application: Streaming active learning with importance weighting}
\label{sec:active-learning-application}
Our next example is streaming-based active learning with importance weighting, following \citet{beygelzimer09}. In this setting, given the input data $X_t$, the learner queries the corresponding label $Y_t$ with some probability. Formally, let $\mathcal Z=\mathcal X\times\mathcal Y$ and $P_{XY}=P_XP_{Y\mid X}$. At round $t$, the learner observes $X_t$, then calculates a query probability $p_t=\pi_t(W_{t-1},X_t)\in[p_{\min},1]$ with \(p_{\min}>0\), and requests the corresponding label by drawing $V_t\sim{\rm Unif}[0,1]$ and $Q_t=\mathbf1\{V_t\le p_t\}$. The label is revealed to the learner only if $Q_t=1$. 
Here $Q_t$ denotes the active-learning query indicator; unlike the predictable weights in Section~\ref{subsec:seq-setting}, it is generated after the current feature and query coin have been exposed.
Let $\bot\notin\mathcal Y$ denote a missing-label symbol and write $\bar Y_t=Y_t$ if queried and $\bar Y_t=\bot$ otherwise. To make the label-dependent update explicit, let $S_t$ be the importance-weighted labeled sample passed to the supervised learner. Set $S_0=\emptyset$ and update
\begin{align}
S_t\coloneqq 
\begin{cases}
S_{t-1}\cup\{(X_t,Y_t,1/p_t)\}, & Q_t=1,\\
S_{t-1}, & Q_t=0.
\end{cases}
\label{eq:active-labeled-set-update}
\end{align}
Thus, when $Q_t=0$, no pseudo-label is assigned and no supervised loss involving $Y_t$ is evaluated. The learner state is updated from the weighted labeled sample and the previous state as
\begin{align}
W_t\coloneqq \Psi_t(S_t,W_{t-1}),\qquad 
H_t^{\mathrm{act}}\coloneqq (H_{t-1}^{\mathrm{act}},X_t,V_t,\bar Y_t,W_t).
\label{eq:active-state-update}
\end{align}
Here, $\bot$ is used only for bookkeeping and is never passed to the loss function.

Consequently, $S_t$ is not an i.i.d. sample as assumed in standard CMI bounds, because its elements are selected by an adaptive query rule; this is naturally handled by the sequential setting in Section~\ref{subsec:seq-setting}. For the supersample analysis, let $Z_{t,u}\coloneqq (X_{t,u},Y_{t,u})$ with $u\in\{0,1\}$, where the two pairs $(Z_{t,0},V_{t,0})$ and $(Z_{t,1},V_{t,1})$ are independent copies of $(Z,V)$ with $Z\sim P_{XY}$ and $V\sim{\rm Unif}[0,1]$. The selector chooses the triple used by the learner: $(X_t,Y_t,V_t)=(X_{t,U_t},Y_{t,U_t},V_{t,U_t})$. The proof-side enlargement is defined directly by
\[
G_{t-1}^{\mathrm{act}}
\coloneqq 
(H_{t-1}^{\mathrm{act}},X_{t,0},Y_{t,0},V_{t,0},X_{t,1},Y_{t,1},V_{t,1}),
\qquad
\mathcal G_{t-1}^{\mathrm{act}}
\coloneqq \sigma(G_{t-1}^{\mathrm{act}}),
\]
so we assume that
$\mathbb P(U_t=0\mid\mathcal G_{t-1}^{\mathrm{act}})=\mathbb P(U_t=1\mid\mathcal G_{t-1}^{\mathrm{act}})=1/2$.
For $u\in\{0,1\}$, set $p_{t,u}\coloneqq \pi_t(W_{t-1},X_{t,u})$ and $Q_{t,u}\coloneqq \mathbf 1\{V_{t,u}\le p_{t,u}\}$. The selected-coordinate update condition holds because $S_t$, and hence $W_t=\Psi_t(S_t,W_{t-1})$, is a function of the previous learner history and the selected feedback $(X_t,p_t,Q_t,\bar Y_t)$ only. For the final predictor $W_n$ and loss $\ell:\mathcal W\times\mathcal Z\to[0,1]$, define $R(w)\coloneqq \int\ell(w,z)P_{XY}(dz)$. The IW train and holdout risks are
\begin{align}
L_n^{\mathrm{IW}}&\coloneqq \frac1n\sum_{t=1}^n\E\!\left[\E\!\left[\frac{Q_t}{p_t}\ell(W_n,Z_t)\mid G_{t-1}^{\mathrm{act}}\right]\right],\\
L_n^{\mathrm{IW,ho}}&\coloneqq \frac1n\sum_{t=1}^n\E\!\left[\E\!\left[\frac{Q_{t,1-U_t}}{p_{t,1-U_t}}\ell(W_n,Z_{t,1-U_t})\mid G_{t-1}^{\mathrm{act}}\right]\right].
\label{eq:active-iw-train}
\end{align}
Here, the loss is never evaluated at $\bot$: when $Q_t=0$ the contribution is zero, while when $Q_t=1$ the true label is observed. Thus $Q_t/p_t$ is the usual importance-weighted contribution. The holdout identity is an expectation-level importance-weighting identity: after conditioning on the selected active-learning transcript, the unselected query coin is still averaged out. Consequently, $L_n^{\mathrm{IW,ho}}=\mathbb E[R(W_n)]$; the proof is given in Appendix~\ref{app:proof-active-iw-population}.

Theorem~\ref{thm:general-rademacher} is not applied here by literal substitution: the query indicator depends on the current feature and query coin, and the final predictor $W_n$ depends on the selected future trajectory. The bridge from this final output to roundwise selector correlations is the terminal row-swap identity in Lemma~\ref{lem:active-row-swap-identity}. Define $L_{n,t}^+\coloneqq Q_{t,0}\ell(W_n,Z_{t,0})/p_{t,0}$ and $L_{n,t}^-\coloneqq Q_{t,1}\ell(W_n,Z_{t,1})/p_{t,1}$. Here $Q_{t,u}$ is the coordinate-wise query indicator and $1/p_{t,u}$ is the importance correction.
\begin{theorem}
\label{thm:active-iw-slow}
Assume the selected-coordinate update condition and, for every \(t\), the active-row version of the conditionally i.i.d.~construction in Eq.~\eqref{eq:conditional-product-row} for the two triples \((X_{t,u},Y_{t,u},V_{t,u})\), \(u\in\{0,1\}\). Under the active-learning setting above,
\begin{align}
\left|\mathbb E[R(W_n)]-L_n^{\mathrm{IW}}\right|
\le
\frac{2}{p_{\min}n}\sum_{t=1}^n
\sqrt{2\,I(L_{n,t}^+;U_t\mid G_{t-1}^{\mathrm{act}})}.
\label{eq:active-iw-slow-sum}
\end{align}
\end{theorem}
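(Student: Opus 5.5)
The plan is to split the claim into a population identity and a roundwise correlation bound, mirroring the proof of Theorem~\ref{thm:general-rademacher}. First, I would invoke the holdout identity $L_n^{\mathrm{IW,ho}}=\E[R(W_n)]$ established in Appendix~\ref{app:proof-active-iw-population}, so that the left-hand side of \eqref{eq:active-iw-slow-sum} becomes $|L_n^{\mathrm{IW,ho}}-L_n^{\mathrm{IW}}|$. Second, I would use the terminal row-swap identity of Lemma~\ref{lem:active-row-swap-identity}, which I expect to take the form
\begin{align}
L_n^{\mathrm{IW,ho}}-L_n^{\mathrm{IW}}
=\frac{2}{n}\sum_{t=1}^n\E\!\left[\E\!\left[\varepsilon_t L_{n,t}^+\mid G_{t-1}^{\mathrm{act}}\right]\right].
\end{align}
The sign convention is checked as follows. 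On $\{U_t=1\}$ the training term uses coordinate $1$ and the holdout term uses coordinate $0$, so $L_{n,t}^+$ enters with $\varepsilon_t=+1$. On $\{U_t=0\}$ the roles of the coordinates are reversed. The exchangeability argument replaces the $L_{n,t}^-$ contributions by $L_{n,t}^+$ contributions with the opposite selector value.

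The main obstacle is justifying this identity, because $W_n$ depends on the whole future trajectory after round $t$, not only on $W_t$. The argument I would give works with the joint law of $(H_{t-1}^{\mathrm{act}},\widetilde Z_t,V_{t,0},V_{t,1},U_t)$ together with everything generated afterwards. I would show that the map that swaps the two triples $(X_{t,u},Y_{t,u},V_{t,u})$ and replaces $U_t$ by $1-U_t$ preserves this joint law, for three reasons.
\begin{itemize}
\item The conditionally i.i.d.\ active row is exchangeable given $H_{t-1}^{\mathrm{act}}$.
\item $U_t$ is a fair coin given $G_{t-1}^{\mathrm{act}}$.
\item The selected triple $(X_t,Y_t,V_t)$ is invariant under the map. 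All later quantities ($p_t$, $Q_t$, $\bar Y_t$, $S_t$, $W_t$, and then every future row, selector and update through $W_n$) are generated by kernels that depend only on the selected history. Hence they are unchanged in law.
\end{itemize}
Note that $p_{t,u}=\pi_t(W_{t-1},X_{t,u})$ and $Q_{t,u}$ are $G_{t-1}^{\mathrm{act}}$-measurable and transform covariantly under the swap. Consequently $\E[\mathbf 1\{U_t=0\}L_{n,t}^-]=\E[\mathbf 1\{U_t=1\}L_{n,t}^+]$, and the analogous identity holds for the training terms, which yields the display above.

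Third, I would bound each roundwise correlation. Conditional on $G_{t-1}^{\mathrm{act}}=g$, we have $0\le L_{n,t}^+\le 1/p_{\min}$ because $p_{t,0}\ge p_{\min}$. Hence $f=\varepsilon_t L_{n,t}^+$ takes values in $[-1/p_{\min},1/p_{\min}]$. Under the product of the conditional marginals of $L_{n,t}^+$ and $U_t$, the selector $U_t$ is still uniform, so $\E[f]=0$. The Donsker--Varadhan variational formula, combined with Hoeffding's lemma for a variable of range $2/p_{\min}$, gives
\begin{align}
\bigl|\E[\varepsilon_t L_{n,t}^+\mid G_{t-1}^{\mathrm{act}}=g]\bigr|\le \frac{1}{p_{\min}}\sqrt{2\,I(L_{n,t}^+;U_t\mid G_{t-1}^{\mathrm{act}}=g)}.
\end{align}
Finally, I would take the outer expectation, apply Jensen's inequality to the concave square root to pass from the disintegrated CMI to the scalar $I(L_{n,t}^+;U_t\mid G_{t-1}^{\mathrm{act}})$, and then sum over $t$ and apply the triangle inequality. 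This gives \eqref{eq:active-iw-slow-sum}.
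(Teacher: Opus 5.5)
Your proposal is correct and takes the paper's route. You use the population-risk identity (Lemma~\ref{lem:active-pop-representation}) and the terminal row-swap identity (Lemma~\ref{lem:active-row-swap-identity} with $C=0$), which you re-derive by the same swap-and-flip argument. You then apply the one-step Donsker--Varadhan bound and sum with the triangle inequality. The only cosmetic difference is in handling the $1/p_{\min}$ scale: you use Hoeffding's lemma on range $2/p_{\min}$ and disintegrate the CMI before applying Jensen, whereas the paper normalizes to $M_{n,t}^+=p_{\min}L_{n,t}^+\in[0,1]$ and uses invariance of mutual information under deterministic rescaling.
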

The same importance-weighted supersample construction also supports a Shifted-Rademacher fast-rate refinement. See Appendix~\ref{app:active-fast-rate-refinements} for the details.

\section{Application: Stochastic bandits with smoothing}
\label{sec:bandit-application}
We apply the selector-SCMI framework, together with a bandit-specific variance calculation, to stochastic multi-armed bandits. Let \(\mathcal A=[K]=\{1,\ldots,K\}\), \(K\ge2\), and write \(\mathcal P_K\) for the set of distributions on \(\mathcal A\). Each arm \(a\in\mathcal A\) has an unknown reward distribution \(P_a\) on \([0,1]\), with mean \(r(a)\). At round \(s\), the learner chooses a predictable policy \(\pi_s\in\mathcal P_K\), draws \(A_s\sim\pi_s\), and observes \(R_s\), where \(R_s\mid\{A_s=a\}\sim P_a\). Let \(a^\star\) be the unique optimal arm, define \(\Delta(a)=r(a^\star)-r(a)\), \(\Delta_{\min}=\min_{a\ne a^\star}\Delta(a)>0\), and \(\Delta(\rho)=\sum_a\rho(a)\Delta(a)\) for \(\rho\in\mathcal P_K\). The assumption \(\Delta_{\min}>0\) is the standard positive-gap regime in gap-dependent analyses of stochastic bandits~\citep{bubeck2012regret}. 

The algorithm uses the reward estimate \(\widehat r_t\), computed from
\begin{align}
R_s^a\coloneqq \frac{\mathbf 1\{A_s=a\}R_s}{\pi_s(a)},\qquad
\widehat r_t(a)\coloneqq \frac1t\sum_{s=1}^t R_s^a.
\label{eq:def-bandit-rhat}
\end{align}
For analysis only, write
\(\widehat\Delta_t(\rho)\coloneqq \sum_a\rho(a)\{\widehat r_t(a^\star)-\widehat r_t(a)\}\).
The variable \(R_s^a\) is an importance-weighted version of the observed scalar reward, not a counterfactual reward for an unplayed arm. From \(\widehat r_t\), define the unsmoothed exponential-weights distribution \(\rho_t^{\rm exp}(a)\propto\exp\{\gamma_t\widehat r_t(a)\}\), where \(\gamma_t>0\). After \(t\) observations, the reported or next-play policy is \(\widetilde\rho_t^{\rm exp}(a)=(1-K\epsilon_{t+1})\rho_t^{\rm exp}(a)+\epsilon_{t+1}\), where \(0<\epsilon_t\le 1/K\) is nonincreasing and the played policy satisfies \(\pi_s(a)\ge\epsilon_s\) at each data-collection round \(s\). Thus \(\rho_t^{\rm exp}\) is used for the exponential-weights update, while \(\widetilde\rho_t^{\rm exp}\) is the policy whose regret is evaluated. A natural implementation can set the next behavior policy to this smoothed distribution, but the analysis only requires the lower bound \(\pi_s(a)\ge\epsilon_s\) on the behavior probabilities used in the importance weights. The learner history is \(H_s^{\rm bd}\coloneqq (H_{s-1}^{\rm bd},\pi_s,A_s,R_s)\), with \(\mathcal H_s^{\rm bd}\coloneqq \sigma(H_s^{\rm bd})\), and the pre-round field is \(\mathcal F_{s-1}^{\rm bd}\coloneqq \mathcal H_{s-1}^{\rm bd}\vee\sigma(\pi_s)\). The smoothed distribution \(\widetilde\rho_s^{\rm exp}\) is an output computed from this transcript and is not added as a separate history component.

We now introduce the proof-side paired feedback construction. After \(\pi_s\) has been chosen, draw two conditionally independent observed-feedback coordinates $(A_{s,0},R_{s,0})$ and $(A_{s,1},R_{s,1})$ by sampling \(A_{s,u}\sim\pi_s\) and then \(R_{s,u}\sim P_{A_{s,u}}\), independently for \(u\in\{0,1\}\) conditional on \(\mathcal F_{s-1}^{\rm bd}\). Before revealing the current selector, enlarge the proof-side context recursively by retaining the past proof-side feedback coordinates: $G_0^{\rm bd}\coloneqq 
(H_0^{\rm bd},\pi_1,A_{1,0},R_{1,0},A_{1,1},R_{1,1})$ and
\begin{align}
G_{s-1}^{\rm bd}
&\coloneqq 
(G_{s-2}^{\rm bd},U_{s-1},\pi_s,A_{s,0},R_{s,0},A_{s,1},R_{s,1}),
\qquad s\ge2.
\label{eq:bandit-proof-field-recursive}
\end{align}
Let \(\mathcal G_{s-1}^{\rm bd}\coloneqq \sigma(G_{s-1}^{\rm bd})\). Then draw the current selector so that \(\mathbb P(U_s=0\mid\mathcal G_{s-1}^{\rm bd})=\mathbb P(U_s=1\mid\mathcal G_{s-1}^{\rm bd})=1/2\), set \((A_s,R_s)\coloneqq (A_{s,U_s},R_{s,U_s})\), and reveal only \((A_s,R_s)\) to the learner. This selected path has the same law as the ordinary bandit process described above. Thus the random element \(G_{s-1}^{\rm bd}\) exposes the current proof-side feedback pair but not \(U_s\).

For \(u\in\{0,1\}\) and \(a\in[K]\), define
\begin{align}
R_{s,u}^a\coloneqq \frac{\mathbf1\{A_{s,u}=a\}R_{s,u}}{\pi_s(a)},\qquad
G_{s,u}(a)\coloneqq R_{s,u}^{a^\star}-R_{s,u}^a,
\qquad
G_{s,u}(\rho)\coloneqq \sum_{a=1}^K\rho(a)G_{s,u}(a).
\label{eq:def-bandit-G}
\end{align}
Then \(\widehat\Delta_t(\rho)=t^{-1}\sum_{s=1}^tG_{s,U_s}(\rho)\).

As in existing PAC-Bayes bandit analyses~\citep{seldin2012pacbayes,haddouche2023pacbayes}, our final target is the one-step static regret
\(\mathbb E[\Delta(\widetilde\rho_t^{\rm exp})]\) of the learned smoothed policy.  We decompose this target as
\begin{align}
\mathbb E[\Delta(\widetilde\rho_t^{\rm exp})]
&=
\underbrace{\mathbb E[\Delta(\widetilde\rho_t^{\rm exp})-\Delta(\rho_t^{\rm exp})]}_{\text{smoothing}}
+\!
\underbrace{\left(\mathbb E[\Delta(\rho_t^{\rm exp})]-2\mathbb E[\widehat\Delta_t(\rho_t^{\rm exp})]\right)}_{\text{selector-SCMI transfer}}
+
\underbrace{2\mathbb E[\widehat\Delta_t(\rho_t^{\rm exp})]}_{\text{empirical optimization}},
\label{eq:bandit-main-decomposition}
\end{align}
where the factor $2$ in the second and third term anticipates the Bernstein-type generalization bound below, which controls $\mathbb E[\Delta(\rho_t^{\exp})]$ by twice the empirical gap plus a selector-SCMI penalty.
The first term is the deterministic smoothing cost, and the third term is controlled by the exponential-weights variational inequality. Thus it remains to bound the middle term by selector-SCMI.
To do so, draw a proof-only virtual arm \(\bar A_t\sim\rho_t^{\rm exp}\).  Then, after averaging over the virtual arm and the bandit randomness, \(\mathbb E[\widehat\Delta_t(\rho_t^{\rm exp})]=t^{-1}\sum_{s=1}^t\mathbb E[G_{s,U_s}(\bar A_t)]\) and \(\mathbb E[\Delta(\rho_t^{\rm exp})]=t^{-1}\sum_{s=1}^t\mathbb E[G_{s,1-U_s}(\bar A_t)]\).  These identities are proved in Lemma~\ref{lem:bandit-virtual-arm-identities}.  Hence the middle term in \eqref{eq:bandit-main-decomposition} is controlled as a selected--ghost comparison for the signed gap process \(G_{s,u}(\bar A_t)\).

Under this setting, the following theorem controls the static regret of the learned smoothed policy.
\begin{theorem}[Static regret bound for smoothed exponential weights]
\label{thm:bandit-main}
Assume that rewards lie in \([0,1]\), that the optimal arm is unique, and that \(\pi_s(a)\ge\epsilon_s\) for all \(s\ge1\) and \(a\in[K]\).  Then, for every \(t\ge1\),
\begin{align}
\mathbb E[\Delta(\widetilde\rho_t^{\rm exp})]
\le
K\epsilon_{t+1}
+
\frac{2\log K}{\gamma_t}
+
\frac{52}{t\epsilon_t\Delta_{\min}}
\sum_{s=1}^t I\!\left(G_{s,0}(\bar A_t);U_s\mid G_{s-1}^{\rm bd}\right).
\label{eq:bandit-one-step-cmi}
\end{align}
Moreover, the SCMI term can be bounded to give
\begin{align}
\mathbb E[\Delta(\widetilde\rho_t^{\rm exp})]
\le
K\epsilon_{t+1}
+
\frac{2\log K}{\gamma_t}
+
\frac{52\log K}{t\epsilon_t\Delta_{\min}}.
\label{eq:bandit-one-step}
\end{align}
Consequently, with
$\epsilon_t=
\min\left\{
\frac1K,
\sqrt{\frac{52\log K}{Kt\Delta_{\min}}}
\right\}$ and
$\gamma_t=\frac{2\log K}{K\epsilon_t}$,
there is a universal constant \(C>0\) such that
\begin{align}
\sum_{t=1}^T\mathbb E[\Delta(\widetilde\rho_t^{\rm exp})]
\le
C\sqrt{\frac{KT\log K}{\Delta_{\min}}}.
\label{eq:bandit-cumulative}
\end{align}
\end{theorem}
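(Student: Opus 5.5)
The proof follows the three-term decomposition \eqref{eq:bandit-main-decomposition} and bounds each term separately.

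\textbf{Smoothing term.} Since $\widetilde\rho_t^{\rm exp}-\rho_t^{\rm exp}=\epsilon_{t+1}(\mathbf 1-K\rho_t^{\rm exp})$ and $\Delta(a)\in[0,1]$, we get $\Delta(\widetilde\rho_t^{\rm exp})-\Delta(\rho_t^{\rm exp})\le \epsilon_{t+1}\sum_a\Delta(a)\le K\epsilon_{t+1}$.

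\textbf{Empirical optimization term.} Here $\rho_t^{\rm exp}$ maximizes $\gamma_t\langle\rho,\widehat r_t\rangle-\KL(\rho\|\mathrm{Unif})$. Comparing with $\delta_{a^\star}$ gives
\[
\langle\delta_{a^\star}-\rho_t^{\rm exp},\widehat r_t\rangle\le \frac{\KL(\delta_{a^\star}\|\mathrm{Unif})}{\gamma_t}=\frac{\log K}{\gamma_t},
\]
so $\widehat\Delta_t(\rho_t^{\rm exp})\le \log K/\gamma_t$. Hence $2\E[\widehat\Delta_t(\rho_t^{\rm exp})]\le 2\log K/\gamma_t$.

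\textbf{Transfer term.} This is the main obstacle. We first use the virtual-arm identities of Lemma~\ref{lem:bandit-virtual-arm-identities}. They write $\E[\Delta(\rho_t^{\rm exp})]$ and $\E[\widehat\Delta_t(\rho_t^{\rm exp})]$ as ghost and selected averages of $G_{s,u}(\bar A_t)$.

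Next we use the row-swap argument of Lemma~\ref{lem:basic-seq-sym}. The two feedback coordinates are conditionally i.i.d.\ given $\mathcal F_{s-1}^{\rm bd}$, so row exchangeability holds, and the gap becomes $\frac2t\sum_s\E[\varepsilon_s G_{s,0}(\bar A_t)]$.

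We then apply the Donsker--Varadhan step behind Theorem~\ref{thm:general-bernstein} conditionally on $G_{s-1}^{\rm bd}$, with $U_s$ as the selector. This gives
\[
\E[\varepsilon_s G_{s,0}]\le \frac{I(G_{s,0}(\bar A_t);U_s\mid G_{s-1}^{\rm bd})}{\lambda}+\psi_\lambda\cdot\E[G_{s,0}^2].
\]
Two ingredients control this expression.
\begin{itemize}
\item \emph{Boundedness.} $|G_{s,0}(a)|\le 1/\epsilon_s\le 1/\epsilon_t$, using that $\epsilon$ is nonincreasing. This fixes $b=1/\epsilon_t$ and restricts $\lambda$ to order $\epsilon_t$.
\item \emph{Bandit Bernstein condition.} We need $\E[G_{s,0}(a)^2\mid\mathcal F_{s-1}^{\rm bd}]\le \frac{1}{\pi_s(a^\star)}+\frac{1}{\pi_s(a)}\le 2/\epsilon_t$ for $a\neq a^\star$, while $G_{s,0}(a^\star)=0$. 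Combined with $\Delta(a)\ge\Delta_{\min}$ for $a\ne a^\star$, this gives $\E[G_{s,0}(\bar A_t)^2]\le \frac{2}{\epsilon_t\Delta_{\min}}\E[\Delta(\bar A_t)]$. So the second moment is controlled by $B R^{\rm ho}$ with $B=2/(\epsilon_t\Delta_{\min})$.
\end{itemize}
The subtlety is that $\bar A_t$ depends on the selected future, not only on $G_{s-1}^{\rm bd}$. The second-moment bound must therefore be applied after averaging over $\bar A_t$, with $\Delta(\bar A_t)$ kept inside the expectation. This is the ``selector-comparison correction'' mentioned after Theorem~\ref{thm:general-bernstein}. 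I would handle it by bounding the conditional variance uniformly in $a$ before averaging over the arm.

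Choosing $\lambda\asymp\epsilon_t\Delta_{\min}$ so that the coefficient $1-B\lambda/(1-\lambda b/3)$ equals $1/2$ yields
\[
\E[\Delta(\rho_t^{\rm exp})]\le 2\E[\widehat\Delta_t(\rho_t^{\rm exp})]+\frac{C'}{t\epsilon_t\Delta_{\min}}\sum_s I(\cdot).
\]
Tracking the constants, including the factor $2$ from doubling, should produce $52$. Together with the first two terms, this gives \eqref{eq:bandit-one-step-cmi}.

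\textbf{Bounding the SCMI sum.} The quantities $G_{s,0}(\bar A_t)$ depend on $U_s$ only through $\bar A_t$, given $G_{s-1}^{\rm bd}$. By data processing, each term is at most $I(\bar A_t;U_s\mid G_{s-1}^{\rm bd})$. The recursive field $G_{s-1}^{\rm bd}$ contains $U^{s-1}$ and all past pairs, so the chain rule gives
\[
\sum_s I(\bar A_t;U_s\mid G_{s-1}^{\rm bd})\le I(\bar A_t;U^t\mid \text{all pairs})\le H(\bar A_t)\le\log K,
\]
up to checking that later pairs are independent of earlier selectors given the past. This yields \eqref{eq:bandit-one-step}.

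\textbf{Cumulative bound.} Substitute the stated $\epsilon_t$ and $\gamma_t$. With this choice, $K\epsilon_{t+1}$, $2\log K/\gamma_t=K\epsilon_t$, and $52\log K/(t\epsilon_t\Delta_{\min})$ are all of order $\sqrt{K\log K/(t\Delta_{\min})}$. In the regime $\epsilon_t=1/K$ the bound is $O(K\log K/(t\Delta_{\min}))$, which is dominated by the same order. Summing $t^{-1/2}$ over $t\le T$ gives \eqref{eq:bandit-cumulative}.
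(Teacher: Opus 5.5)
The smoothing and exponential-weights steps, the virtual-arm identities, the row-swap reduction to $\frac2t\sum_s\E[\varepsilon_sG_{s,0}(\bar A_t)]$, and the data-processing step to $I(\bar A_t;U_s\mid G^{\rm bd}_{s-1})$ all match the paper. The gap is the second-moment input to the Donsker--Varadhan step.

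The fixed-arm bound $\E[G_{s,0}(a)^2\mid\mathcal F^{\rm bd}_{s-1}]\le 2/\epsilon_t$ averages over the coordinate $(A_{s,0},R_{s,0})$ with $a$ held fixed. On $\{U_s=0\}$, however, that coordinate is the \emph{selected} one, so $\rho_t^{\rm exp}$, and hence $\bar A_t$, depends on it. You therefore cannot substitute $a=\bar A_t$. The dependence also has the harmful sign. A rewarded pull of an arm with small $\pi_s(a)$ raises $\widehat r_t(a)$ by $R_{s,0}/(t\pi_s(a))$, and hence raises $\rho_t^{\rm exp}(a)$, exactly on the event where $G_{s,0}(a)^2=R_{s,0}^2/\pi_s(a)^2$ is large. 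For example, take two arms, $\pi_s(a)\equiv\epsilon$, very large $\gamma_t$, and $t\epsilon\ll1$. A single such pull makes $a$ the empirical leader, and your claimed inequality $\E[G_{s,0}(\bar A_t)^2]\le\frac{2}{\epsilon_t\Delta_{\min}}\E[\Delta(\bar A_t)]$ fails by a factor of order $1/(t\epsilon)$.

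Your proposed fix, bounding the conditional variance uniformly in $a$ and then averaging over the arm, does not work. It interchanges an expectation over the feedback coordinate with an average over an arm drawn after that coordinate has been seen. Without a new idea, what survives is only the pathwise bound $G_{s,0}(a)^2\le\epsilon_t^{-2}\mathbf 1\{a\ne a^\star\}$. That gives $B\asymp1/(\epsilon_t^2\Delta_{\min})$ and forces $\lambda\asymp\epsilon_t^2\Delta_{\min}$. The information term becomes of order $\log K/(t\epsilon_t^2\Delta_{\min})$, and the cumulative rate degrades to $T^{2/3}$ type.

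The missing ingredient is the selector-comparison lemma (Lemma~\ref{lem:selector-comparison-square-revised}). For $X=G_{s,0}(\bar A_t)$ with $|X|\le b=\epsilon_t^{-1}$ and $U_s$ uniform given $G^{\rm bd}_{s-1}$, it states $\E[X^2]\le\frac32\E[X^2\mid U_s=1]+20b^2I(X;U_s\mid G^{\rm bd}_{s-1})$. It holds pointwise in the conditioning value, in one of two cases:
\begin{itemize}
\item the branch-$0$ mean of $X^2/b^2$ is at most twice the branch-$1$ mean; or
\item a $\mathrm{Bernoulli}(X^2/b^2)$ coin shifts the posterior of $U_s$ toward branch $0$ to at least $2/3$, so the selector information is at least $\mathrm{KL}(\mathrm{Bernoulli}(2/3)\Vert\mathrm{Bernoulli}(1/2))>1/20$ times that mean.
\end{itemize}
On $\{U_s=1\}$, coordinate $0$ is a genuine ghost, independent of $\bar A_t$ given the selected transcript. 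Your fixed-$\rho$ calculation then applies legitimately and gives $\frac{2}{\epsilon_t\Delta_{\min}}\E[\Delta(\rho_t^{\rm exp})\mid U_s=1]\le\frac{4}{\epsilon_t\Delta_{\min}}\E[\Delta(\rho_t^{\rm exp})]$. The selected-branch inflation is charged to the same SCMI increment. In the sub-Gaussian bound $\E[\varepsilon_sX]\le I(X;U_s\mid G^{\rm bd}_{s-1})/\lambda+\frac\lambda2\E[X^2]$ with $\lambda=\epsilon_t\Delta_{\min}/12$, this extra cost is $\frac{5\Delta_{\min}}{3\epsilon_t}I$. That is of the right order because $\Delta_{\min}\le1$, and the constants combine to $154/3\le52$.

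Two smaller repairs are needed.
\begin{itemize}
\item Entropy reduction: $V_{s+1}=(\pi_{s+1},A_{s+1,0},R_{s+1,0},A_{s+1,1},R_{s+1,1})$ does depend on $U_s$ through $\pi_{s+1}$, so your independence premise fails. Skip the detour through $I(\bar A_t;U^t\mid V^t)$. Instead apply the chain rule along the ordered transcript $(H_0^{\rm bd},V_1,U_1,\ldots,V_t,U_t)$ and drop the nonnegative $V_s$-terms, which gives $\sum_sI(\bar A_t;U_s\mid G^{\rm bd}_{s-1})\le H(\bar A_t)\le\log K$.
\item Cumulative bound: in the burn-in range where $\epsilon_t=1/K$, use the trivial bound $\Delta(\widetilde\rho_t^{\rm exp})\le1$. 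There the term $K\log K/(t\Delta_{\min})$ dominates the square-root term, not the reverse, and summing it costs an extra logarithm.
\end{itemize}
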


The three terms on the right-hand side of \eqref{eq:bandit-one-step-cmi} correspond, respectively, to smoothing, empirical optimization, and selector-SCMI transfer in Eq.~\eqref{eq:bandit-main-decomposition}.

Existing PAC-Bayes bandit analyses give \(\mathcal O(K^{1/3}T^{2/3})\)-type cumulative bounds~\citep{seldin2012pacbayes,haddouche2023pacbayes}. Following the observation of \citet{seldin2012pacbayes}, we use a sharper variance evaluation. Under the standard positive-gap assumption relative to the optimal arm, the resulting bound improves the PAC-Bayes horizon order to \(\mathcal O(\sqrt{KT})\), up to logarithmic and gap-dependent factors. 
Because the terminal posterior $\rho_t^{\exp}$ is computed from the selected trajectory, the variance calculation needed for this transfer is not a direct instance of Theorem~\ref{thm:general-bernstein} in our proof; the selected branch must be compared with a genuine ghost branch.
We therefore use the selector-comparison correction in Lemma~\ref{lem:selector-comparison-square-revised}. The proof is given in Appendix~\ref{app:proofs-bandit-application}, and Appendix~\ref{app:bandit-proof-filtration-choice} discusses the enlarged proof-side context.

The proof-side context \(G_{s-1}^{\rm bd}\) is enlarged beyond the local fields of Sections~\ref{sec:general-framework} and~\ref{sec:active-learning-application} because \(\rho_t^{\rm exp}\), and hence the virtual arm \(\bar A_t\), is tied to the whole selected path.  
This enlargement is used for entropy reduction rather than for the local row swap itself: it places the selectors in a single ordered proof transcript so that the roundwise selector-information terms can be summed by the chain rule.
Appendix~\ref{app:bandit-proof-filtration-choice} gives the details.

\section{Related Work}
\label{sec:related-work}
Existing IT analysis controls generalization through the dependence between the training sample and the learned output~\citep{russo16,Xu2017}. The supersample CMI framework~\citep{steinke20a} sharpens this idea by conditioning on a paired sample and measuring how much the algorithm reveals about the selector. This view connects algorithm-dependent information bounds with deterministic pattern-counting arguments such as VC growth, as recalled in Section~\ref{sec:preliminaries}. Subsequent work developed sharper CMI variants, evaluated-CMI bounds, and fast-rate refinements~\citep{haghifam2020,hellstrom2022a,wang23}. As discussed in Section~\ref{sec:general-framework}, our contribution is to retain the supersample selector idea while replacing the sample selector by a causal sequence of selectors and histories. This extension also preserves, in the online setting, a counterpart of the relationship between CMI, Rademacher complexity, and VC-type combinatorial dimensions in the i.i.d. setting. In particular, as discussed in Section~\ref{sec:online-application}, our sequential supersample construction is naturally connected to sequential Rademacher complexity and can be controlled by the Littlestone dimension~\citet{rakhlin2015online}, which plays the role of an online analogue of the VC dimension.

PAC-Bayes bounds are another major class of algorithm-dependent generalization bounds, and their applications to sequential decision-making have been studied for a long time~\citep{seldin2012pacbayes,haddouche2022online,haddouche2023pacbayes}. More recently, filtration-based PAC-Bayes analyses have treated posteriors or randomized predictors adapted to a growing history, where supermartingale techniques make it possible to handle martingale, online, and heavy-tailed settings~\citep{haddouche2023pacbayes}. Our framework also uses a learner filtration, but the proof additionally introduces a latent supersample row and conditions on it to compare selected and unselected coordinates. For bandits, PAC-Bayes ideas were developed by \citet{seldin2012pacbayes}; our bandit application uses the same smoothed exponential-weights structure, but couples it with the SCMI term and a gap-dependent variance calculation. 
Moreover, in the i.i.d.~setting, CMI bounds admit PAC-Bayes interpretations with data-dependent or optimally chosen priors.
%Moreover, CMI bounds can be interpreted as corresponding to PAC-Bayes bounds with an optimally chosen prior. 
Since PAC-Bayes analyses based on CMI have already been developed in the i.i.d.~setting~\citep{grunwald2021pac}, our SCMI bound suggests a possible route toward analogous PAC-Bayes bounds with data-dependent priors for online learning.

\section{Conclusion and Limitations}
\label{sec_conclusion}
We developed a sequential supersample framework for IT generalization analysis.
By making the learner filtration explicit, the framework transforms batch supersample CMI into a causal sum of roundwise selector-information increments. This is the main conceptual difference from terminal CMI bounds: SCMI measures the information revealed about the current selector precisely at the round where the selected coordinate is compared with its ghost counterpart. The resulting template applies to online learning, streaming active learning, and stochastic bandits with smoothing. The fast-rate results further show how sharper proof-side variance estimates can improve slow-rate SCMI bounds; in bandits, this requires an additional selector-comparison correction under partial feedback. The main limitations are that our bounds are expectation bounds and assume bounded losses. High-probability guarantees, heavy-tailed losses, dynamic regret, contextual bandits, reinforcement learning, and broader partial-monitoring problems remain open directions.

\section*{Acknowledgment}
FF was supported by JSPS KAKENHI Grant Number JP23K16948.
FF was supported by JST, PRESTO Grant Number JPMJPR22C8, Japan.
%FF and MF were supported by JSPS KAKENHI Grant Number hogehoge, Japan.
MF was supported by JSPS KAKENHI Grant Number JP25K21286, Japan.

\bibliographystyle{plainnat}
\bibliography{main}
\newpage

\clearpage
\appendix

\paragraph{Appendix organization.}
Appendix~\ref{app:setting-details} proves the core SCMI results, including the one-coordinate identity, the slow-rate bound, and the exchangeability-free two-coordinate variant. Appendix~\ref{app:directed-information} gives the directed-information interpretation, Appendix~\ref{app:seq-covers-littlestone} gives the sequential covering and Littlestone-dimension argument, and Appendix~\ref{app:bernstein-consequences} collects the Bernstein and Shifted-Rademacher fast-rate refinements. Appendices~\ref{app:online-specialization}, \ref{sec:stopping-application}, and \ref{app:active-learning} specialize the framework to online learning, predictable stopping, and active learning. Appendix~\ref{app:proofs-bandit-application} proves the bandit theorem in proof-first order and then records why direct substitution into Theorem~\ref{thm:general-bernstein} fails. Appendix~\ref{app:bandit-proof-filtration-choice} separately compares the local proof contexts of Sections~\ref{sec:general-framework} and~\ref{sec:active-learning-application} with the past-retaining bandit context. Appendix~\ref{app:bandit-ordinary-mi-bernstein} gives an ordinary-MI baseline.

\section{Proofs for the general SCMI framework}
\label{app:setting-details}

This appendix proves the general SCMI statements from Section~\ref{sec:general-framework}. Appendix~\ref{app:proofs-general-framework} proves the one-coordinate symmetrization and the slow-rate bound; Appendix~\ref{app:without-exchangeability-scmi} gives the exchangeability-free two-coordinate variant.

\subsection{One-coordinate symmetrization and slow-rate proof}
\label{app:proofs-general-framework}

\subsubsection{Proof of Lemma~\ref{lem:basic-seq-sym}}
\label{app:proof-basic-seq-sym}

\begin{proof}
Fix a round \(t\). The proof is a one-round comparison after the selected past has been fixed. Since \(Q_t\) is \(\mathcal H_{t-1}\)-measurable and \(H_{t-1}\) generates \(\mathcal H_{t-1}\), there exists a measurable function \(q_t\) such that \(Q_t=q_t(H_{t-1})\) a.s. Here \(q_t\) is only a measurable version of the predictable weight as a function of the learner history; it is not an additional algorithmic object. By the selected-coordinate update condition, the conditional law of \(W_t\) factors through the selected coordinate:
\[
\Law(W_t\mid H_{t-1},\widetilde Z_t,U_t)
=
K_t(\,\cdot\,\mid H_{t-1},Z_{t,U_t}).
\]
For \(h\), a coordinate used for the update \(z\), and a coordinate used for evaluation \(\bar z\), define
\begin{align}
\varphi_t(h,z,\bar z)
\coloneqq 
q_t(h)\int \ell(w,\bar z)\,K_t(dw\mid h,z).
\label{eq:proof-basic-phi}
\end{align}
Thus \(\varphi_t(h,z,\bar z)\) is the expected weighted loss when the learner is updated with \(z\) and then evaluated at \(\bar z\).

Averaging over the selector $U_t$ gives the selected and holdout losses:
\begin{align}
\E\bigl[Q_t\ell(W_t,Z_{t,U_t})\bigr]
&=
\frac12\,\E\!\left[\varphi_t(H_{t-1},Z_{t,0},Z_{t,0})+\varphi_t(H_{t-1},Z_{t,1},Z_{t,1})\right],
\label{eq:proof-basic-tr}
\\
\E\bigl[Q_t\ell(W_t,Z_{t,1-U_t})\bigr]
&=
\frac12\,\E\!\left[\varphi_t(H_{t-1},Z_{t,0},Z_{t,1})+\varphi_t(H_{t-1},Z_{t,1},Z_{t,0})\right].
\label{eq:proof-basic-ho}
\end{align}
Row-wise exchangeability says that the joint law is unchanged when \(Z_{t,0}\) and \(Z_{t,1}\) are swapped after conditioning on \(H_{t-1}\). Hence
\begin{align}
\E\!\left[\varphi_t(H_{t-1},Z_{t,0},Z_{t,1})\right]
&=
\E\!\left[\varphi_t(H_{t-1},Z_{t,1},Z_{t,0})\right],
\label{eq:proof-basic-cross}
\\
\E\!\left[\varphi_t(H_{t-1},Z_{t,0},Z_{t,0})\right]
&=
\E\!\left[\varphi_t(H_{t-1},Z_{t,1},Z_{t,1})\right].
\label{eq:proof-basic-diag}
\end{align}
Subtracting the selected loss from the holdout loss therefore leaves one branch:
\begin{align}
&\E\bigl[Q_t\ell(W_t,Z_{t,1-U_t})\bigr]
-
\E\bigl[Q_t\ell(W_t,Z_{t,U_t})\bigr]\\
&=
\E\!\left[\varphi_t(H_{t-1},Z_{t,1},Z_{t,0})-\varphi_t(H_{t-1},Z_{t,0},Z_{t,0})\right].
\label{eq:proof-basic-gap}
\end{align}

The same branch is obtained from the signed correlation with \(L_t^+=Q_t\ell(W_t,Z_{t,0})\). Indeed, if \(U_t=1\), then \(\varepsilon_t=1\) and the conditional contribution is \(\varphi_t(H_{t-1},Z_{t,1},Z_{t,0})\); if \(U_t=0\), then \(\varepsilon_t=-1\) and the conditional contribution is \(-\varphi_t(H_{t-1},Z_{t,0},Z_{t,0})\). Therefore
\begin{align}
2\,\E[\varepsilon_tL_t^+]
&=
\E\!\left[\varphi_t(H_{t-1},Z_{t,1},Z_{t,0})-\varphi_t(H_{t-1},Z_{t,0},Z_{t,0})\right].
\label{eq:proof-basic-onebranch}
\end{align}
Combining \eqref{eq:proof-basic-gap} and \eqref{eq:proof-basic-onebranch} gives, for each \(t\),
\[
\E\bigl[Q_t\ell(W_t,Z_{t,1-U_t})\bigr]
-
\E\bigl[Q_t\ell(W_t,Z_{t,U_t})\bigr]
=
2\E[\varepsilon_tL_t^+].
\]
Let \(\mathcal G_{t-1}\coloneqq \sigma(G_{t-1})=\mathcal H_{t-1}\vee\sigma(\widetilde Z_t)\). The conditional expectation in the SCMI identity is taken with respect to this proof-side sigma-field. Hence the tower property gives
\[
\E[\varepsilon_tL_t^+]
=
\E\!\left[
\E\!\left[
\varepsilon_tL_t^+
\middle|\mathcal G_{t-1}\right]
\right]
=
\E\!\left[
\E\!\left[
\varepsilon_tL_t^+
\middle|G_{t-1}\right]
\right].
\]
Summing over \(t\) therefore yields
\[
\mathrm{Err}_n^{\mathrm{seq}}
=
\frac{2}{n}\sum_{t=1}^n \E[\varepsilon_tL_t^+]
=
\frac{2}{n}\sum_{t=1}^n
\E\!\left[
\E\!\left[
\varepsilon_t L_t^+
\middle| G_{t-1} \right] \right],
\]
which is \eqref{eq:basic-seq-sym}.
\end{proof}

\subsubsection{Proof of Theorem~\ref{thm:general-rademacher}}
\label{app:proof-general-rademacher}

\begin{proof}
Fix \(t\in[n]\) and set
\[
    m_t
    \coloneqq 
    \mathbb E\!\left[
        \mathbb E\!\left[\varepsilon_tL_t^+\mid G_{t-1}\right]
    \right]
    =
    \mathbb E[\varepsilon_tL_t^+].
\]
Let \(s_t\in\{-1,+1\}\) be the sign of \(m_t\), with either sign chosen when \(m_t=0\). Then \(s_tm_t=|m_t|\).  We apply the conditional Donsker--Varadhan variational formula to the conditional law of \((L_t^+,U_t)\) given \(G_{t-1}\), using as reference the conditional product law under which \(L_t^+\) keeps its conditional marginal law and the selector is replaced by an independent uniform copy \(U_t'\).  Write \(\varepsilon_t'=2U_t'-1\).  For the test function
\[
    f(\ell,u)\coloneqq s_t\lambda(2u-1)\ell,
    \qquad \lambda>0,
\]
the variational formula gives
\begin{align}
I(L_t^+;U_t \mid G_{t-1})
&\ge
\mathbb E[f(L_t^+,U_t)]
-
\mathbb E\!\left[
\log \mathbb E\!\left[
\exp\{f(L_t^+,U_t')\}
\middle| G_{t-1}
\right]\right]
\notag\\
&=
\lambda |m_t|
-
\mathbb E\!\left[
\log \mathbb E\!\left[
\exp\{s_t\lambda\varepsilon_t' L_t^+\}
\middle| G_{t-1}
\right]\right].
\label{eq:proof-rad-dv-expanded}
\end{align}
Here the sign \(s_t\) is deterministic once the distribution is fixed; it is used only to turn the scalar correlation \(m_t\) into its absolute value.

It remains to bound the logarithmic term.  Conditionally on \((L_t^+,G_{t-1})\), the variable \(\varepsilon_t'\) is an independent Rademacher sign. Therefore
\begin{align}
\mathbb E\!\left[
\exp\{s_t\lambda\varepsilon_t' L_t^+\}
\middle| L_t^+,G_{t-1}
\right]
&=
\frac12\exp\{s_t\lambda L_t^+\}
+
\frac12\exp\{-s_t\lambda L_t^+\}
\notag\\
&=
\cosh(s_t\lambda L_t^+)
=
\cosh(\lambda L_t^+),
\end{align}
where the last equality uses that \(\cosh\) is even and \(s_t\in\{-1,+1\}\).  Taking conditional expectation over \(L_t^+\) gives
\[
\mathbb E\!\left[
\exp\{s_t\lambda\varepsilon_t' L_t^+\}
\middle| G_{t-1}
\right]
=
\mathbb E\!\left[\cosh(\lambda L_t^+)\middle| G_{t-1}\right].
\]
Since \(0\le L_t^+\le1\) and \(\cosh(\lambda x)\le\exp(\lambda^2x^2/2)\le\exp(\lambda^2/2)\) for \(x\in[0,1]\), we obtain
\[
\mathbb E\!\left[
\exp\{s_t\lambda\varepsilon_t' L_t^+\}
\middle| G_{t-1}
\right]
\le
\exp(\lambda^2/2).
\]
Substituting this bound into \eqref{eq:proof-rad-dv-expanded} yields
\[
\lambda
\left|
\mathbb E\!\left[\mathbb E[\varepsilon_tL_t^+\mid G_{t-1}]\right]
\right|
\le
I(L_t^+;U_t \mid G_{t-1})+\frac{\lambda^2}{2}.
\]
Minimizing the right-hand side over \(\lambda>0\) gives
\begin{align}
\bigl|\mathbb E\!\left[\mathbb E\!\left[\varepsilon_t L_t^+\middle| G_{t-1}\right]\right]\bigr|
\le
\sqrt{2\,I(L_t^+;U_t \mid G_{t-1})}.
\label{eq:proof-rad-one-step}
\end{align}
Finally, Lemma~\ref{lem:basic-seq-sym} implies
\begin{align}
\bigl|\mathrm{Err}_n^{\mathrm{seq}}\bigr|
&=
\frac{2}{n}\left|\sum_{t=1}^n \mathbb E\!\left[\mathbb E\!\left[\varepsilon_t L_t^+\middle| G_{t-1}\right]\right]\right|
\le
\frac{2}{n}\sum_{t=1}^n \bigl|\mathbb E\!\left[\mathbb E\!\left[\varepsilon_t L_t^+\middle| G_{t-1}\right]\right]\bigr|
\\
&\le
\frac{2}{n}\sum_{t=1}^n \sqrt{2\,I(L_t^+;U_t \mid G_{t-1})}.
\end{align}
This proves the evaluated SCMI inequality in \eqref{eq:general-rademacher-bound}.
\end{proof}

\subsubsection{Proof of the state-based inequalities in Theorem~\ref{thm:general-rademacher}}
\label{app:proof-general-directed-slow}

\begin{proof}
Because $L_t^+$ is a measurable function of $(W_t,Q_t,\widetilde Z_t)$ and $(Q_t,\widetilde Z_t)$ is measurable with respect to $G_{t-1}$, the data-processing inequality yields
\begin{align}
I(L_t^+;U_t \mid G_{t-1})
\le
I(W_t;U_t \mid G_{t-1}).
\end{align}
Substituting this into Theorem~\ref{thm:general-rademacher} proves the second inequality in \eqref{eq:general-rademacher-bound}. For the last inequality in \eqref{eq:general-rademacher-bound}, apply Jensen's inequality to the concave map $x \mapsto \sqrt{x}$:
\begin{align}
\frac{1}{n}\sum_{t=1}^n \sqrt{2\,I(W_t;U_t \mid G_{t-1})}
\le
\sqrt{\frac{2}{n}\sum_{t=1}^n I(W_t;U_t \mid G_{t-1})}.
\end{align}
Multiplying by $2$ proves the claim.
\end{proof}

\subsection{Variants without row-wise exchangeability}
\label{app:without-exchangeability-scmi}

The main proof uses exact row-wise exchangeability to reduce the train--holdout difference to a one-coordinate selector--loss correlation. We record a variant that makes no row-wise exchangeability assumption, but charges a two-coordinate loss statistic.

\subsubsection{Two-coordinate SCMI bound without row-wise exchangeability}
\label{app:nonexchangeable-scmi}

The main text uses row-wise exchangeability to obtain the one-coordinate identity in Lemma~\ref{lem:basic-seq-sym}.
This assumption is not needed for a slow-rate information bound if we keep both coordinates in the loss difference.
The price is that the information term involves the two-coordinate statistic \(L_t^+-L_t^-\), rather than the single evaluated loss \(L_t^+\).

\begin{lemma}[Two-coordinate symmetrization]
\label{lem:two-coordinate-sym-no-exchange}
Assume the selector symmetry~\eqref{eq:selector-sym} and the predictable-weight condition~\eqref{eq:predictable-weight}.
No row-wise exchangeability assumption is required. Then
\begin{align}
    \mathrm{Err}_n^{\rm seq}
    =
    \frac1n
    \sum_{t=1}^n
    \E\!\left[
        \E\!\left[
            \varepsilon_t\bigl(L_t^+-L_t^-\bigr)
            \mid G_{t-1}
        \right]
    \right].
    \label{eq:two-coordinate-sym-no-exchange}
\end{align}
\end{lemma}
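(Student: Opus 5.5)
The plan is a pointwise algebraic identity that holds for each fixed round before any averaging, so no symmetry of the row law is needed. Fix \(t\in[n]\). By the definitions, the selected weighted loss is \(Q_t\ell(W_t,Z_{t,U_t})\), and the holdout weighted loss is \(Q_t\ell(W_t,Z_{t,1-U_t})\). In terms of \(L_t^+=Q_t\ell(W_t,Z_{t,0})\) and \(L_t^-=Q_t\ell(W_t,Z_{t,1})\) they read
\[
Q_t\ell(W_t,Z_{t,U_t})=(1-U_t)L_t^++U_tL_t^-,\qquad
Q_t\ell(W_t,Z_{t,1-U_t})=U_tL_t^++(1-U_t)L_t^-.
\]
Subtracting gives \((2U_t-1)(L_t^+-L_t^-)=\varepsilon_t(L_t^+-L_t^-)\) pointwise. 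The key step is therefore this deterministic identity
\[
Q_t\ell(W_t,Z_{t,1-U_t})-Q_t\ell(W_t,Z_{t,U_t})=\varepsilon_t\bigl(L_t^+-L_t^-\bigr),
\]
valid for both values of \(U_t\). One checks it directly in the cases \(U_t=0\), where \(\varepsilon_t=-1\), and \(U_t=1\), where \(\varepsilon_t=1\).

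Next, take conditional expectations given \(G_{t-1}\) on both sides. By linearity, the difference of the two inner conditional expectations appearing in \(L_n^{\mathrm{ho}}\) and \(L_n^{\mathrm{tr}}\) equals \(\E[\varepsilon_t(L_t^+-L_t^-)\mid G_{t-1}]\). All variables are bounded (\(0\le Q_t\le1\) and \(\ell\in[0,1]\)), so every conditional expectation is well defined and integrable.

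Finally, take outer expectations, average over \(t\), and use the definition \(\mathrm{Err}_n^{\rm seq}=L_n^{\mathrm{ho}}-L_n^{\mathrm{tr}}\) from \eqref{eq:def-weighted-risks}. This yields \eqref{eq:two-coordinate-sym-no-exchange} with the factor \(1/n\). Unlike Lemma~\ref{lem:basic-seq-sym}, there is no factor \(2\), because we do not collapse the two branches via row exchangeability.

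There is no genuine obstacle. The only point to address is that neither the selector symmetry \eqref{eq:selector-sym} nor row exchangeability is actually used in deriving the identity. They are needed only later, when the correlation is bounded through Donsker--Varadhan. The selector symmetry ensures that the reference product law uses a fair sign, as in the proof of Theorem~\ref{thm:general-rademacher}. In writing this up, I would note that the identity holds pathwise, so the stated assumptions merely guarantee that the SCMI setting is well posed.
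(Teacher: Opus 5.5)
Your proof is correct and is essentially the paper's argument. Both rest on the pathwise identity $Q_t\ell(W_t,Z_{t,1-U_t})-Q_t\ell(W_t,Z_{t,U_t})=\varepsilon_t(L_t^+-L_t^-)$, checked for $U_t\in\{0,1\}$, followed by conditioning on $G_{t-1}$, taking expectations, and averaging over $t$; the only nit is that the later two-coordinate bound uses selector symmetry alone (for the fair reference sign in Donsker--Varadhan), and never needs row exchangeability.
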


\begin{proof}
For each round \(t\), the identity
\[
    Q_t\ell(W_t,Z_t')-Q_t\ell(W_t,Z_t)
    =
    \varepsilon_t\bigl(L_t^+-L_t^-\bigr)
\]
holds pointwise. Indeed, when \(U_t=1\), the right-hand side is \(L_t^+-L_t^-\), and when \(U_t=0\), it is \(L_t^- - L_t^+\).
Taking conditional expectation given \(G_{t-1}\), then expectation, and summing over \(t\) proves the claim.
\end{proof}

\begin{proposition}[Two-coordinate SCMI bound without row-wise exchangeability]
\label{prop:two-coordinate-nonexchangeable-scmi}
Assume the selector symmetry~\eqref{eq:selector-sym} and the predictable-weight condition~\eqref{eq:predictable-weight}.
No row-wise exchangeability assumption is required. Then, for any loss \(\ell:\mathcal W\times\mathcal Z\to[0,1]\),
\begin{align}
    \bigl|\mathrm{Err}_n^{\rm seq}\bigr|
    &\le
    \frac1n
    \sum_{t=1}^n
    \sqrt{2\,I\bigl(L_t^+-L_t^-;U_t\mid G_{t-1}\bigr)}
    \label{eq:two-coordinate-scmi-bound-a}
    \\
    &\le
    \frac1n
    \sum_{t=1}^n
    \sqrt{2\,I\bigl((L_t^+,L_t^-);U_t\mid G_{t-1}\bigr)}
    \label{eq:two-coordinate-scmi-bound-b}
    \\
    &\le
    \frac1n
    \sum_{t=1}^n
    \sqrt{2\,I\bigl(W_t;U_t\mid G_{t-1}\bigr)}
    \label{eq:two-coordinate-scmi-bound-c}
    \\
    &\le
    \sqrt{
        \frac{2}{n}
        \sum_{t=1}^n I\bigl(W_t;U_t\mid G_{t-1}\bigr)
    }.
    \label{eq:two-coordinate-scmi-bound-d}
\end{align}
\end{proposition}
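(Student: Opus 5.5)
The plan is to start from Lemma~\ref{lem:two-coordinate-sym-no-exchange}, which writes $\mathrm{Err}_n^{\rm seq}$ as $\frac1n\sum_t m_t$ with $m_t\coloneqq\E[\E[\varepsilon_t D_t\mid G_{t-1}]]$ and $D_t\coloneqq L_t^+-L_t^-$. We then repeat the Donsker--Varadhan argument from the proof of Theorem~\ref{thm:general-rademacher}, with $L_t^+$ replaced by $D_t$. Fix $t$, let $s_t$ be the sign of $m_t$, and use the test function $f(d,u)=s_t\lambda(2u-1)d$. The reference measure is the conditional product of the law of $D_t$ given $G_{t-1}$ with an independent uniform selector $U_t'$.

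The selector symmetry~\eqref{eq:selector-sym} guarantees that $U_t$ is conditionally uniform given $G_{t-1}$. Hence this product law has the correct selector marginal, and the variational formula gives
\[
I(D_t;U_t\mid G_{t-1})\ge \lambda|m_t|-\E\!\left[\log\E\!\left[\cosh(\lambda D_t)\mid G_{t-1}\right]\right].
\]
The one change from the one-coordinate case is the range. Since $Q_t\in[0,1]$ and $\ell\in[0,1]$, we have $D_t\in[-1,1]$, so $\cosh(\lambda D_t)\le e^{\lambda^2/2}$ still holds. Optimizing over $\lambda$ gives $|m_t|\le\sqrt{2I(D_t;U_t\mid G_{t-1})}$. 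The triangle inequality over $t$ then yields \eqref{eq:two-coordinate-scmi-bound-a}. The constant is $1/n$ rather than $2/n$ because Lemma~\ref{lem:two-coordinate-sym-no-exchange} carries the factor $1/n$. No exchangeability is needed, since the symmetrization identity is pointwise.

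The remaining inequalities follow by data processing and Jensen. First, $D_t$ is a measurable function of $(L_t^+,L_t^-)$, which gives \eqref{eq:two-coordinate-scmi-bound-b}. Second, $(L_t^+,L_t^-)=(Q_t\ell(W_t,Z_{t,0}),Q_t\ell(W_t,Z_{t,1}))$ is a measurable function of $W_t$ together with $(Q_t,\widetilde Z_t)$, and the latter is $\mathcal G_{t-1}$-measurable because $Q_t$ is $\mathcal H_{t-1}$-measurable by \eqref{eq:predictable-weight}. The conditional data-processing inequality, applied to the chain $U_t\to W_t\to(L_t^+,L_t^-)$ given $G_{t-1}$, therefore gives \eqref{eq:two-coordinate-scmi-bound-c}. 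Finally, concavity of $\sqrt{\cdot}$ (Jensen) gives \eqref{eq:two-coordinate-scmi-bound-d}.

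The main point needing care is the first step, namely justifying the conditional DV formula with respect to $G_{t-1}$. The issue is that $U_t$ enters $W_t$ and hence $D_t$. This is not a real obstacle: the DV bound holds for any joint law, with reference measure equal to the product of the conditional marginals. The selector symmetry is used only to compute the Rademacher average of $\varepsilon_t'$ as $\cosh$. The bounded range $[-1,1]$ of $D_t$ is what keeps the sub-Gaussian constant at $1$.
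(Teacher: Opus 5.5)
Your proposal is correct and matches the paper's proof: Lemma~\ref{lem:two-coordinate-sym-no-exchange}, then the Donsker--Varadhan/$\cosh$ bound applied to $L_t^+-L_t^-\in[-1,1]$, then data processing given $G_{t-1}$ and Jensen. The only cosmetic difference is that the paper applies DV pointwise for each value $g$ of $G_{t-1}$ and then uses Jensen to reach $\sqrt{2I}$, whereas you use the averaged conditional DV with a global sign, as in the proof of Theorem~\ref{thm:general-rademacher}; both routes give the same bound.
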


\begin{proof}
Set \(X_t\coloneqq L_t^+-L_t^-\). Since \(0\le Q_t\le1\) and \(0\le\ell\le1\), we have \(|X_t|\le1\).
Fix a round \(t\). For each value \(g\) of the proof-side enlargement \(G_{t-1}\), define the conditional divergence
\[
    d_t(g)
    \coloneqq
    \KL\!\left(
        P_{X_t,U_t\mid G_{t-1}=g}
        \,\middle\|\,
        P_{X_t\mid G_{t-1}=g}P_{U_t\mid G_{t-1}=g}
    \right).
\]
By selector symmetry, \(U_t\) is conditionally uniform given \(G_{t-1}=g\).
Applying the Donsker--Varadhan variational formula under this conditional law gives, for every \(\lambda>0\),
\[
    d_t(g)
    \ge
    \lambda\E[\varepsilon_tX_t\mid G_{t-1}=g]
    -
    \log
    \E\!\left[
        \exp\{\lambda\varepsilon_t'X_t\}
        \mid G_{t-1}=g
    \right],
\]
where \(\varepsilon_t'\) is an independent Rademacher variable under the product conditional law.
Under this product law, \(\varepsilon_t'\) is independent of \(X_t\). Hence, since \(|X_t|\le1\),
\[
    \E\!\left[
        \exp\{\lambda\varepsilon_t'X_t\}
        \mid G_{t-1}=g
    \right]
    =
    \E\!\left[
        \cosh(\lambda X_t)
        \mid G_{t-1}=g
    \right]
    \le
    \exp(\lambda^2/2).
\]
Applying the same argument to \(-X_t\) and optimizing over \(\lambda>0\) yields
\[
    \left|
        \E[\varepsilon_tX_t\mid G_{t-1}=g]
    \right|
    \le
    \sqrt{2d_t(g)}.
\]
Therefore, by Lemma~\ref{lem:two-coordinate-sym-no-exchange},
\begin{align*}
    \bigl|\mathrm{Err}_n^{\rm seq}\bigr|
    &\le
    \frac1n\sum_{t=1}^n
    \left|
        \E\!\left[
            \E[\varepsilon_tX_t\mid G_{t-1}]
        \right]
    \right| \\
    &\le
    \frac1n\sum_{t=1}^n
    \E\!\left[
        \left|
            \E[\varepsilon_tX_t\mid G_{t-1}]
        \right|
    \right] \\
    &\le
    \frac1n\sum_{t=1}^n
    \E\!\left[\sqrt{2d_t(G_{t-1})}\right] \\
    &\le
    \frac1n\sum_{t=1}^n
    \sqrt{2\E[d_t(G_{t-1})]} \\
    &=
    \frac1n\sum_{t=1}^n
    \sqrt{2I(X_t;U_t\mid G_{t-1})}.
\end{align*}
This proves~\eqref{eq:two-coordinate-scmi-bound-a}.

The inequality~\eqref{eq:two-coordinate-scmi-bound-b} follows from conditional data processing, since \(X_t=L_t^+-L_t^-\) is a measurable function of \((L_t^+,L_t^-)\).
For~\eqref{eq:two-coordinate-scmi-bound-c}, note that \(G_{t-1}\) contains the current row \(\widetilde Z_t=(Z_{t,0},Z_{t,1})\), and \(Q_t\) is \(\mathcal H_{t-1}\)-measurable, hence \(G_{t-1}\)-measurable.
Thus, after conditioning on \(G_{t-1}\),
\[
    (L_t^+,L_t^-)
    =
    \bigl(
        Q_t\ell(W_t,Z_{t,0}),
        Q_t\ell(W_t,Z_{t,1})
    \bigr)
\]
is a measurable function of \(W_t\).
Conditional data processing therefore gives
\[
    I\bigl((L_t^+,L_t^-);U_t\mid G_{t-1}\bigr)
    \le
    I\bigl(W_t;U_t\mid G_{t-1}\bigr).
\]
Finally, \eqref{eq:two-coordinate-scmi-bound-d} follows from Jensen's inequality.
\end{proof}

\begin{remark}[Role of the exchangeability assumption]
\label{rem:role-exchangeability-two-coordinate}
The bound above is the fully exchangeability-free version of the SCMI argument.
It is useful when row-wise exchangeability is unavailable, but it is less sharp and less interpretable than Theorem~\ref{thm:general-rademacher}: the information term must charge the signed difference \(L_t^+-L_t^-\).
Row-wise exchangeability and the selected-coordinate update condition are what reduce this two-coordinate quantity to the cleaner one-coordinate loss \(L_t^+\), which is the form used for the main SCMI, shifted-Rademacher, and Bernstein developments.
\end{remark}

\section{Directed-information interpretation}
\label{app:directed-information}

Classical directed information measures causal information flow in a time series. For two processes \(X^n=(X_1,\ldots,X_n)\) and \(Y^n=(Y_1,\ldots,Y_n)\), the directed information is
\[
I(X^n\to Y^n)
\coloneqq 
\sum_{t=1}^n I(X^t;Y_t\mid Y^{t-1}),
\]
and its causally conditioned version is commonly written as
\[
I(X^n\to Y^n\Vert Z^n)
\coloneqq 
\sum_{t=1}^n I(X^t;Y_t\mid Y^{t-1},Z^t).
\]
These quantities were introduced to distinguish causal information flow from ordinary symmetric MI in systems with feedback~\citep{massey1990causality,kramer1998directed}; related interpretations appear in feedback capacity, control, compression, and hypothesis testing~\citep{tatikonda2009control}.

The state-based line in Theorem~\ref{thm:general-rademacher} has the same causal form. Define the SCMI budget
\[
\mathcal I_n^{\mathrm{SCMI}}
\coloneqq 
\sum_{t=1}^n I(W_t;U_t\mid G_{t-1}).
\]
Because MI is symmetric in its first two arguments, the increment may also be read as \(I(U_t;W_t\mid G_{t-1})\). It measures how much information about the current selector is injected into the updated state \(W_t\), after the selected past and the current latent row have already been fixed. The conditioning is essential: \(G_{t-1}=(H_{t-1},\widetilde Z_t)\) removes information already present in the learner history and also removes the intrinsic randomness of the current pair \((Z_{t,0},Z_{t,1})\). What remains is precisely the information created by selecting one coordinate and updating the learner with it.

This is why a directed quantity appears in sequential learning. At round \(t\), the learner state is adapted to the past and is evaluated immediately after the current selected-coordinate update. The relevant overfitting event is therefore local in time: did the update reveal which coordinate of the current row was selected? A single terminal parameter-CMI term such as \(I(W_n;U^n\mid \widetilde Z^n)\) is not aligned with this target. It may charge information about selectors that no longer affects the evaluated loss, and it may miss earlier selector information if the terminal state later forgets it. The SCMI sum instead charges each round at the moment where that information can affect the train--holdout comparison.

The evaluated-loss SCMI term is even sharper:
\[
I(L_t^+;U_t\mid G_{t-1})
\le
I(W_t;U_t\mid G_{t-1}).
\]
Indeed, after conditioning on \(G_{t-1}\), the variables \(Q_t\), \(Z_{t,0}\), and \(Z_{t,1}\) are fixed, and
\[
L_t^+=Q_t\ell(W_t,Z_{t,0})
\]
is a measurable function of \(W_t\). Data processing therefore gives the inequality. Combining it with Theorem~\ref{thm:general-rademacher} yields
\[
|\mathrm{Err}_n^{\mathrm{seq}}|
\le
2\sqrt{\frac{2}{n}\,\mathcal I_n^{\mathrm{SCMI}}}.
\]

Thus SCMI is the sequential counterpart of batch CMI. Batch CMI measures how much a terminal output reveals about the whole selector vector after conditioning on the paired supersample. SCMI decomposes the same idea into causal increments and conditions away the already available past at each round. In online learning this yields the direct roundwise bound of Theorem~\ref{thm:general-rademacher}; in active learning and bandits, the same selector-accounting principle is combined with application-specific row-swap identities.

\section{Sequential covers, Littlestone dimension, and SCMI pattern bounds}
\label{app:seq-covers-littlestone}

This appendix gives the sequential pattern argument used in
Section~\ref{sec:online-application}. We follow the terminology of
\citet{rakhlin2015online}, but adapt the notation to the online supersample setting. The sequential-complexity results recalled here, including sequential covers, sequential fat-shattering and Littlestone dimensions, and the sequential Sauer--Shelah bound, are all standard results from \citet{rakhlin2015online}. Our paper-specific step is to translate these results into the online supersample notation and connect them to SCMI pattern counting. The pattern bound below should be read as a complexity interpretation under the stated pathwise realizability assumption, rather than as an automatic consequence of Theorem~\ref{thm:general-rademacher}.

\paragraph{Trees and sequential Rademacher complexity.}
A \(\mathcal Z\)-valued tree \(\mathbf z\) of depth \(n\) is a complete rooted
binary tree whose nodes are labeled by elements of \(\mathcal Z\).  We identify
\(\mathbf z\) with labeling functions
\[
\mathbf z=(\mathbf z_1,\ldots,\mathbf z_n),
\qquad
\mathbf z_t:\{\pm1\}^{t-1}\to\mathcal Z .
\]
Thus \(\mathbf z_1\in\mathcal Z\) is the label of the root, and
\(\mathbf z_t(\epsilon_{1:t-1})\) is the label of the node reached by the
prefix \(\epsilon_{1:t-1}\).  For a full path
\(\epsilon=(\epsilon_1,\ldots,\epsilon_n)\in\{\pm1\}^n\), we write
\[
\mathbf z_t(\epsilon)
\coloneqq 
\mathbf z_t(\epsilon_{1:t-1})
\]
as shorthand.

For a real-valued class \(\mathcal L\subseteq\mathbb R^{\mathcal Z}\), the
sequential Rademacher complexity of \(\mathcal L\) on \(\mathbf z\) is
\[
\mathfrak R_n(\mathcal L,\mathbf z)
\coloneqq 
\E_{\epsilon}
\left[
\sup_{\ell\in\mathcal L}
\frac1n\sum_{t=1}^n
\epsilon_t\,\ell\bigl(\mathbf z_t(\epsilon)\bigr)
\right],
\]
where \(\epsilon_1,\ldots,\epsilon_n\) are independent Rademacher variables.
The worst-case sequential Rademacher complexity is
\[
\mathfrak R_n(\mathcal L)
\coloneqq 
\sup_{\mathbf z}
\mathfrak R_n(\mathcal L,\mathbf z),
\]
where the supremum is over all \(\mathcal Z\)-valued depth-\(n\) trees.

If \(\mathbf z_t(\epsilon)\) does not depend on \(\epsilon_{1:t-1}\), then the
tree degenerates into a fixed tuple \((z_1,\ldots,z_n)\), and the display
above becomes the usual fixed-sample Rademacher average.  Thus sequential
Rademacher complexity is the standard replacement of a fixed sample by a
binary tree in online learning.

In the online supersample construction, the corresponding prefix is the
selector history.  With
\[
\epsilon_t=2U_t-1,
\]
the prefix \(\epsilon_{1:t-1}\) is equivalent to \(U_{<t}\).  This prefix
determines the counterfactual pre-round history under which the current paired
row is drawn.  The conditioning object
\[
G_{t-1}^{\mathrm{on}}
=
(H_{t-1}^{\mathrm{on}},\widetilde Z_t)
\]
fixes the realized history and the current paired row before the current
branch \(U_t\) is averaged over.  Hence the SCMI term in the main text is a
data-dependent analogue of the tree-indexed signed correlation above: it keeps
the realized online experiment and measures the information carried by the
realized loss, instead of taking a worst-case supremum over all deterministic
trees.

\paragraph{Sequential covers.}
Let \(\mathbf z\) be a fixed \(\mathcal Z\)-valued tree of depth \(n\), and let
\(\mathcal L\subseteq\mathbb R^{\mathcal Z}\).  A finite collection
\[
\mathcal V=\{\mathbf v^1,\ldots,\mathbf v^M\}
\]
of real-valued trees of depth \(n\) is an \(\alpha\)-cover of \(\mathcal L\)
on \(\mathbf z\) with respect to the sequential \(\ell_p\) metric if, for
every \(\ell\in\mathcal L\) and every path
\(\epsilon\in\{\pm1\}^n\), there exists \(m\in[M]\) such that
\[
\left(
\frac1n\sum_{t=1}^n
\left|
\mathbf v^m_t(\epsilon)
-
\ell\bigl(\mathbf z_t(\epsilon)\bigr)
\right|^p
\right)^{1/p}
\le
\alpha .
\]
For \(p=\infty\), this is interpreted as the maximum over \(t\le n\).  The
sequential covering number is
\[
\mathcal N_p(\alpha,\mathcal L,\mathbf z)
\coloneqq 
\min\{|\mathcal V|:\mathcal V
\text{ is an \(\alpha\)-cover of \(\mathcal L\) on \(\mathbf z\)}\}.
\]
The worst-case sequential covering number is
\[
\mathcal N_p(\alpha,\mathcal L,n)
\coloneqq 
\sup_{\mathbf z}
\mathcal N_p(\alpha,\mathcal L,\mathbf z).
\]

These covers are pathwise.  They do not require one code tree to match a
function on all nodes simultaneously.  Rather, for each \(\ell\) and each
path, there must exist a code tree that matches \(\ell\) along that path.
This is the key difference from counting all labelings of all nodes.

The basic sequential covering bound is the analogue of the finite-class
Rademacher bound.  If \(\mathcal L\subseteq[-1,1]^{\mathcal Z}\), then for any
\(\alpha>0\),
\[
\mathfrak R_n(\mathcal L,\mathbf z)
\le
\alpha
+
\sqrt{
\frac{2\log \mathcal N_1(\alpha,\mathcal L,\mathbf z)}{n}
}.
\]
A sharper form is obtained by integrating the sequential entropy.  For
\(p\ge1\), define
\[
\mathfrak D^p_n(\mathcal L,\mathbf z)
\coloneqq 
\inf_{\alpha\ge0}
\left\{
4\alpha
+
\frac{12}{\sqrt n}
\int_{\alpha}^{1}
\sqrt{\log \mathcal N_p(\delta,\mathcal L,\mathbf z)}\,d\delta
\right\}.
\]
Then, in the bounded case,
\[
\mathfrak R_n(\mathcal L,\mathbf z)
\le
\mathfrak D^2_n(\mathcal L,\mathbf z).
\]
Thus the sequential covering numbers play the same role for sequential
Rademacher complexity as classical covering numbers play in the i.i.d. theory.

\paragraph{Sequential fat-shattering and Littlestone dimension.}
For real-valued classes, the relevant sequential combinatorial parameter is
the sequential fat-shattering dimension.  A depth-\(d\) \(\mathcal Z\)-valued
tree \(\mathbf z\) is \(\alpha\)-shattered by
\(\mathcal L\subseteq\mathbb R^{\mathcal Z}\) if there exists a real-valued
witness tree \(\mathbf s\) of depth \(d\) such that, for every path
\(\epsilon\in\{\pm1\}^d\), there exists \(\ell_\epsilon\in\mathcal L\) with
\[
\epsilon_t
\left(
\ell_\epsilon\bigl(\mathbf z_t(\epsilon)\bigr)
-
\mathbf s_t(\epsilon)
\right)
\ge
\frac{\alpha}{2},
\qquad
t=1,\ldots,d .
\]
The sequential fat-shattering dimension
\(\operatorname{fat}_{\alpha}(\mathcal L)\) is the largest depth \(d\) of an
\(\alpha\)-shattered tree.

This dimension controls sequential covering numbers.  For
\(\mathcal L\subseteq[-1,1]^{\mathcal Z}\),
\[
\mathcal N_\infty(\alpha,\mathcal L,n)
\le
\left(\frac{2en}{\alpha}\right)^{\operatorname{fat}_{\alpha}(\mathcal L)}.
\]
This is the sequential analogue of the classical fat-shattering entropy
bound.

For binary losses, we specialize to
\[
\mathcal L\subseteq\{0,1\}^{\mathcal Z}.
\]
A depth-\(d\) tree \(\mathbf z\) is shattered by \(\mathcal L\) if, for every
path \(\epsilon\in\{\pm1\}^d\), there exists
\(\ell_\epsilon\in\mathcal L\) such that
\[
\ell_\epsilon\bigl(\mathbf z_t(\epsilon)\bigr)
=
\mathbf 1\{\epsilon_t=1\},
\qquad
t=1,\ldots,d .
\]
The Littlestone dimension \(\operatorname{Ldim}(\mathcal L)\) is the largest
depth of such a shattered tree.

A fixed tuple \(z_1,\ldots,z_d\) is the special case of a tree satisfying
\[
\mathbf z_t(\epsilon)=z_t
\qquad
\text{for all }\epsilon.
\]
Therefore every VC-shattered set gives a shattered tree, and hence
\[
\operatorname{VCdim}(\mathcal L)
\le
\operatorname{Ldim}(\mathcal L).
\]
The inequality can be strict: a class may have small VC dimension but infinite
Littlestone dimension, reflecting the additional power of adaptive,
tree-indexed data.

\paragraph{Exact \(0\)-covers and sequential Sauer--Shelah.}
When \(\mathcal L\subseteq\{0,1\}^{\mathcal Z}\), the exact sequential
\(0\)-covering number is
\[
\mathcal N(0,\mathcal L,\mathbf z)
\coloneqq 
\mathcal N_p(0,\mathcal L,\mathbf z),
\]
which is independent of \(p\).  Equivalently, it is the minimum number of
binary-valued code trees needed so that, for every
\(\ell\in\mathcal L\) and every path \(\epsilon\), one code tree reproduces
the binary sequence
\[
\left(
\ell(\mathbf z_1(\epsilon)),
\ldots,
\ell(\mathbf z_n(\epsilon))
\right)
\]
along that path.  Define
\[
\mathcal N(0,\mathcal L,n)
\coloneqq 
\sup_{\mathbf z}\mathcal N(0,\mathcal L,\mathbf z).
\]

In the notation of the main text, we write
\[
\Pi_{\rm seq}(\mathcal L,n)
\coloneqq 
\mathcal N(0,\mathcal L,n).
\]
The sequential Sauer--Shelah lemma of \citet{rakhlin2015online} gives,
for \(d=\operatorname{Ldim}(\mathcal L)<\infty\),
\[
\Pi_{\rm seq}(\mathcal L,n)
=
\mathcal N(0,\mathcal L,n)
\le
\sum_{i=0}^{\min\{d,n\}}\binom{n}{i}.
\]
This is the binary case \(k=1\) of the bound
\(\mathcal N(0,\mathcal G,T)\le\sum_{i=0}^{d}\binom{T}{i}k^i\)
in \citet{rakhlin2015online}. Their displayed bound \((ekT)^d\) is a
coarser consequence; for \(k=1\), the exact sum above also gives the
sharper standard estimate
\[
\Pi_{\rm seq}(\mathcal L,n)
\le
\left(\frac{en}{d}\right)^d,
\qquad 1\le d\le n.
\]
This is precisely the tree-valued analogue of the classical Sauer--Shelah
bound on the number of projections of a VC class on a fixed sample.

\paragraph{SCMI as a pathwise pattern-counting quantity.}
We now connect the exact sequential pattern count to the online SCMI term.
The selector variables \(U_1,\ldots,U_n\) are independent Bernoulli variables,
and the signed selectors are
\[
\epsilon_t=2U_t-1.
\]
Consider a full counterfactual construction of the online process.  For every
prefix \(u_{<t}\in\{0,1\}^{t-1}\), let
\[
H_{t-1}(u_{<t})
\]
be the counterfactual pre-round history obtained by following that prefix.
At the corresponding node, draw the paired row
\[
\widetilde Z_t(u_{<t})
=
\bigl(Z_{t,0}(u_{<t}),Z_{t,1}(u_{<t})\bigr)
\]
from the conditional law specified by \(H_{t-1}(u_{<t})\).  The branch
\(u_t\) selects the training coordinate \(Z_{t,u_t}(u_{<t})\) and determines
the next history.  The actually observed online process is the single path
\(U^n\):
\[
H_{t-1}^{\mathrm{on}}
=
H_{t-1}(U_{<t}),
\qquad
\widetilde Z_t
=
\widetilde Z_t(U_{<t}).
\]
This construction is random, but after conditioning on its full realization
it induces a deterministic prefix-indexed tree of loss-relevant objects.

Let \(\mathsf T_n\) denote this full counterfactual construction, including
all histories, paired rows, and any auxiliary learner randomness. This is only a
proof-side coupling: it samples all node-level randomness so that the realized
online process can be read off along the random selector path. It does not give
the learner access to counterfactual rows.

\paragraph{Pathwise realizability assumption.}
After conditioning on \(\mathsf T_n\), assume that, for every selector path
\(u^n\in\{0,1\}^n\), there exists \(\ell_{u^n}\in\mathcal L\) such that the
corresponding loss path satisfies
\[
L_t^+(u^n)
=
\ell_{u^n}\!\left(
\mathbf z_t(2u_1-1,\ldots,2u_{t-1}-1)
\right),
\qquad
t=1,\ldots,n.
\]
Here \(\mathbf z\) is the \(\mathcal Z\)-valued tree induced by the fixed
realization of \(\mathsf T_n\).  If the loss depends on a history, a
post-update state, or learner randomness, those quantities are included in the
loss-relevant object labeling \(\mathbf z\).

\begin{theorem}[Sequential pattern-growth bound]
\label{thm:online-seq-growth}
Under the pathwise realizability assumption above,
\begin{align}
\sum_{t=1}^n
I(L_t^+;U_t\mid G_{t-1}^{\mathrm{on}})
\le
\log \Pi_{\rm seq}(\mathcal L,n).
\end{align}
If \(d=\operatorname{Ldim}(\mathcal L)<\infty\), then
\begin{align}
\sum_{t=1}^n
I(L_t^+;U_t\mid G_{t-1}^{\mathrm{on}})
\le
\log\!\left(
\sum_{i=0}^{\min\{d,n\}}\binom{n}{i}
\right).
\end{align}
In particular, if \(1\le d\le n\), then
\begin{align}
\sum_{t=1}^n
I(L_t^+;U_t\mid G_{t-1}^{\mathrm{on}})
\le
d\log\!\left(\frac{en}{d}\right).
\end{align}
If \(d=0\), the right-hand side is interpreted as \(0\).
\end{theorem}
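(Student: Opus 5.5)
The plan is to condition on the full counterfactual construction $\mathsf T_n$ and reduce the sum of roundwise SCMI terms to the entropy of the binary loss path. After conditioning on $\mathsf T_n$, every history, paired row and learner-randomness variable is fixed, so the loss path $L^+(U^n)=(L_1^+(U^n),\ldots,L_n^+(U^n))$ is a deterministic function of the i.i.d.\ uniform selector vector $U^n$. Moreover, $G_{t-1}^{\mathrm{on}}=(H_{t-1}(U_{<t}),\widetilde Z_t(U_{<t}))$ is a function of $(\mathsf T_n,U_{<t})$. The first step is to pass from $G_{t-1}^{\mathrm{on}}$ to the finer conditioning $(\mathsf T_n,U_{<t})$.

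Since $U_t$ is independent of $(\mathsf T_n,U_{<t})$, and $G_{t-1}^{\mathrm{on}}$ is a function of these, the chain rule gives
\[
I(L_t^+;U_t\mid G_{t-1}^{\mathrm{on}})\le I(L_t^+,\mathsf T_n,U_{<t};U_t\mid G_{t-1}^{\mathrm{on}})
= I(\mathsf T_n,U_{<t};U_t\mid G_{t-1}^{\mathrm{on}})+I(L_t^+;U_t\mid \mathsf T_n,U_{<t}).
\]
The first term vanishes: $U_t$ is independent of $(\mathsf T_n,U_{<t})$, and $G_{t-1}^{\mathrm{on}}$ is a function of them. Here one must check that the coupling makes the node-level randomness at round $t$ independent of the branch $U_t$, which is exactly the selector symmetry.

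Next, bound the conditional mutual information by entropy: $I(L_t^+;U_t\mid \mathsf T_n,U_{<t})\le H(L_t^+\mid \mathsf T_n,U_{<t})$. Because $L_{<t}^+$ is a function of $(\mathsf T_n,U_{<t})$, this is at most $H(L_t^+\mid \mathsf T_n,L_{<t}^+)$. Summing over $t$ and applying the chain rule gives
\[
\sum_t I(L_t^+;U_t\mid G_{t-1}^{\mathrm{on}})\le H(L^+_{1:n}\mid \mathsf T_n).
\]

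The main step is to bound this conditional entropy by $\log\Pi_{\rm seq}(\mathcal L,n)$. Fix $\mathsf T_n$ and let $\mathcal V$ be a minimal exact $0$-cover of $\mathcal L$ on the induced tree $\mathbf z$, of size at most $\mathcal N(0,\mathcal L,n)$. By pathwise realizability, for each path $u^n$ there is $\ell_{u^n}$ reproducing $L^+(u^n)$. Hence some code tree $\mathbf v^{m(u^n)}$ satisfies $L_t^+(u^n)=\mathbf v_t^{m}(\epsilon_{1:t-1})$ for all $t$.

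Define the random index $M=m(U^n)$ with a measurable tie-breaking rule. Then $L^+_t$ is a function of $(M,U_{<t})$. Applying the chain rule to $(M,L^+)$ rather than to $L^+$ alone gives
\[
\sum_t I(L_t^+;U_t\mid\mathsf T_n,U_{<t})\le \sum_t I(M;U_t\mid \mathsf T_n,U_{<t})=I(M;U^n\mid\mathsf T_n)\le H(M\mid\mathsf T_n)\le\log|\mathcal V|.
\]
The first inequality holds because $L_t^+$ is a function of $M$ given $U_{<t}$, so data processing applies. The equality is the chain rule, using the independence of the $U_t$. This route bypasses the intermediate entropy bound and is the version I would actually write. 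The step I expect to need the most care is the data-processing inequality conditional on $U_{<t}$, since $M$ depends on the future selectors; this is harmless because data processing only needs $L_t^+=f(M,U_{<t},\mathsf T_n)$. Averaging over $\mathsf T_n$ gives the first claim.

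The remaining claims follow by substituting the sequential Sauer--Shelah bound $\Pi_{\rm seq}\le\sum_{i\le\min\{d,n\}}\binom ni$ and the estimate $(en/d)^d$ recalled above. For $d=0$, the sum equals $1$, so the logarithm is $0$.
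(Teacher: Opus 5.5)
Your final argument is correct and is essentially the paper's proof. The paper proves $I(L_t^+;U_t\mid G_{t-1}^{\mathrm{on}})\le I(C,\Xi_t;U_t\mid G_{t-1}^{\mathrm{on}})=I(C;U_t\mid\Xi_t)$ in one display, with $\Xi_t=(\mathsf T_n,U_{<t})$ and code index $C$; you split this into a chain-rule step and a data-processing step through $M$. It then sums to $I(C;U^n\mid\mathsf T_n)\le H(C\mid\mathsf T_n)\le\log\Pi_{\rm seq}(\mathcal L,n)$ and applies sequential Sauer--Shelah exactly as you do.

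Two small remarks. First, the independence $U_t\perp(\mathsf T_n,U_{<t})$ comes from building the coupling with $\mathsf T_n$ independent of $U^n$; it is stronger than selector symmetry, which only conditions on $G_{t-1}^{\mathrm{on}}$. Second, your discarded detour through $H(L^+_{1:n}\mid\mathsf T_n)$ cannot be completed in general, because even a single code tree can trace $2^{n-1}$ distinct loss sequences along different paths. Routing through the code index is therefore essential, not just a shortcut.
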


\begin{proof}
We regard the full counterfactual construction \(\mathsf T_n\) as generated independently of the selector bits \(U^n\); the realized online process is obtained by evaluating this construction along the random path \(U^n\).  For each realization \(\tau\) of \(\mathsf T_n\), let
\[
\mathcal V(\tau)
=
\{\mathbf v^1,\ldots,\mathbf v^{M(\tau)}\}
\]
be a minimum exact \(0\)-cover for \(\mathcal L\) on the tree induced by \(\tau\).  Then
\[
M(\tau)
=
\mathcal N(0,\mathcal L,\mathbf z(\tau))
\le
\Pi_{\rm seq}(\mathcal L,n).
\]
For each selector path \(u^n\), choose a code index
\[
C=C(u^n,\tau)\in[M(\tau)]
\]
using a fixed tie-breaking rule so that the selected code tree reproduces the loss path along \(u^n\).  Along the realized selector path this gives
\[
L_t^+
=
\mathbf v^C_t(2U_1-1,\ldots,2U_{t-1}-1),
\qquad
 t=1,\ldots,n.
\]
Thus, conditional on \((\mathsf T_n,U_{<t})\), the variable \(L_t^+\) is a deterministic function of \(C\).

Set
\[
\Xi_t\coloneqq(\mathsf T_n,U_{<t}).
\]
The conditioning object
\[
G_{t-1}^{\mathrm{on}}
=
(H_{t-1}^{\mathrm{on}},\widetilde Z_t)
\]
is a measurable function of \(\Xi_t\).  Since the selector bits are generated independently of \(\mathsf T_n\), the current selector \(U_t\) is independent of \(\Xi_t\).  Because \(G_{t-1}^{\mathrm{on}}\) is a function of \(\Xi_t\), this also implies
\[
U_t\perp \Xi_t\mid G_{t-1}^{\mathrm{on}}.
\]
Therefore,
\[
\begin{aligned}
I(L_t^+;U_t\mid G_{t-1}^{\mathrm{on}})
&\le
I(C,\Xi_t;U_t\mid G_{t-1}^{\mathrm{on}})\\
&=
I(\Xi_t;U_t\mid G_{t-1}^{\mathrm{on}})
+
I(C;U_t\mid \Xi_t,G_{t-1}^{\mathrm{on}})\\
&=
I(C;U_t\mid \Xi_t)\\
&=
I(C;U_t\mid \mathsf T_n,U_{<t}).
\end{aligned}
\]
Here the first term in the chain rule vanishes by the conditional independence above, and the third line uses that \(G_{t-1}^{\mathrm{on}}\) is already determined by \(\Xi_t\).
Summing over \(t\) and using the chain rule for MI yields
\[
\sum_{t=1}^n
I(L_t^+;U_t\mid G_{t-1}^{\mathrm{on}})
\le
\sum_{t=1}^n
I(C;U_t\mid \mathsf T_n,U_{<t})
=
I(C;U^n\mid \mathsf T_n).
\]
Finally,
\[
\begin{aligned}
I(C;U^n\mid \mathsf T_n)
&\le
H(C\mid \mathsf T_n)\\
&=
\mathbb E\!\left[H(C\mid \mathsf T_n=\tau)\right]\\
&\le
\mathbb E\!\left[\log M(\tau)\right]
\le
\log \Pi_{\rm seq}(\mathcal L,n).
\end{aligned}
\]
Thus
\[
\sum_{t=1}^n
I(L_t^+;U_t\mid G_{t-1}^{\mathrm{on}})
\le
\log \Pi_{\rm seq}(\mathcal L,n).
\]
Combining this with the sequential Sauer--Shelah lemma gives, for
\(d=\operatorname{Ldim}(\mathcal L)<\infty\),
\[
\sum_{t=1}^n
I(L_t^+;U_t\mid G_{t-1}^{\mathrm{on}})
\le
\log\!\left(\sum_{i=0}^{\min\{d,n\}}\binom{n}{i}\right).
\]
If \(1\le d\le n\), this is further bounded by
\[
 d\log\!\left(\frac{en}{d}\right),
\]
while for \(d=0\) the logarithmic term is \(0\).
\end{proof}

\paragraph{Interpretation.}
The proof is the sequential counterpart of the batch function-CMI--VC argument.
In the batch case, after conditioning on the fixed paired supersample, the
random prediction pattern takes values in a finite set whose size is
controlled by VC dimension.  In the online case, conditioning on the full
counterfactual experiment leaves a finite set of pathwise loss-code trees
whose size is controlled by Littlestone dimension.  The SCMI term is therefore
controlled by the number of loss patterns visible along selector paths, not
by the amount of information stored in the learner's state or posterior.

\section{Fast-rate methods by Bernstein and Shifted-Rademacher bounds}
\label{app:bernstein-consequences}

The slow-rate theorem controls a signed selector--loss correlation by the square root of an information term. Fast rates require additional structure. This appendix records the two mechanisms used later: the Bernstein refinement and its explicit consequences in Appendices~\ref{app:proofs-bernstein-framework}--\ref{subsec:bernstein-additional-results}, and the Shifted-Rademacher refinement in Appendix~\ref{app:wang-refinement}, where nonnegativity lets the training risk appear multiplicatively.

\subsection{Proof of the Bernstein bound}
\label{app:proofs-bernstein-framework}
\label{app:proof-general-bernstein}

If \(b=0\), the signed variables vanish almost surely and the Bernstein inequality is trivial.  In the rest of this subsection assume \(b>0\) and define the standard Bennett--Bernstein envelope
\begin{align}
\psi_b(\lambda) \coloneqq  \frac{e^{\lambda b}-\lambda b-1}{b^2},
\qquad 0 \le \lambda < \frac{3}{b}.
\label{eq:def-psi-b}
\end{align}
The following one-step inequality is required in the proof.

\begin{lemma}
\label{lem:one-step-bernstein}
Let \(G\) be a random element, let \(\ell\in[-b,b]\), and let \(U\in\{0,1\}\) satisfy
\[
\mathbb P(U=0\mid G)=\mathbb P(U=1\mid G)=\frac12
\qquad\text{a.s.}
\]
If \(\varepsilon\coloneqq 2U-1\), then for every \(\lambda\in[0,3/b)\),
\begin{align}
\lambda\,\mathbb E\!\left[\mathbb E\!\left[\varepsilon \ell\middle|G\right]\right]
\le
I(\ell;U\mid G)
+
\psi_b(\lambda)\,
\mathbb E\!\left[\mathbb E\!\left[\ell^2\middle|G\right]\right].
\label{eq:one-step-bernstein}
\end{align}
Moreover,
\begin{align}
\psi_b(\lambda)
\le
\frac{\lambda^2}{2(1-\lambda b/3)}.
\label{eq:psi-b-bound}
\end{align}
\end{lemma}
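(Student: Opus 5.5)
The plan is to follow the template of the proof of Theorem~\ref{thm:general-rademacher}. First apply the conditional Donsker--Varadhan formula to the law of \((\ell,U)\) given \(G\). Then replace the Hoeffding-type \(\cosh\) estimate by a Bennett-type estimate that keeps the conditional second moment.

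\textbf{Step 1 (variational step).} Fix a value \(g\) of \(G\). Let the reference law be the conditional product law, under which \(\ell\) keeps its conditional marginal and \(U\) is replaced by an independent fair copy \(U'\), with \(\varepsilon'=2U'-1\). Since \(U\) is conditionally uniform given \(G\), this product law is exactly \(P_{\ell\mid G=g}P_{U\mid G=g}\). Its KL divergence to the joint conditional law therefore integrates (over \(g\)) to \(I(\ell;U\mid G)\). Using the test function \(f(x,u)=\lambda(2u-1)x\), the formula gives
\[
\lambda\,\mathbb E[\varepsilon\ell\mid G=g]\le d(g)+\log\mathbb E\!\left[e^{\lambda\varepsilon'\ell}\mid G=g\right],
\]
and averaging over \(g\) yields \(\lambda\,\mathbb E[\mathbb E[\varepsilon\ell\mid G]]\le I(\ell;U\mid G)+\mathbb E[\log\mathbb E[e^{\lambda\varepsilon'\ell}\mid G]]\).

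\textbf{Step 2 (log-MGF bound).} This is the one genuinely new estimate. Conditionally on \((\ell,G)\), \(\varepsilon'\) is a Rademacher sign, so \(\mathbb E[e^{\lambda\varepsilon'\ell}\mid\ell,G]=\cosh(\lambda\ell)\). For \(|x|\le b\) and \(\lambda\ge0\) we have the pointwise inequality
\[
\cosh(\lambda x)\le 1+\psi_b(\lambda)x^2 .
\]
To see this, expand \(\cosh(\lambda x)-1=\sum_{k\ge1}\lambda^{2k}x^{2k}/(2k)!\). Bound \(x^{2k}\le x^2 b^{2k-2}\), which gives at most \(x^2(\cosh(\lambda b)-1)/b^2\). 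This is at most \(x^2\psi_b(\lambda)\), because \(e^{y}-y-1\ge\cosh y-1\) for \(y\ge0\): the difference equals \(\sinh y-y\ge0\). Taking conditional expectations and using \(\log(1+z)\le z\) gives \(\log\mathbb E[\cosh(\lambda\ell)\mid G]\le\psi_b(\lambda)\mathbb E[\ell^2\mid G]\). Substituting into Step 1 proves \eqref{eq:one-step-bernstein}. Only \(\lambda\ge0\) is needed, and \(\lambda=0\) is trivial.

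\textbf{Step 3 (envelope bound \eqref{eq:psi-b-bound}).} Set \(y=\lambda b\in[0,3)\) and write \(e^y-y-1=\sum_{k\ge2}y^k/k!\). Use \(k!\ge 2\cdot3^{k-2}\) for \(k\ge2\), so the sum is at most \(\frac{y^2}{2}\sum_{j\ge0}(y/3)^j=\frac{y^2}{2(1-y/3)}\). Dividing by \(b^2\) gives \(\psi_b(\lambda)\le\lambda^2/(2(1-\lambda b/3))\).

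The main obstacle is Step 2, namely getting the variance-sensitive \(\psi_b\) factor rather than \(\lambda^2b^2/2\). Once one notices that the symmetric sign kills the odd terms and reduces everything to \(\cosh\), it is routine. Measurability and regular conditional laws on standard Borel spaces are handled exactly as in the proof of Theorem~\ref{thm:general-rademacher}.
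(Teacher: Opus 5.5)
Your proof is correct and has the same skeleton as the paper's: conditional Donsker--Varadhan against the product law with an independent fair selector, then a log-mgf bound, then the scalar Bernstein envelope. The differences are in how the two scalar inequalities are proved.

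For the mgf step, the paper uses only that $X=\varepsilon'\ell$ is conditionally centered with $|X|\le b$. It shows by calculus that $r(u)=(e^u-u-1)/u^2$ is nondecreasing, and then applies the Bennett envelope $e^{\lambda x}-\lambda x-1\le (x^2/b^2)(e^{\lambda b}-\lambda b-1)$. You instead use the symmetry of the Rademacher sign to reduce to $\cosh(\lambda\ell)$, so only even powers appear and the termwise bound $x^{2k}\le x^2b^{2k-2}$ finishes the step. Your route is more elementary, and it gives the slightly smaller factor $(\cosh(\lambda b)-1)/b^2\le\psi_b(\lambda)$. However, it depends on the reference law being symmetric. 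The paper's Bennett argument works for any conditionally centered increment bounded by $b$, including non-symmetric ones such as the differences $M_s(a)$ in the ordinary-MI bandit baseline.

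For \eqref{eq:psi-b-bound}, your power-series bound via $k!\ge 2\cdot 3^{k-2}$ is the classical proof. The paper deliberately avoids series: it checks that $u^2/(2(1-u/3))-(e^u-u-1)$ vanishes to first order at $0$ and has nonnegative second derivative $(1-u/3)^{-3}-e^u$ on $[0,3)$. Both arguments are valid and equally short.
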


\begin{proof}
The conditional Donsker--Varadhan variational formula applied to the conditional law of \((\ell,U)\) given \(G\) gives
\begin{align}
I(\ell;U\mid G)
&\ge
\lambda\,\mathbb E\!\left[\mathbb E[\varepsilon \ell\mid G]\right]
-
\mathbb E\!\left[
\log \mathbb E\!\left[
e^{\lambda \varepsilon'\ell}
\middle|G
\right]\right],
\label{eq:proof-bernstein-dv}
\end{align}
where \(\varepsilon'\) is an independent Rademacher sign under the product conditional law given \(G\). We use the standard Bennett mgf envelope. Let
\[
\varphi(u)\coloneqq e^u-u-1,
\qquad
r(u)\coloneqq
\begin{cases}
\varphi(u)/u^2, & u\ne 0,\\[1mm]
1/2, & u=0.
\end{cases}
\]
An elementary calculus check shows that \(r\) is nondecreasing on \(\mathbb R\). Indeed, for \(u\ne0\),
\[
r'(u)=\frac{a(u)}{u^3},
\qquad
 a(u)\coloneqq (u-2)e^u+u+2.
\]
Since \(a(0)=a'(0)=0\) and \(a''(u)=ue^u\), we have \(a(u)\ge0\) for \(u>0\) and \(a(u)\le0\) for \(u<0\). Hence \(a(u)/u^3\ge0\), so \(r'(u)\ge0\). Therefore, for every \(x\in[-b,b]\),
\[
\varphi(\lambda x)
=
\lambda^2x^2 r(\lambda x)
\le
\lambda^2x^2 r(\lambda b)
=
\frac{x^2}{b^2}\varphi(\lambda b).
\]
Set \(X\coloneqq\varepsilon'\ell\). Under the product conditional law, \(\mathbb E[X\mid G]=0\) and \(|X|\le b\). Therefore
\begin{align*}
\mathbb E[e^{\lambda \varepsilon'\ell}\mid G]
&=
1+\mathbb E[\varphi(\lambda X)\mid G]  \\
&\le
1+\frac{\varphi(\lambda b)}{b^2}\,\mathbb E[\ell^2\mid G]
\le
\exp\!\left\{\psi_b(\lambda)\,\mathbb E[\ell^2\mid G]\right\}.
\end{align*}
Taking logarithms and substituting into \eqref{eq:proof-bernstein-dv} proves \eqref{eq:one-step-bernstein}.

It remains to bound \(\psi_b\). For \(u\in[0,3)\), the standard Bernstein scalar inequality is
\[
\varphi(u)=e^u-u-1\le \frac{u^2}{2(1-u/3)}.
\]
For completeness, this follows without using a power series: if
\(F(u)\coloneqq u^2/(2(1-u/3))-\varphi(u)\), then
\(F(0)=F'(0)=0\) and
\[
F''(u)=(1-u/3)^{-3}-e^u\ge0,
\qquad 0\le u<3,
\]
because \(e^u\le(1-u/3)^{-3}\), equivalently
\(u\le -3\log(1-u/3)\). Substituting \(u=\lambda b\) and dividing by \(b^2\) proves \eqref{eq:psi-b-bound}.
\end{proof}

\begin{proof}[Proof of Theorem~\ref{thm:general-bernstein}]
By the definitions of \(R_n^{\mathrm{ho}}\) and \(R_n^{\mathrm{tr}}\), the holdout--training excess gap is the signed coordinate gap:
\begin{align}
R_n^{\mathrm{ho}}-R_n^{\mathrm{tr}}
=
\frac{2}{n}\sum_{t=1}^n
\mathbb E\!\left[
\mathbb E\!\left[
\varepsilon_t e_{t,0}(w^\star)
\middle| G_{t-1}
\right]\right].
\label{eq:bernstein-direct-gap}
\end{align}
This is the same one-coordinate calculation as Lemma~\ref{lem:basic-seq-sym}, applied to the signed excess losses.  The equality is purely algebraic and linear in the evaluated coordinate loss, so nonnegativity of the original loss is not needed here.

Apply Lemma~\ref{lem:one-step-bernstein} at each time \(t\) with
\[
G=G_{t-1},
\qquad
\ell=e_{t,0}(w^\star),
\qquad
U=U_t.
\]
Summing the resulting inequalities and using \eqref{eq:bernstein-direct-gap} gives
\begin{align}
R_n^{\mathrm{ho}}-R_n^{\mathrm{tr}}
&\le
\frac{2}{\lambda n}\sum_{t=1}^n I(e_{t,0}(w^\star);U_t\mid G_{t-1})
+
\frac{2\psi_b(\lambda)}{\lambda}\,
\frac1n\sum_{t=1}^n
\mathbb E\!\left[
\mathbb E\!\left[(e_{t,0}(w^\star))^2\middle|G_{t-1}\right]\right].
\end{align}
By \eqref{eq:psi-b-bound},
\[
\frac{2\psi_b(\lambda)}{\lambda}
\le
\frac{\lambda}{1-\lambda b/3}.
\]
Using the Bernstein condition \eqref{eq:general-bernstein-condition} therefore yields
\begin{align}
R_n^{\mathrm{ho}}-R_n^{\mathrm{tr}}
&\le
\frac{2}{\lambda n}\sum_{t=1}^n I(e_{t,0}(w^\star);U_t\mid G_{t-1})
+
\frac{B\lambda}{1-\lambda b/3}\,R_n^{\mathrm{ho}}.
\end{align}
Rearranging proves \eqref{eq:bernstein-fast-main}.
\end{proof}

\subsection{Additional results for the Bernstein bound}
\label{subsec:bernstein-additional-results}

\begin{corollary}[Explicit fast-rate form under the Bernstein condition]
\label{cor:bernstein-explicit}
Under the assumptions of Theorem~\ref{thm:general-bernstein}, let
\begin{align}
\lambda \coloneqq  \frac{C}{B + Cb/3},\qquad C\in(0,1).
\label{eq:bernstein-lambda-choice}
\end{align}
Then
\begin{align}
R_n^{\mathrm{ho}}
\le
\frac{1}{1-C}\,R_n^{\mathrm{tr}}
+
\frac{2(B+Cb/3)}{C(1-C)n}\sum_{t=1}^n I(e_{t,0}(w^\star);U_t \mid G_{t-1}),
\label{eq:bernstein-explicit-C}
\end{align}
and, in particular, choosing \(C=1/2\) gives
\begin{align}
R_n^{\mathrm{ho}}
\le
2R_n^{\mathrm{tr}}
+
\frac{8(B+b/6)}{n}\sum_{t=1}^n I(e_{t,0}(w^\star);U_t \mid G_{t-1}).
\label{eq:bernstein-explicit-half}
\end{align}
\end{corollary}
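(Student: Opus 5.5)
The plan is to substitute the specific choice \eqref{eq:bernstein-lambda-choice} into Theorem~\ref{thm:general-bernstein} and simplify; no new probabilistic argument is needed. Write $\mathcal I\coloneqq\sum_{t=1}^n I(e_{t,0}(w^\star);U_t\mid G_{t-1})$. First I will check admissibility of $\lambda=C/(B+Cb/3)$. Since $B>0$ and $C\in(0,1)$, we have $\lambda b/3 = (Cb/3)/(B+Cb/3)<1$, so $0<\lambda<3/b$. Moreover
\[
1-\frac{\lambda b}{3}=\frac{B}{B+Cb/3},
\qquad
\frac{B\lambda}{1-\lambda b/3}=\frac{B\cdot C/(B+Cb/3)}{B/(B+Cb/3)}=C,
\]
so $1-B\lambda/(1-\lambda b/3)=1-C>0$. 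Both side conditions of Theorem~\ref{thm:general-bernstein} therefore hold. The case $b=0$ is trivial, as stated there.

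Next I will plug these values into \eqref{eq:bernstein-fast-main}. This gives
\[
(1-C)R_n^{\mathrm{ho}}\le R_n^{\mathrm{tr}}+\frac{2(B+Cb/3)}{Cn}\,\mathcal I .
\]
Dividing by $1-C>0$ yields \eqref{eq:bernstein-explicit-C}. For \eqref{eq:bernstein-explicit-half}, set $C=1/2$. Then $1/(1-C)=2$, and
\[
\frac{2(B+b/6)}{(1/2)(1/2)}=8(B+b/6).
\]

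The only point requiring care is the algebraic identity $B\lambda/(1-\lambda b/3)=C$, which is exactly why this $\lambda$ was chosen. One could also note that this choice is essentially optimal up to constants, but that is not needed for the statement.
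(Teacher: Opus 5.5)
Your proposal is correct and matches the paper's proof: you check that $\lambda<3/b$ and that $B\lambda/(1-\lambda b/3)=C$, substitute into the inequality of Theorem~\ref{thm:general-bernstein}, divide by $1-C>0$, and then set $C=1/2$. Your explicit handling of the $b=0$ case is consistent with the theorem's statement, since both sides then vanish.
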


\begin{proposition}[Bounded unit-interval excess process]
\label{prop:bernstein-bounded-special}
Assume, in addition to the setup of Theorem~\ref{thm:general-bernstein}, that
\[
0 \le e_{t,0}(w^\star),e_{t,1}(w^\star) \le 1
\qquad\text{a.s. for all }t,
\]
and that
\begin{align}
R_n^{\mathrm{tr}} \le R_n^{\mathrm{ho}}.
\label{eq:bernstein-bounded-ordering}
\end{align}
Then the Bernstein condition \eqref{eq:general-bernstein-condition} holds with
\(B=1\). Consequently,
\begin{align}
R_n^{\mathrm{ho}}
\le
2R_n^{\mathrm{tr}}
+
\frac{28}{3n}\sum_{t=1}^n I(e_{t,0}(w^\star);U_t \mid G_{t-1}).
\label{eq:bernstein-bounded-special}
\end{align}
\end{proposition}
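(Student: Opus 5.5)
The proof has two parts: first verify the Bernstein condition \eqref{eq:general-bernstein-condition} with $B=1$, then apply Corollary~\ref{cor:bernstein-explicit} with $b=1$ and $C=1/2$.

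\textbf{Step 1: second moments.} Since $0\le e_{t,0}(w^\star)\le 1$, we have $(e_{t,0}(w^\star))^2\le e_{t,0}(w^\star)$ pointwise. Summing over $t$ gives
\[
\frac1n\sum_{t=1}^n\E\!\left[\E[(e_{t,0}(w^\star))^2\mid G_{t-1}]\right]\le \frac1n\sum_{t=1}^n\E[e_{t,0}(w^\star)].
\]

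\textbf{Step 2: identify the right-hand side with $R_n^{\mathrm{ho}}$.} By selector symmetry \eqref{eq:selector-sym}, $U_t$ is conditionally uniform given $G_{t-1}$. The holdout and selected coordinates therefore average to the mean of the two coordinates:
\[
\E[e_{t,0}(w^\star)]+\E[e_{t,1}(w^\star)]=\E[e_{t,1-U_t}(w^\star)]+\E[e_{t,U_t}(w^\star)].
\]
Row-wise exchangeability \eqref{eq:row-exchange-general} should make $\E[e_{t,0}]=\E[e_{t,1}]$. This is where I expect the main obstacle. The two coordinate excesses involve $W_t$, and $W_t$ depends on which coordinate was selected, so they are not automatically equal in distribution. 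The clean way to use exchangeability is the $\varphi_t$ construction from the proof of Lemma~\ref{lem:basic-seq-sym}. By \eqref{eq:proof-basic-tr}--\eqref{eq:proof-basic-diag}, $\E[e_{t,0}]$ is a combination of the diagonal term $\varphi_t(H,Z_{t,0},Z_{t,0})$ and the cross term $\varphi_t(H,Z_{t,1},Z_{t,0})$, with weight $1/2$ each. The same holds for $\E[e_{t,1}]$, and exchangeability equates them. Writing $\bar e$ for the common value, we get $\bar e=(R^{\mathrm{ho}}_t+R^{\mathrm{tr}}_t)/2$ round by round. Averaging over $t$ and using the ordering assumption \eqref{eq:bernstein-bounded-ordering} gives
\[
\frac1n\sum_t\E[e_{t,0}]=\frac{R_n^{\mathrm{ho}}+R_n^{\mathrm{tr}}}{2}\le R_n^{\mathrm{ho}}.
\]
This establishes the condition with $B=1$. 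Nonnegativity gives $R_n^{\mathrm{ho}}\ge0$ automatically. This ordering assumption is exactly why \eqref{eq:bernstein-bounded-ordering} is imposed.

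\textbf{Step 3: constants.} With $B=1$, $b=1$ and $C=1/2$, \eqref{eq:bernstein-explicit-half} gives the coefficient $8(1+1/6)/n=28/(3n)$, which is \eqref{eq:bernstein-bounded-special}. The corollary's choice is admissible: $\lambda=C/(B+Cb/3)=3/7<3$, and $1-B\lambda/(1-\lambda/3)=1/2>0$.
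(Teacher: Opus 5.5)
Your proposal is correct and follows essentially the paper's argument: bound $(e_{t,0}(w^\star))^2$ by $e_{t,0}(w^\star)$, use row-wise exchangeability (via the $\varphi_t$ branch decomposition of Lemma~\ref{lem:basic-seq-sym}) to identify $\frac1n\sum_t\E[e_{t,0}(w^\star)]$ with $(R_n^{\mathrm{tr}}+R_n^{\mathrm{ho}})/2\le R_n^{\mathrm{ho}}$, and then plug $B=b=1$, $C=1/2$ into Corollary~\ref{cor:bernstein-explicit}. The only cosmetic difference is the order of steps. You first show $\E[e_{t,0}]=\E[e_{t,1}]$ and then use the pointwise identity $e_{t,0}+e_{t,1}=e_{t,U_t}+e_{t,1-U_t}$, which holds for any value of $U_t$ and does not actually need selector symmetry. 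The paper instead splits $\E[e_{t,0}]$ directly on the events $\{U_t=0\}$ and $\{U_t=1\}$.
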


\paragraph{Proof of Corollary~\ref{cor:bernstein-explicit}.}
\begin{proof}
Fix \(C\in(0,1)\) and choose \(\lambda\) as in \eqref{eq:bernstein-lambda-choice}. First,
\[
\lambda=\frac{C}{B+Cb/3}<\frac3b,
\]
so Theorem~\ref{thm:general-bernstein} applies. Moreover,
\[
1-\frac{\lambda b}{3}
=
\frac{B}{B+Cb/3},
\qquad
\frac{B\lambda}{1-\lambda b/3}=C.
\]
Substituting this into \eqref{eq:bernstein-fast-main} gives \eqref{eq:bernstein-explicit-C}. Setting \(C=1/2\) gives \eqref{eq:bernstein-explicit-half}.
\end{proof}

\paragraph{Proof of Proposition~\ref{prop:bernstein-bounded-special}.}
\label{app:proof-bernstein-bounded-special}
\begin{proof}
We first verify the second-moment condition with \(B=1\). Since
\(0\le e_{t,0}(w^\star)\le1\),
\begin{align}
(e_{t,0}(w^\star))^2 \le e_{t,0}(w^\star).
\label{eq:bounded-special-step1}
\end{align}
Hence
\begin{align}
\frac1n\sum_{t=1}^n
\mathbb E\!\left[
\mathbb E\!\left[(e_{t,0}(w^\star))^2\middle|G_{t-1}\right]\right]
&\le
\frac1n\sum_{t=1}^n
\mathbb E\!\left[
\mathbb E\!\left[e_{t,0}(w^\star)\middle|G_{t-1}\right]\right].
\label{eq:bounded-special-square-reduction}
\end{align}
Conditioning on \(U_t\) and using row-wise exchangeability as in Lemma~\ref{lem:basic-seq-sym},
\begin{align}
\mathbb E\!\left[
\mathbb E\!\left[e_{t,0}(w^\star)\middle|G_{t-1}\right]\right]
=
\frac12\mathbb E\!\left[
\mathbb E\!\left[e_{t,1-U_t}(w^\star)\middle|G_{t-1}\right]\right]
+
\frac12\mathbb E\!\left[
\mathbb E\!\left[e_{t,U_t}(w^\star)\middle|G_{t-1}\right]\right].
\label{eq:bounded-special-branch-average}
\end{align}
Summing over \(t\) gives
\begin{align}
\frac1n\sum_{t=1}^n
\mathbb E\!\left[
\mathbb E\!\left[e_{t,0}(w^\star)\middle|G_{t-1}\right]\right]
=
\frac{R_n^{\mathrm{tr}}+R_n^{\mathrm{ho}}}{2}
\le
R_n^{\mathrm{ho}},
\end{align}
where the last inequality uses \eqref{eq:bernstein-bounded-ordering}. Combining this with
\eqref{eq:bounded-special-square-reduction} verifies \eqref{eq:general-bernstein-condition} with \(B=1\).

Since the bounded unit-interval assumptions also give \(b=1\), Corollary~\ref{cor:bernstein-explicit} with \(B=1\) and \(b=1\) yields
\begin{align*}
R_n^{\mathrm{ho}}
&\le
2R_n^{\mathrm{tr}}
+
\frac{8(1+1/6)}{n}\sum_{t=1}^n I(e_{t,0}(w^\star);U_t\mid G_{t-1})  \\
&=
2R_n^{\mathrm{tr}}
+
\frac{28}{3n}\sum_{t=1}^n I(e_{t,0}(w^\star);U_t\mid G_{t-1}),
\end{align*}
which is \eqref{eq:bernstein-bounded-special}.
\end{proof}

\subsection{Shifted-Rademacher fast-rate bounds}
\label{app:wang-refinement}
\label{subsec:general-wang}

Throughout this subsection we work under the selected-coordinate update condition~\eqref{eq:selected-feedback-kernel} and row-wise exchangeability~\eqref{eq:row-exchange-general}. The same one-coordinate symmetrization can be shifted so that the training risk appears multiplicatively. This gives a sequential fast-rate refinement for nonnegative losses: once the training risk is small, the holdout risk is controlled by a linear information penalty rather than by a square-root penalty. The following results are straightforward sequential extensions of the corresponding arguments in \citet{wang23}. For completeness, we provide the proofs.

\begin{theorem}[General Shifted-Rademacher fast-rate bound]
\label{thm:general-wang}
Suppose that the SCMI setting satisfies the selected-coordinate update condition~\eqref{eq:selected-feedback-kernel} and row-wise exchangeability~\eqref{eq:row-exchange-general}. Assume that \(\ell : \mathcal W \times \mathcal Z \to [0,1]\). Let \(C,\eta > 0\) satisfy
\begin{align}
e^{2\eta} + e^{-2\eta(C+1)} \le 2.
\label{eq:wang-feasible}
\end{align}
Then
\begin{align}
L_n^{\mathrm{ho}}
&\le
(1+C)L_n^{\mathrm{tr}}
+
\frac{1}{\eta n}\sum_{t=1}^n I(L_t^+;U_t \mid G_{t-1})
\label{eq:wang-main}
\\
&\le
(1+C)L_n^{\mathrm{tr}}
+
\frac{1}{\eta n}\sum_{t=1}^n I(W_t;U_t \mid G_{t-1}),
\label{eq:wang-directed}
\end{align}
where the second line follows by data processing.
\end{theorem}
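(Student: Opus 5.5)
The plan is to run the same one-round Donsker--Varadhan argument as in the proof of Theorem~\ref{thm:general-rademacher}, but with a shifted test function. The shift makes the training loss enter multiplicatively, in the style of \citet{wang23}.

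\textbf{Per-round bound.} Fix \(t\) and work conditionally on \(G_{t-1}\). Under the product reference law, the selector is replaced by an independent uniform copy \(U_t'\), with \(\varepsilon_t'=2U_t'-1\). I apply the variational formula to the conditional law of \((L_t^+,U_t)\) with a test function \(f(\ell,u)\) chosen so that \(\E[f(L_t^+,U_t)]\) equals \(\eta\) times a shifted version of the signed correlation. Concretely, I take
\[
f(\ell,u)=\eta\bigl(2\varepsilon\ell-C(1-\varepsilon)\ell\bigr),\qquad \varepsilon=2u-1.
\]
When \(u=1\), \(\ell\) is the ghost loss and the exponent is \(2\eta\ell\). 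When \(u=0\), \(\ell\) is the selected loss and the exponent is \(-2\eta(1+C)\ell\).

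\textbf{Checking the expectation term.} By the branch calculation in the proof of Lemma~\ref{lem:basic-seq-sym}, the \(U_t=1\) branch is \(\varphi_t(H_{t-1},Z_{t,1},Z_{t,0})\) and the \(U_t=0\) branch is \(\varphi_t(H_{t-1},Z_{t,0},Z_{t,0})\). Using row-wise exchangeability~\eqref{eq:row-exchange-general}, these equal the per-round holdout loss \(\E[Q_t\ell(W_t,Z_t')]\) and training loss \(\E[Q_t\ell(W_t,Z_t)]\), respectively. Each branch has probability \(1/2\), so
\[
\E[f(L_t^+,U_t)]=\eta\Bigl(\E[Q_t\ell(W_t,Z_t')]-(1+C)\E[Q_t\ell(W_t,Z_t)]\Bigr).
\]

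\textbf{Controlling the log-mgf.} Under the product law, the log-moment term is
\[
\E\Bigl[\log\tfrac12\bigl(e^{2\eta L_t^+}+e^{-2\eta(1+C)L_t^+}\bigr)\Bigm| G_{t-1}\Bigr].
\]
The function \(x\mapsto e^{2\eta x}+e^{-2\eta(C+1)x}\) is convex. It equals \(2\) at \(x=0\) and is at most \(2\) at \(x=1\) by the feasibility condition~\eqref{eq:wang-feasible}. Hence it is at most \(2\) on all of \([0,1]\), and since \(L_t^+\in[0,1]\) the log-mgf term is \(\le 0\).

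Combining the two steps, the variational inequality gives
\[
\eta\Bigl(\E[Q_t\ell(W_t,Z_t')]-(1+C)\E[Q_t\ell(W_t,Z_t)]\Bigr)\le I(L_t^+;U_t\mid G_{t-1}).
\]

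\textbf{Summing.} Dividing by \(\eta n\) and summing over \(t\) yields~\eqref{eq:wang-main}, by the tower-property identification of \(L_n^{\rm ho}\) and \(L_n^{\rm tr}\) used in Lemma~\ref{lem:basic-seq-sym}. The bound~\eqref{eq:wang-directed} then follows from the data-processing step already used for Theorem~\ref{thm:general-rademacher}: given \(G_{t-1}\), \(L_t^+\) is a measurable function of \(W_t\).

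\textbf{Main obstacle.} The delicate step is the expectation term. It must be justified that, conditionally on \(G_{t-1}\), the \(U_t=1\) and \(U_t=0\) branches of \(L_t^+\) reproduce the holdout and training losses after averaging. This needs the exchangeability swap together with the fact that \(W_t\) depends only on the selected coordinate, and so it holds only in the outer expectation, not pointwise in \(G_{t-1}\). That is acceptable, because the variational bound is applied in integrated form.
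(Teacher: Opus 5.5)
Your proposal is correct and is essentially the paper's proof. Your test function $\eta\bigl(2\varepsilon\ell-C(1-\varepsilon)\ell\bigr)=\eta\bigl((C+2)\varepsilon-C\bigr)\ell$ is exactly the paper's $\eta(C+2)\widetilde\varepsilon\ell$ with $\widetilde\varepsilon=\varepsilon-C/(C+2)$, and your branch identification, convexity/endpoint moment bound, and data-processing step are Lemmas~\ref{lem:general-shifted-symmetrization} and~\ref{lem:one-step-wang} inlined, with one cosmetic slip: the log-moment term is $\E\bigl[\log\E[\tfrac12(e^{2\eta L_t^+}+e^{-2\eta(1+C)L_t^+})\mid G_{t-1}]\bigr]$, logarithm outside the conditional expectation over $L_t^+$, though your pointwise bound by $1$ on $[0,1]$ makes it nonpositive either way.
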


\begin{corollary}[Explicit Shifted-Rademacher form]
\label{cor:general-wang-explicit}
Under the assumptions of Theorem~\ref{thm:general-wang}, let
\[
    J_L\coloneqq \sum_{t=1}^n I(L_t^+;U_t\mid G_{t-1}),
    \qquad
    J_W\coloneqq \sum_{t=1}^n I(W_t;U_t\mid G_{t-1}).
\]
Then
\begin{align}
L_n^{\mathrm{ho}}
&\le
L_n^{\mathrm{tr}}
+
\frac{8}{n}J_L
+
4\sqrt{\frac{2L_n^{\mathrm{tr}}}{n}J_L}
\label{eq:wang-explicit-loss}
\\
&\le
L_n^{\mathrm{tr}}
+
\frac{8}{n}J_W
+
4\sqrt{\frac{2L_n^{\mathrm{tr}}}{n}J_W}.
\label{eq:wang-explicit}
\end{align}
If \(L_n^{\mathrm{tr}}=0\), then
\begin{align}
L_n^{\mathrm{ho}}
&\le
\frac{2}{n\log 2}J_L
\le
\frac{2}{n\log 2}J_W.
\label{eq:wang-interpolation}
\end{align}
\end{corollary}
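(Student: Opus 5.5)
The plan is to derive both parts of Corollary~\ref{cor:general-wang-explicit} from Theorem~\ref{thm:general-wang}. I would exhibit an explicit feasible pair $(C,\eta)$ for \eqref{eq:wang-feasible}, plug it into \eqref{eq:wang-main}, and then optimize over $C$. The $J_W$ versions follow immediately. By the data-processing step already recorded in Theorem~\ref{thm:general-wang}, $J_L\le J_W$. Every right-hand side below is nondecreasing in the information term, so it suffices to prove the $J_L$ statements.

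\textbf{Feasibility.} First I would find a simple sufficient condition for \eqref{eq:wang-feasible}. Using $e^{x}\le 1+x+x^2$ for $0\le x\le 1$ and $e^{-y}\le 1-y+y^2/2$ for $y\ge0$, with $x=2\eta$ and $y=2\eta(C+1)$, the left side of \eqref{eq:wang-feasible} is at most
\[
2+2\eta+4\eta^2-2\eta(C+1)+2\eta^2(C+1)^2 .
\]
This is at most $2$ whenever $\eta\le C/(2+(C+1)^2)$. Restricting to $0<C\le 1$, we have $2+(C+1)^2\le 6\le 8$, so $\eta\coloneqq C/8$ is feasible. It also satisfies $2\eta\le 1/4\le 1$, so the first exponential estimate applies. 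Then \eqref{eq:wang-main} gives, for all $C\in(0,1]$,
\[
L_n^{\mathrm{ho}}\le L_n^{\mathrm{tr}}+C\,L_n^{\mathrm{tr}}+\frac{8}{Cn}J_L .
\]

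\textbf{Choice of $C$.} I would split into two cases according to whether the unconstrained optimizer $C^\ast=\sqrt{8J_L/(nL_n^{\mathrm{tr}})}$ lies in $(0,1]$.
\begin{itemize}
\item If $C^\ast\le 1$, take $C=C^\ast$. Then $C L_n^{\mathrm{tr}}+8J_L/(Cn)=2\sqrt{8L_n^{\mathrm{tr}}J_L/n}=4\sqrt{2L_n^{\mathrm{tr}}J_L/n}$, which gives \eqref{eq:wang-explicit-loss}.
\item If $C^\ast>1$, then $L_n^{\mathrm{tr}}<8J_L/n$. Taking $C=1$ gives $L_n^{\mathrm{ho}}\le L_n^{\mathrm{tr}}+L_n^{\mathrm{tr}}+8J_L/n$. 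The extra $L_n^{\mathrm{tr}}$ satisfies $L_n^{\mathrm{tr}}\le\sqrt{L_n^{\mathrm{tr}}\cdot 8J_L/n}\le 4\sqrt{2L_n^{\mathrm{tr}}J_L/n}$, so \eqref{eq:wang-explicit-loss} holds again.
\end{itemize}
The degenerate cases $J_L=0$ or $L_n^{\mathrm{tr}}=0$ are covered by letting $C\to0$, or by the next step.

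\textbf{Interpolation.} When $L_n^{\mathrm{tr}}=0$, the term $(1+C)L_n^{\mathrm{tr}}$ vanishes for every $C$, so $C$ may be taken arbitrarily large. As $C\to\infty$, $e^{-2\eta(C+1)}\to0$ for any fixed $\eta>0$. Hence for $\eta<\tfrac12\log 2$ the feasibility condition \eqref{eq:wang-feasible} holds for all sufficiently large $C$. Applying \eqref{eq:wang-main} with such $C$ gives $L_n^{\mathrm{ho}}\le J_L/(\eta n)$. Letting $\eta\uparrow\tfrac12\log2$ yields $L_n^{\mathrm{ho}}\le 2J_L/(n\log 2)$, which is \eqref{eq:wang-interpolation}.

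The only step needing care is the feasibility verification, that is, checking the two elementary exponential inequalities on the stated ranges. Everything else is routine optimization of $C$.
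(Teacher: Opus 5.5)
Your proposal is correct and follows essentially the same route as the paper: the feasible pair $\eta=C/8$ for $0<C\le 1$, the case split on whether $\sqrt{8J_L/(nL_n^{\mathrm{tr}})}\le 1$ (with the zero cases handled separately), the limit $C\to\infty$, $\eta\uparrow\tfrac12\log 2$ for the interpolation bound, and $J_L\le J_W$ for the state versions. The only difference is that you actually verify feasibility of $\eta=C/8$ via the elementary bounds $e^{x}\le 1+x+x^2$ and $e^{-y}\le 1-y+y^2/2$, which the paper merely asserts.
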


In the following proofs, we define
\begin{align}
\widetilde\varepsilon_t \coloneqq  \varepsilon_t - \frac{C}{C+2}.
\label{eq:def-shifted-sign}
\end{align}

\begin{lemma}
\label{lem:general-shifted-symmetrization}
Under row-wise exchangeability and \eqref{eq:selected-feedback-kernel},
\begin{align}
L_n^{\mathrm{ho}} - (1+C)L_n^{\mathrm{tr}}
=
\frac{C+2}{n}\sum_{t=1}^n \mathbb E\!\left[\mathbb E\!\left[\widetilde\varepsilon_t L_t^+\middle| G_{t-1}\right]\right].
\label{eq:general-shifted-symmetrization}
\end{align}
\end{lemma}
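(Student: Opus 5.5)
The identity is linear, so I would reduce it to the one-coordinate identity of Lemma~\ref{lem:basic-seq-sym} together with one extra identity for the training risk alone. Expanding the shifted sign gives
\[
\frac{C+2}{n}\sum_{t=1}^n \E\!\left[\E\!\left[\widetilde\varepsilon_t L_t^+\middle| G_{t-1}\right]\right]
=
\frac{C+2}{n}\sum_{t=1}^n \E[\varepsilon_t L_t^+]
-\frac{C}{n}\sum_{t=1}^n \E[L_t^+],
\]
where I use the tower property to drop the inner conditioning on $G_{t-1}$, exactly as at the end of the proof of Lemma~\ref{lem:basic-seq-sym}. The plan is to express both sums in terms of $L_n^{\mathrm{tr}}$ and $L_n^{\mathrm{ho}}$.

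For the first sum, Lemma~\ref{lem:basic-seq-sym} gives $\frac{2}{n}\sum_t\E[\varepsilon_tL_t^+]=L_n^{\mathrm{ho}}-L_n^{\mathrm{tr}}$.

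For the second sum, I would reuse the function $\varphi_t(h,z,\bar z)$ from that proof. Averaging over $U_t$ using selector symmetry gives
\[
\E[L_t^+]=\tfrac12\E\bigl[\varphi_t(H_{t-1},Z_{t,0},Z_{t,0})+\varphi_t(H_{t-1},Z_{t,1},Z_{t,0})\bigr].
\]
By the diagonal and cross identities \eqref{eq:proof-basic-diag} and \eqref{eq:proof-basic-cross}, which come from row-wise exchangeability, the first term equals the per-round training loss in \eqref{eq:proof-basic-tr}. The second term equals the per-round holdout loss in \eqref{eq:proof-basic-ho}. Hence
\[
\frac1n\sum_t\E[L_t^+]=\tfrac12\bigl(L_n^{\mathrm{tr}}+L_n^{\mathrm{ho}}\bigr).
\]
This is the same branch-average step used in \eqref{eq:bounded-special-branch-average}.

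Combining the two pieces, the right-hand side becomes
\[
\frac{C+2}{2}\bigl(L_n^{\mathrm{ho}}-L_n^{\mathrm{tr}}\bigr)-\frac{C}{2}\bigl(L_n^{\mathrm{tr}}+L_n^{\mathrm{ho}}\bigr)=L_n^{\mathrm{ho}}-(1+C)L_n^{\mathrm{tr}},
\]
which is the claim. The only step with content is the identity for $\E[L_t^+]$, since it is where exchangeability is used beyond Lemma~\ref{lem:basic-seq-sym}. Even there, it follows by directly reusing the $\varphi_t$ decomposition, so I expect no real obstacle, only bookkeeping of the coefficients.
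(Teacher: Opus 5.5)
Your proposal is correct and uses essentially the same ingredients as the paper: the function $\varphi_t$, selector symmetry, the diagonal/cross exchangeability identities, and the tower property. The paper evaluates $(C+2)\E[\widetilde\varepsilon_tL_t^+]$ directly by splitting on $U_t$, obtaining $\E[\varphi_t(H_{t-1},Z_{t,1},Z_{t,0})]-(1+C)\E[\varphi_t(H_{t-1},Z_{t,0},Z_{t,0})]$, whereas you split $\widetilde\varepsilon_t$ linearly into Lemma~\ref{lem:basic-seq-sym} plus the branch-average identity for $\E[L_t^+]$, which is the same computation reorganized.
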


\begin{proof}
Fix \(t\in[n]\) and define \(\varphi_t\) as in \eqref{eq:proof-basic-phi}. The identities \eqref{eq:proof-basic-tr}--\eqref{eq:proof-basic-diag} imply
\begin{align}
\E\bigl[Q_t\ell(W_t,Z_{t,U_t})\bigr]
&=
\E\!\left[\varphi_t(H_{t-1},Z_{t,0},Z_{t,0})\right],
\label{eq:proof-shift-tr}
\\
\E\bigl[Q_t\ell(W_t,Z_{t,1-U_t})\bigr]
&=
\E\!\left[\varphi_t(H_{t-1},Z_{t,1},Z_{t,0})\right].
\label{eq:proof-shift-ho}
\end{align}
By selector symmetry,
\begin{align}
(C+2)\,\E[\widetilde\varepsilon_t L_t^+]
&=
\E\!\left[\varphi_t(H_{t-1},Z_{t,1},Z_{t,0})\right]
-
(1+C)\,\E\!\left[\varphi_t(H_{t-1},Z_{t,0},Z_{t,0})\right].
\label{eq:proof-shift-onebranch}
\end{align}
Combining \eqref{eq:proof-shift-tr}--\eqref{eq:proof-shift-onebranch}, summing over \(t\), and applying the tower property prove \eqref{eq:general-shifted-symmetrization}.
\end{proof}

\paragraph{Auxiliary one-step inequality.}
\label{app:proof-general-wang}
We first record the one-step conditional inequality used in the proof of
Theorem~\ref{thm:general-wang}.

\begin{lemma}
\label{lem:one-step-wang}
Let \(G\) be a random element, let \(\ell\in[0,1]\), and let \(U\in\{0,1\}\) satisfy
\[
\mathbb P(U=0\mid G)=\mathbb P(U=1\mid G)=\frac12
\qquad\text{a.s.}
\]
Define
\[
\varepsilon\coloneqq 2U-1,
\qquad
\widetilde\varepsilon\coloneqq \varepsilon-\frac{C}{C+2}.
\]
If \(C,\eta>0\) satisfy \eqref{eq:wang-feasible}, then
\begin{align}
(C+2)\,\mathbb E\!\left[\mathbb E\!\left[\widetilde\varepsilon \ell\middle|G\right]\right]
\le
\frac1\eta I(\ell;U\mid G).
\label{eq:one-step-wang}
\end{align}
\end{lemma}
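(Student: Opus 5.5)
We will prove Lemma~\ref{lem:one-step-wang} with the same conditional Donsker--Varadhan argument used for Theorem~\ref{thm:general-rademacher} and Lemma~\ref{lem:one-step-bernstein}. The only change is the test function: we take $f(\ell,u)\coloneqq \eta(C+2)\bigl((2u-1)-\tfrac{C}{C+2}\bigr)\ell$, so that $f(\ell,U)=\eta(C+2)\widetilde\varepsilon\,\ell$.

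We apply the variational formula to the conditional law of $(\ell,U)$ given $G$. The reference measure is the conditional product law, in which $\ell$ keeps its conditional marginal and $U$ is replaced by an independent uniform copy $U'$; write $\varepsilon'=2U'-1$ and $\widetilde\varepsilon'=\varepsilon'-\frac{C}{C+2}$. This gives
\[
I(\ell;U\mid G)\ \ge\ \eta(C+2)\,\mathbb E\!\left[\mathbb E[\widetilde\varepsilon\ell\mid G]\right]
-\mathbb E\!\left[\log\mathbb E\!\left[\exp\{\eta(C+2)\widetilde\varepsilon'\ell\}\middle| G\right]\right].
\]
The uniformity of $U$ given $G$ is what justifies this product reference, exactly as in the earlier proofs.

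Once the logarithmic term is shown to be nonpositive, dividing by $\eta$ yields \eqref{eq:one-step-wang}. To control it, we condition on $(\ell,G)$ and average over $\varepsilon'$. The two values of $\eta(C+2)\widetilde\varepsilon'$ are
\[
\varepsilon'=+1:\quad \eta\bigl((C+2)-C\bigr)=2\eta,\qquad
\varepsilon'=-1:\quad \eta\bigl(-(C+2)-C\bigr)=-2\eta(C+1).
\]
Hence the inner expectation equals $g(\ell)\coloneqq\tfrac12\bigl(e^{2\eta\ell}+e^{-2\eta(C+1)\ell}\bigr)$.

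The main step is to show $g(x)\le1$ for every $x\in[0,1]$. We have $g(0)=1$, and the feasibility condition \eqref{eq:wang-feasible} says exactly that $g(1)\le1$. The function $g$ is a sum of exponentials in $x$, so it is convex. A convex function lies below the chord between its endpoints, so $g(x)\le(1-x)g(0)+xg(1)\le1$ on $[0,1]$.

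Therefore $\mathbb E[\exp\{\eta(C+2)\widetilde\varepsilon'\ell\}\mid G]=\mathbb E[g(\ell)\mid G]\le1$, and its logarithm is at most $0$. Substituting this into the variational inequality and dividing by $\eta$ completes the proof.
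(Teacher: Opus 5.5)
Your proposal is correct and follows essentially the same route as the paper's proof: a conditional Donsker--Varadhan step with test function $\eta(C+2)\widetilde\varepsilon\,\ell$ against the product reference with an independent uniform selector, evaluation of the inner expectation as $\tfrac12 e^{2\eta\ell}+\tfrac12 e^{-2\eta(C+1)\ell}$, and convexity plus the endpoint values $g(0)=1$, $g(1)\le 1$ to make the log term nonpositive. One small remark: uniformity of $U$ given $G$ is what makes $\varepsilon'$ a fair sign in that computation; the product reference itself is valid for any conditional law of $U$.
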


\begin{proof}
The conditional Donsker--Varadhan variational formula applied to the conditional law of \((\ell,U)\) given \(G\) yields
\begin{align}
I(\ell;U\mid G)
&\ge
\eta(C+2)\,
\mathbb E\!\left[\mathbb E[\widetilde\varepsilon\ell\mid G]\right]
-
\mathbb E\!\left[
\log \mathbb E\!\left[
\exp\{\eta(C+2)\widetilde\varepsilon(U')\ell\}
\middle|G
\right]\right],
\label{eq:cond-dv}
\end{align}
where \(U'\) is an independent selector with law \(\mathrm{Bernoulli}(1/2)\) under the product conditional law given \(G\). Conditionally on \((\ell,G)\),
\[
\mathbb E\!\left[
\exp\{\eta(C+2)\widetilde\varepsilon(U')\ell\}
\middle|\ell,G
\right]
=
\frac12 e^{2\eta \ell}
+
\frac12 e^{-2\eta(C+1)\ell}.
\]
Let
\[
\phi(x)\coloneqq \frac12 e^{2\eta x}+\frac12 e^{-2\eta(C+1)x},
\qquad x\in[0,1].
\]
The function \(\phi\) is convex on \([0,1]\), so its maximum is attained at an endpoint. Since \(\phi(0)=1\) and \(\phi(1)\le1\) by \eqref{eq:wang-feasible}, we have \(\phi(x)\le1\) for all \(x\in[0,1]\). Hence the logarithmic term in \eqref{eq:cond-dv} is nonpositive, and \eqref{eq:one-step-wang} follows after dividing by \(\eta\).
\end{proof}

\begin{proof}[Proof of Theorem~\ref{thm:general-wang}]
Apply Lemma~\ref{lem:one-step-wang} at each time \(t\) with \(G=G_{t-1}\), \(\ell=L_t^+\), and \(U=U_t\). Then
\[
(C+2)\,
\mathbb E\!\left[
\mathbb E[\widetilde\varepsilon_t L_t^+\mid G_{t-1}]
\right]
\le
\frac1\eta I(L_t^+;U_t\mid G_{t-1}).
\]
Summing over \(t\) and using Lemma~\ref{lem:general-shifted-symmetrization} gives \eqref{eq:wang-main}. Since \(L_t^+\) is a measurable function of \((W_t,\widetilde Z_t,Q_t)\), and \((\widetilde Z_t,Q_t)\) is fixed after conditioning on \(G_{t-1}\), data processing gives
\[
I(L_t^+;U_t\mid G_{t-1})
\le
I(W_t;U_t\mid G_{t-1}).
\]
Substituting this into \eqref{eq:wang-main} proves \eqref{eq:wang-directed}.
\end{proof}

\paragraph{Proof of Corollary~\ref{cor:general-wang-explicit}.}
\label{app:proof-general-wang-explicit}
\begin{proof}
For \(0<C\le1\), the choice \(\eta=C/8\) satisfies \eqref{eq:wang-feasible}. Indeed,
\[
e^{C/4}+e^{-C(C+1)/4}\le2
\]
for \(0<C\le1\). Therefore the loss-SCMI line \eqref{eq:wang-main} in Theorem~\ref{thm:general-wang} implies
\begin{align}
L_n^{\mathrm{ho}}
\le
(1+C)L_n^{\mathrm{tr}}
+
\frac{8}{Cn}J_L.
\label{eq:proof-explicit-start}
\end{align}
Set
\[
L\coloneqq L_n^{\mathrm{tr}},
\qquad
x\coloneqq \frac{J_L}{n}.
\]
Then \eqref{eq:proof-explicit-start} becomes
\[
L_n^{\mathrm{ho}}\le L+CL+\frac{8x}{C}.
\]
If \(L=0\), choosing \(C=1\) gives \(L_n^{\mathrm{ho}}\le8x\). If \(L>0\) and \(x=0\), letting \(C\downarrow0\) gives \(L_n^{\mathrm{ho}}\le L\). If both \(L>0\) and \(x>0\), choose \(C=\sqrt{8x/L}\) when this value is at most one, and choose \(C=1\) otherwise. These two cases give
\[
L_n^{\mathrm{ho}}
\le
L+8x+4\sqrt{2xL}.
\]
This proves \eqref{eq:wang-explicit-loss}.

By data processing, \(J_L\le J_W\). Substituting this inequality into \eqref{eq:wang-explicit-loss} gives \eqref{eq:wang-explicit}.

If \(L_n^{\mathrm{tr}}=0\), then \eqref{eq:wang-main} gives
\[
L_n^{\mathrm{ho}}
\le
\frac{J_L}{\eta n}
\]
for every feasible pair \((C,\eta)\). Letting \(C\to\infty\), the constraint \eqref{eq:wang-feasible} allows every \(\eta<(\log2)/2\). Hence
\[
L_n^{\mathrm{ho}}\le \frac{2}{n\log2}J_L.
\]
The bound with \(J_W\) follows again from \(J_L\le J_W\), proving \eqref{eq:wang-interpolation}.
\end{proof}

\section{Online learning specialization and fast rates}
\label{app:online-specialization}
\label{app:online-setting-details}

This appendix is a direct specialization of the general SCMI framework. We identify the online filtration and then set \(Q_t\equiv1\) in the slow-rate and Shifted-Rademacher bounds.

In the online specialization, \(\widetilde Z_t=(Z_{t,0},Z_{t,1})\), \(Z_t=Z_{t,U_t}\),
\[
H_0^{\mathrm{on}}=W_0,
\qquad
H_t^{\mathrm{on}}=(Z^t,W^t),
\qquad
G_{t-1}^{\mathrm{on}}=(H_{t-1}^{\mathrm{on}},\widetilde Z_t).
\]
The selected-coordinate update condition becomes
\[
\Law(W_t\mid H_{t-1}^{\mathrm{on}},\widetilde Z_t,U_t)
=
K_t^{\mathrm{on}}(\cdot\mid H_{t-1}^{\mathrm{on}},Z_t).
\]
Thus the online learner may update adaptively from the past and the current selected observation, but it cannot use the unselected coordinate.

The slow-rate result is obtained by setting \(Q_t\equiv1\) in the general theorem. The fast-rate statement is the online specialization of the Shifted-Rademacher bound from Appendix~\ref{app:wang-refinement}.

\begin{theorem}[Online slow-rate SCMI bound]
\label{thm:online-slow}
Assume the online specialization above, the selected-coordinate update condition, and row-wise exchangeability. Let \(L_t^+=\ell(W_t,Z_{t,0})\) and set \(Q_t\equiv1\). Then
\begin{align}
\bigl|L_n^{\mathrm{ho}}-L_n^{\mathrm{tr}}\bigr|
&\le
\frac{2}{n}\sum_{t=1}^n
\sqrt{2I(L_t^+;U_t\mid G_{t-1}^{\mathrm{on}})}
\le
\frac{2}{n}\sum_{t=1}^n
\sqrt{2I(W_t;U_t\mid G_{t-1}^{\mathrm{on}})}
\notag\\
&\le
2\sqrt{\frac{2}{n}\sum_{t=1}^n I(W_t;U_t\mid G_{t-1}^{\mathrm{on}})}.
\label{eq:online-slow-loss}
\end{align}
\end{theorem}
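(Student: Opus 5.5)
The plan is to obtain Theorem~\ref{thm:online-slow} as a direct specialization of Theorem~\ref{thm:general-rademacher}, so the work is to verify that the online construction is an instance of the SCMI setting. With $H_0^{\mathrm{on}}=W_0$, $H_t^{\mathrm{on}}=(Z^t,W^t)$ and $G_{t-1}^{\mathrm{on}}=(H_{t-1}^{\mathrm{on}},\widetilde Z_t)$, I would check the ingredients in turn:
\begin{itemize}
\item[(i)] the histories are nested, so $\mathcal H_{t-1}^{\mathrm{on}}\subseteq\mathcal H_t^{\mathrm{on}}$, and $H_t^{\mathrm{on}}$ contains only the selected coordinate $Z_t=Z_{t,U_t}$ and the state $W_t$, never $Z_t'$;
\item[(ii)] the selector symmetry \eqref{eq:selector-sym} holds with respect to $\mathcal G_{t-1}^{\mathrm{on}}$, which is part of the construction;
\item[(iii)] the selected-coordinate update condition \eqref{eq:selected-feedback-kernel} is exactly the assumed form $\Law(W_t\mid H_{t-1}^{\mathrm{on}},\widetilde Z_t,U_t)=K_t^{\mathrm{on}}(\cdot\mid H_{t-1}^{\mathrm{on}},Z_t)$;
\item[(iv)] row-wise exchangeability \eqref{eq:row-exchange-general} is assumed;
\item[(v)] $Q_t\equiv 1$ is trivially $\mathcal H_{t-1}^{\mathrm{on}}$-measurable and lies in $[0,1]$, so \eqref{eq:predictable-weight} holds.
\end{itemize}
With $Q_t\equiv1$, the definitions \eqref{eq:def-weighted-risks} reduce to the unweighted online train and holdout risks, and $L_t^+=\ell(W_t,Z_{t,0})$. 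Hence $\mathrm{Err}_n^{\mathrm{seq}}=L_n^{\mathrm{ho}}-L_n^{\mathrm{tr}}$.

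Next I would invoke Lemma~\ref{lem:basic-seq-sym} to write $L_n^{\mathrm{ho}}-L_n^{\mathrm{tr}}$ as $\frac2n\sum_t\E[\E[\varepsilon_tL_t^+\mid G_{t-1}^{\mathrm{on}}]]$. The first inequality in \eqref{eq:online-slow-loss} then follows from the first inequality of Theorem~\ref{thm:general-rademacher}, namely the roundwise Donsker--Varadhan bound $|\E[\varepsilon_tL_t^+]|\le\sqrt{2I(L_t^+;U_t\mid G_{t-1}^{\mathrm{on}})}$ followed by the triangle inequality.

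For the middle inequality I would use conditional data processing. Given $G_{t-1}^{\mathrm{on}}$, the point $Z_{t,0}$ is fixed, so $L_t^+$ is a measurable function of $W_t$, which gives $I(L_t^+;U_t\mid G_{t-1}^{\mathrm{on}})\le I(W_t;U_t\mid G_{t-1}^{\mathrm{on}})$; monotonicity of the square root does the rest. The last inequality is Jensen's inequality for the concave map $x\mapsto\sqrt x$ applied to the average over $t$, exactly as in Appendix~\ref{app:proof-general-directed-slow}.

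I do not expect a genuine obstacle. The only point needing care is (iii): the update kernel may depend on $H_{t-1}^{\mathrm{on}}$, including past states and any learner randomness, but it must not depend on $Z_{t,1-U_t}$ or on $U_t$ beyond $Z_t$. This is precisely the stated online update condition. Since $W_t$ is adjoined to $H_t^{\mathrm{on}}$ only after it is generated, the recursive construction matches the general one, and no further argument is required.
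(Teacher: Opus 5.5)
Your proposal is correct and matches the paper's proof. The paper sets $Q_t\equiv 1$ in Theorem~\ref{thm:general-rademacher} and inherits the loss-SCMI bound, the conditional data-processing step, and the Jensen step from that theorem's appendix proof. Your checklist of the SCMI-setting ingredients is a more explicit version of the same specialization.
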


\subsection{Online specialization of the Shifted-Rademacher bound}
\label{app:online-wang-refinement}

\begin{corollary}[Online specialization of the Shifted-Rademacher bound]
\label{cor:online-wang}
Assume that the conditions of Theorem~\ref{thm:general-wang} hold under the online specialization above. Then
\begin{align}
L_n^{\mathrm{ho}}
\le
(1+C)L_n^{\mathrm{tr}}
+
\frac{1}{\eta n}\sum_{t=1}^n I(L_t^+;U_t \mid G_{t-1}^{\mathrm{on}})
\label{eq:online-wang-main}
\end{align}
and therefore
\begin{align}
L_n^{\mathrm{ho}}
\le
(1+C)L_n^{\mathrm{tr}}
+
\frac{1}{\eta n}\sum_{t=1}^n I(W_t;U_t \mid G_{t-1}^{\mathrm{on}}).
\label{eq:online-wang-directed}
\end{align}
Moreover,
\begin{align}
L_n^{\mathrm{ho}}
\le
L_n^{\mathrm{tr}}
+
\frac{8}{n}\sum_{t=1}^n I(W_t;U_t \mid G_{t-1}^{\mathrm{on}})
+
4\sqrt{\frac{2L_n^{\mathrm{tr}}}{n}\sum_{t=1}^n I(W_t;U_t \mid G_{t-1}^{\mathrm{on}})}.
\label{eq:online-wang-explicit}
\end{align}
\end{corollary}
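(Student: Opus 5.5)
The plan is to obtain Corollary~\ref{cor:online-wang} as a direct specialization of Theorem~\ref{thm:general-wang} and Corollary~\ref{cor:general-wang-explicit}. No new analytic work should be needed; the task is to verify that the online instance meets their hypotheses.

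First, I will check that the online specialization is an instance of the SCMI setting of Section~\ref{subsec:seq-setting}. Set \(H_t=H_t^{\mathrm{on}}=(Z^t,W^t)\), \(G_{t-1}=G_{t-1}^{\mathrm{on}}=(H_{t-1}^{\mathrm{on}},\widetilde Z_t)\), and \(Q_t\equiv1\). The constant weight is trivially \(\mathcal H_{t-1}\)-measurable and lies in \([0,1]\), so \eqref{eq:predictable-weight} holds. The learner history grows by adjoining only the selected coordinate \(Z_t\) and the new state \(W_t\), so \(\mathcal H_{t-1}\subseteq\mathcal H_t\) and the unselected coordinate is never included. The remaining conditions are part of the stated assumptions:
\begin{itemize}
\item the selected-coordinate update condition \eqref{eq:selected-feedback-kernel}, realized with \(K_t=K_t^{\mathrm{on}}\);
\item row-wise exchangeability \eqref{eq:row-exchange-general};
\item selector symmetry \eqref{eq:selector-sym} given \(G_{t-1}^{\mathrm{on}}\).
\end{itemize}
Under this identification, \(L_t^+=\ell(W_t,Z_{t,0})\), and \(L_n^{\mathrm{tr}}\), \(L_n^{\mathrm{ho}}\) are the unweighted online train and holdout risks.

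Second, with \(C,\eta\) satisfying \eqref{eq:wang-feasible} and \(\ell\in[0,1]\), I apply Theorem~\ref{thm:general-wang} directly. Its first line \eqref{eq:wang-main} gives \eqref{eq:online-wang-main}. Its second line \eqref{eq:wang-directed} gives \eqref{eq:online-wang-directed}. The data-processing step behind the second line carries over unchanged: conditioning on \(G_{t-1}^{\mathrm{on}}\) fixes \(Z_{t,0}\), so \(L_t^+\) is a measurable function of \(W_t\), and hence
\[
I(L_t^+;U_t\mid G_{t-1}^{\mathrm{on}})\le I(W_t;U_t\mid G_{t-1}^{\mathrm{on}}).
\]

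Third, for \eqref{eq:online-wang-explicit}, I invoke the state-level bound \eqref{eq:wang-explicit} of Corollary~\ref{cor:general-wang-explicit}, with \(J_W=\sum_t I(W_t;U_t\mid G_{t-1}^{\mathrm{on}})\). That corollary already performs the optimization over \(C\): it takes \(\eta=C/8\) for \(C\in(0,1]\), chooses \(C=\sqrt{8x/L}\) capped at \(1\), and handles the degenerate cases separately. Since its hypotheses are exactly those verified in the first step, the displayed bound follows by substitution.

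The only point needing care, and the main potential obstacle, is the identification of \(G_{t-1}^{\mathrm{on}}\) with the general \(G_{t-1}\), including that the online \(U_t\) is uniform given it. Because the online specialization defines \(G_{t-1}^{\mathrm{on}}\) literally as \((H_{t-1}^{\mathrm{on}},\widetilde Z_t)\), this is a matter of reading off definitions rather than a genuine difficulty.
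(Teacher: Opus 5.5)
Your proposal is correct and matches the paper's own proof. The paper sets $Q_t\equiv 1$, identifies the general $G_{t-1}$ with $G_{t-1}^{\mathrm{on}}=(H_{t-1}^{\mathrm{on}},\widetilde Z_t)$, and reads off the three displays from Theorem~\ref{thm:general-wang} and the $J_W$ form of Corollary~\ref{cor:general-wang-explicit}. Your extra checks (predictability of the constant weight, and data processing given $G_{t-1}^{\mathrm{on}}$) simply spell out what that short proof leaves implicit.
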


\subsection{Proofs for the online specialization}
\label{app:proofs-online-application}

\subsubsection{Proof of Theorem~\ref{thm:online-slow}}
\label{app:proof-online-slow}

\begin{proof}
Set $Q_t\equiv1$ in Theorem~\ref{thm:general-rademacher}. Then the general training and holdout risks reduce to the online risks, giving \eqref{eq:online-slow-loss}. The state-based inequality follows from the state-based inequalities in Theorem~\ref{thm:general-rademacher}.
\end{proof}

\subsubsection{Proof of Corollary~\ref{cor:online-wang}}
\label{app:proof-online-wang}

\begin{proof}
Set $Q_t \equiv 1$ in Theorem~\ref{thm:general-wang} and in Corollary~\ref{cor:general-wang-explicit}. The SCMI terms in those results are exactly $I(L_t^+;U_t \mid G_{t-1}^{\mathrm{on}})$ and $I(W_t;U_t \mid G_{t-1}^{\mathrm{on}})$. The general training and holdout risks reduce to the online risks, and $\sum_{t=1}^n I(W_t;U_t \mid G_{t-1}^{\mathrm{on}})$ is exactly the information term in \eqref{eq:online-wang-directed}. This gives \eqref{eq:online-wang-main}, \eqref{eq:online-wang-directed}, and \eqref{eq:online-wang-explicit}.
\end{proof}

\section{Application: predictable stopping under the learner filtration}
\label{sec:stopping-application}

The stopping rule must be predictable with respect to the learner filtration, not with respect to a proof-side filtration that contains unselected ghost coordinates. Fix a deterministic horizon $N \in \mathbb N$, and let
\begin{align}
\tau \in \{0,1,\dots,N\}
\end{align}
be a bounded stopping rule such that
\begin{align}
\{\tau \ge t\} \in \mathcal H_{t-1}
\qquad \text{for every } t \in [N].
\label{eq:tau-predictable}
\end{align}
Define the predictable indicator
\begin{align}
I_t \coloneqq  \mathbf 1\{\tau \ge t\}.
\label{eq:def-stopping-indicator}
\end{align}
Then $I_t$ is $\mathcal H_{t-1}$-measurable and can therefore be used as the predictable weight in Section~\ref{sec:general-framework}. For the stopping-time application, we use the proof-side element
\begin{align}
G_{t-1}^{\mathrm{st}}
&\coloneqq 
(H_{t-1},\widetilde Z_t,I_t),
\end{align}
for which
\begin{align}
\sigma(G_{t-1}^{\mathrm{st}})=\sigma(H_{t-1},\widetilde Z_t,I_t),
\qquad
I_t\in\mathcal H_{t-1}.
\end{align}
Define the cumulative selected and holdout losses by
\begin{align}
L_{\tau}^{\mathrm{tr}}
&\coloneqq 
\sum_{t=1}^N
\mathbb E\!\left[
\mathbb E\!\left[
I_t\,\ell(W_t,Z_{t,U_t})
\middle| G_{t-1}^{\mathrm{st}} \right] \right],
\label{eq:def-stopping-tr}
\\
L_{\tau}^{\mathrm{ho}}
&\coloneqq 
\sum_{t=1}^N
\mathbb E\!\left[
\mathbb E\!\left[
I_t\,\ell(W_t,Z_{t,1-U_t})
\middle| G_{t-1}^{\mathrm{st}} \right] \right].
\label{eq:def-stopping-ho}
\end{align}
We also write
\begin{align}
\ell_t \coloneqq I_t\,\ell(W_t,Z_{t,0}).
\label{eq:def-stopping-Xt}
\end{align}

\begin{theorem}[Predictable-stopping specialization]
\label{thm:stopping-wang}
Assume the SCMI setting with the selected-coordinate update condition~\eqref{eq:selected-feedback-kernel} and row-wise exchangeability~\eqref{eq:row-exchange-general}. Assume that $\ell : \mathcal W \times \mathcal Z \to [0,1]$. Let $C,\eta > 0$ satisfy \eqref{eq:wang-feasible}. Then
\begin{align}
\bigl|L_{\tau}^{\mathrm{ho}}-L_{\tau}^{\mathrm{tr}}\bigr|
&\le
2\sum_{t=1}^N \sqrt{2\,I(\ell_t;U_t \mid G_{t-1}^{\mathrm{st}})},
\label{eq:stopping-slow}
\\
L_{\tau}^{\mathrm{ho}}
&\le
(1+C)L_{\tau}^{\mathrm{tr}}
+
\frac{1}{\eta}\sum_{t=1}^N I(\ell_t;U_t \mid G_{t-1}^{\mathrm{st}}),
\label{eq:stopping-wang-main}
\\
L_{\tau}^{\mathrm{ho}}
&\le
(1+C)L_{\tau}^{\mathrm{tr}}
+
\frac{1}{\eta}\sum_{t=1}^N I(W_t;U_t \mid G_{t-1}^{\mathrm{st}}).
\label{eq:stopping-wang-directed}
\end{align}
Moreover,
\begin{align}
L_{\tau}^{\mathrm{ho}}
\le
L_{\tau}^{\mathrm{tr}}
+
8\sum_{t=1}^N I(\ell_t;U_t \mid G_{t-1}^{\mathrm{st}})
+
4\sqrt{2L_{\tau}^{\mathrm{tr}}\sum_{t=1}^N I(\ell_t;U_t \mid G_{t-1}^{\mathrm{st}})}.
\label{eq:stopping-wang-explicit}
\end{align}
If $L_{\tau}^{\mathrm{tr}}=0$, then
\begin{align}
L_{\tau}^{\mathrm{ho}}
\le
\frac{2}{\log 2}\sum_{t=1}^N I(\ell_t;U_t \mid G_{t-1}^{\mathrm{st}}).
\label{eq:stopping-wang-interp}
\end{align}
\end{theorem}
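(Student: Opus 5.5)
The plan is to reduce everything to the general results of Section~\ref{sec:general-framework} and Appendix~\ref{app:wang-refinement}, with predictable weight $Q_t\coloneqq I_t=\mathbf 1\{\tau\ge t\}$ and proof-side element $G_{t-1}^{\mathrm{st}}$ in place of $G_{t-1}$. Two points need checking. First, $I_t$ is $\mathcal H_{t-1}$-measurable by \eqref{eq:tau-predictable} and lies in $[0,1]$, so \eqref{eq:predictable-weight} holds. Second, $\sigma(G_{t-1}^{\mathrm{st}})=\mathcal H_{t-1}\vee\sigma(\widetilde Z_t)=\mathcal G_{t-1}$, because adjoining $I_t$ adds nothing. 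Hence every conditional expectation and every CMI term given $G_{t-1}^{\mathrm{st}}$ coincides with the corresponding one given $G_{t-1}$. In particular, selector symmetry \eqref{eq:selector-sym} transfers, and $\ell_t=L_t^+$ for this choice of $Q$.

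The main work is then bookkeeping of normalization. The cumulative quantities satisfy $L_\tau^{\mathrm{tr}}=N L_N^{\mathrm{tr}}$ and $L_\tau^{\mathrm{ho}}=N L_N^{\mathrm{ho}}$, where the right-hand sides are the weighted risks \eqref{eq:def-weighted-risks} with $n=N$ and $Q_t=I_t$.
\begin{itemize}
\item For \eqref{eq:stopping-slow}: multiply Theorem~\ref{thm:general-rademacher} (first inequality) by $N$.
\item For \eqref{eq:stopping-wang-main}: multiply \eqref{eq:wang-main} of Theorem~\ref{thm:general-wang} by $N$. The factor $1/(\eta N)$ becomes $1/\eta$.
\item For \eqref{eq:stopping-wang-directed}: multiply \eqref{eq:wang-directed} by $N$. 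Alternatively, invoke data processing directly. Given $G_{t-1}^{\mathrm{st}}$, both $I_t$ and $Z_{t,0}$ are fixed, so $\ell_t$ is a measurable function of $W_t$, which gives $I(\ell_t;U_t\mid G^{\mathrm{st}}_{t-1})\le I(W_t;U_t\mid G^{\mathrm{st}}_{t-1})$.
\end{itemize}

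For \eqref{eq:stopping-wang-explicit}, multiply \eqref{eq:wang-explicit-loss} of Corollary~\ref{cor:general-wang-explicit} by $N$, using
\[
N\cdot 4\sqrt{\tfrac{2L_N^{\mathrm{tr}}}{N}J_L}=4\sqrt{2\,(N L_N^{\mathrm{tr}})\,J_L}=4\sqrt{2L_\tau^{\mathrm{tr}}J_L}.
\]
Here $J_L=\sum_{t}I(\ell_t;U_t\mid G_{t-1}^{\mathrm{st}})$. Similarly, \eqref{eq:stopping-wang-interp} follows from \eqref{eq:wang-interpolation}, since $L_\tau^{\mathrm{tr}}=0$ if and only if $L_N^{\mathrm{tr}}=0$.

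The only step I expect to need care is confirming that the identities of Lemma~\ref{lem:basic-seq-sym} and Lemma~\ref{lem:general-shifted-symmetrization} apply with the stopping weight. Their proofs use $Q_t=q_t(H_{t-1})$ inside $\varphi_t$ in \eqref{eq:proof-basic-phi}, together with row-wise exchangeability conditional on $H_{t-1}$. Both remain valid because $I_t$ is a function of $H_{t-1}$ alone, so swapping $Z_{t,0}$ and $Z_{t,1}$ leaves the weight unchanged.

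This is exactly why predictability with respect to the learner filtration, rather than a proof-side one, matters. If $\{\tau\ge t\}$ depended on the ghost coordinate, $q_t$ would not factor through $H_{t-1}$ and the swap identity \eqref{eq:proof-basic-cross} could fail. I would note this explicitly in the proof but otherwise treat it as immediate.
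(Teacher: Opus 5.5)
Your proposal is correct and is the paper's proof. You set $Q_t=I_t$ with $n=N$, identify $\ell_t=L_t^+$, $L_\tau^{\mathrm{tr}}=NL_N^{\mathrm{tr}}$ and $L_\tau^{\mathrm{ho}}=NL_N^{\mathrm{ho}}$, and multiply Theorem~\ref{thm:general-rademacher}, Theorem~\ref{thm:general-wang} and Corollary~\ref{cor:general-wang-explicit} by $N$. Your remark that $\sigma(G_{t-1}^{\mathrm{st}})=\mathcal G_{t-1}$, so that the CMI terms coincide exactly, is a slightly cleaner justification than the paper's, which writes the first step of its data-processing chain as an inequality where it is in fact an equality.
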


\begin{remark}
This predictable-stopping specialization is one reason to keep the learner filtration explicit in the general formulation. The indicator $I_t$ must be chosen before the round-$t$ selector is sampled, which is exactly the meaning of predictability with respect to $\mathcal H_{t-1}$. If the stopping rule were allowed to inspect the unselected ghost coordinate, it would no longer be a valid learner-side specialization of the SCMI framework.
\end{remark}

\begin{remark}
Theorem~\ref{thm:stopping-wang} is stated in cumulative form rather than after division by $N$. This is the natural scale for online learning with stopping rules, and it matches the way one interprets optional termination in practice.
\end{remark}

\subsection{Proof of Theorem~\ref{thm:stopping-wang}}
\label{app:proofs-stopping-application}
\label{app:proof-stopping-wang}

\begin{proof}
Set $Q_t \coloneqq  I_t$ in Section~\ref{sec:general-framework}, with horizon $n=N$. Then $Q_t$ is $\mathcal H_{t-1}$-measurable by \eqref{eq:tau-predictable}, and
\begin{align}
L_N^{\mathrm{tr}} = \frac{1}{N}L_{\tau}^{\mathrm{tr}},
\qquad
L_N^{\mathrm{ho}} = \frac{1}{N}L_{\tau}^{\mathrm{ho}}.
\end{align}
Moreover, the weighted first-copy loss from Section~\ref{sec:general-framework} is exactly
\begin{align}
L_t^+ = I_t\,\ell(W_t,Z_{t,0}) = \ell_t.
\end{align}
Applying Theorem~\ref{thm:general-rademacher} with this choice of $Q_t$ and multiplying by $N$ proves \eqref{eq:stopping-slow}. Because $I_t$ is a measurable function of $G_{t-1}^{\mathrm{st}}$, the SCMI term in that bound is exactly $I(\ell_t;U_t \mid G_{t-1}^{\mathrm{st}})$.

Applying Theorem~\ref{thm:general-wang} with the same choice of $Q_t$ and multiplying by $N$ proves \eqref{eq:stopping-wang-main}. Under this choice, $L_t^+=I_t\ell(W_t,Z_{t,0})=\ell_t$. Hence
\begin{align}
I(\ell_t;U_t \mid G_{t-1}^{\mathrm{st}})
\le
I(L_t^+;U_t \mid G_{t-1}^{\mathrm{st}})
\le
I(W_t;U_t \mid G_{t-1}^{\mathrm{st}}),
\end{align}
which proves \eqref{eq:stopping-wang-directed}.

Finally, applying the loss-SCMI form \eqref{eq:wang-explicit-loss} of Corollary~\ref{cor:general-wang-explicit} with $n=N$ and multiplying by $N$ gives \eqref{eq:stopping-wang-explicit}. The interpolation bound \eqref{eq:stopping-wang-interp} follows in the same way from the first inequality in \eqref{eq:wang-interpolation}.
\end{proof}

\section{Active learning: detailed setup and proofs}
\label{app:active-learning}
\label{app:active-learning-details}

This appendix gives the details behind Theorem~\ref{thm:active-iw-slow}. We first set up the coordinate-wise query row, then prove the population-risk and row-swap identities that feed into the slow-rate and fast-rate bounds. A separate discussion at the end compares the sigma-fields \(\mathcal T_t=\sigma(H_n^{\mathrm{act}},U_t)\), used for the population-risk identity, and \(\mathcal G_{t-1}^{\mathrm{act}}=\sigma(G_{t-1}^{\mathrm{act}})\), used for the SCMI row swap.

\subsection{Details on the active-learning filtration and update}
\label{app:active-query-filtration}

The active-learning construction differs from the predictable-weight setting in one respect: the query indicator is chosen after observing the current feature, so it is part of the current supersample row rather than an \(\mathcal H_{t-1}\)-measurable weight. At round \(t\), the learner first observes \(X_t\), chooses
\[
    p_t=\pi_t(W_{t-1},X_t)\in[p_{\min},1],
\]
draws \(V_t\sim{\rm Unif}[0,1]\), and sets
\[
    Q_t=\mathbf1\{V_t\le p_t\}.
\]
With the intermediate fields
\[
\mathcal F_t^{X}\coloneqq \sigma(H_{t-1}^{\mathrm{act}},X_t),
\qquad
\mathcal F_t^{Q}\coloneqq \sigma(H_{t-1}^{\mathrm{act}},X_t,V_t),
\]
the probability \(p_t\) is \(\mathcal F_t^X\)-measurable and the decision \(Q_t\) is \(\mathcal F_t^Q\)-measurable. Thus the query decision is fixed before the label is revealed. The observed label variable is
\[
\bar Y_t=
\begin{cases}
Y_t, & Q_t=1,\\
\bot, & Q_t=0,
\end{cases}
\]
where \(\bot\) is only a bookkeeping symbol and is never passed to the loss function.

Let \(S_t\) be the importance-weighted labeled sample used by the supervised learner:
\[
S_t
=
\begin{cases}
S_{t-1}\cup\{(X_t,Y_t,1/p_t)\}, & Q_t=1,\\
S_{t-1}, & Q_t=0.
\end{cases}
\]
The state update is written as
\[
W_t=\Psi_t(S_t,W_{t-1}).
\]
This notation covers online updates, warm-started optimization, and deterministic tie-breaking based on previous state. On an unqueried round, no pseudo-label is created and no supervised loss involving \((X_t,\bot)\) or the unobserved label \(Y_t\) is evaluated; the state may still evolve by deterministic computation based on \(S_{t-1}\) and \(W_{t-1}\).

For the supersample proof, each row contains both possible feature--label pairs and both possible query coins. Write
\[
Z_{t,u}\coloneqq (X_{t,u},Y_{t,u}),
\qquad u\in\{0,1\},
\]
where \((Z_{t,0},V_{t,0})\) and \((Z_{t,1},V_{t,1})\) are independent copies of \((Z,V)\) with \(Z\sim P_{XY}\) and \(V\sim{\rm Unif}[0,1]\). The selector chooses
\[
(X_t,Y_t,V_t)=(X_{t,U_t},Y_{t,U_t},V_{t,U_t}).
\]
For each coordinate,
\[
p_{t,u}\coloneqq \pi_t(W_{t-1},X_{t,u}),
\qquad
Q_{t,u}\coloneqq \mathbf1\{V_{t,u}\le p_{t,u}\}.
\]
The proof-side enlargement is
\[
G_{t-1}^{\mathrm{act}}
\coloneqq 
(H_{t-1}^{\mathrm{act}},X_{t,0},Y_{t,0},V_{t,0},X_{t,1},Y_{t,1},V_{t,1}),
\qquad
\mathcal G_{t-1}^{\mathrm{act}}
\coloneqq \sigma(G_{t-1}^{\mathrm{act}}).
\]
Conditioning on \(G_{t-1}^{\mathrm{act}}\) fixes \(p_{t,0},p_{t,1}\) and \(Q_{t,0},Q_{t,1}\), while preserving selector symmetry:
\[
\mathbb P(U_t=0\mid\mathcal G_{t-1}^{\mathrm{act}})
=
\mathbb P(U_t=1\mid\mathcal G_{t-1}^{\mathrm{act}})
=
\frac12
\qquad \text{a.s.}
\]
The selected-coordinate update condition holds because the learner receives only
\[
(X_t,p_t,Q_t,\bar Y_t)
=
(X_{t,U_t},p_{t,U_t},Q_{t,U_t},\bar Y_{t,U_t}),
\]
where \(\bar Y_{t,u}=Y_{t,u}\) if \(Q_{t,u}=1\) and \(\bar Y_{t,u}=\bot\) otherwise. The unselected coordinate is used only for proof-side evaluation.

The importance-weighted contribution is interpreted as
\[
\frac{Q_t}{p_t}\ell(W_n,(X_t,Y_t))
=
\begin{cases}
p_t^{-1}\ell(W_n,(X_t,Y_t)), & Q_t=1,\\
0, & Q_t=0.
\end{cases}
\]
Hence the loss is never evaluated at \(\bot\). For a fixed predictor \(w\),
\[
\mathbb E\!\left[
\frac{Q_t}{p_t}\ell(w,(X_t,Y_t))
\middle|
\mathcal F_t^{X},Y_t
\right]
=
\ell(w,(X_t,Y_t)),
\]
because \(\mathbb E[Q_t\mid\mathcal F_t^X]=p_t\). The terminal identity \(L_n^{\mathrm{IW,ho}}=\mathbb E[R(W_n)]\) is proved in Lemma~\ref{lem:active-pop-representation}; the key point is that \(W_n\) is constructed from selected coordinates only.

The normalized coordinate-wise losses used below are
\[
L_{n,t}^+
\coloneqq 
\frac{Q_{t,0}}{p_{t,0}}\ell(W_n,Z_{t,0}),
\qquad
L_{n,t}^-
\coloneqq 
\frac{Q_{t,1}}{p_{t,1}}\ell(W_n,Z_{t,1}).
\]
Since \(\ell\in[0,1]\) and \(p_{t,u}\ge p_{\min}\),
\[
0\le L_{n,t}^+,L_{n,t}^- \le \frac{1}{p_{\min}},
\]
so \(p_{\min}L_{n,t}^+\) and \(p_{\min}L_{n,t}^-\) lie in \([0,1]\).

\subsection{Population-risk and terminal row-swap identities}
\label{app:active-pop-row-swap}
\label{app:proof-active-iw-population}

The next two identities are the bridge from the terminal predictor to the roundwise SCMI terms. The first is the importance-weighting identity for the unselected coordinate. The second is the terminal row-swap identity used in both the slow-rate and fast-rate active-learning bounds.

\begin{lemma}[Population-risk representation for the active holdout]
\label{lem:active-pop-representation}
Under the active-learning supersample construction,
\begin{align}
L_n^{\mathrm{IW,ho}}=\mathbb E[R(W_n)].
\label{eq:active-pop-representation}
\end{align}
\end{lemma}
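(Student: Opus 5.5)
The plan is to prove the identity round by round, conditioning on a sigma-field that fixes everything the terminal predictor can depend on, and then averaging out the unselected coordinate. Fix $t\in[n]$. I will use the field $\mathcal T_t\coloneqq\sigma(H_n^{\mathrm{act}},U_t)$ mentioned at the start of this appendix. The key structural fact is that $W_n$ is built only from selected coordinates. Since $H_n^{\mathrm{act}}$ records $W_n$, the predictor $W_n$ is $\mathcal T_t$-measurable. The unselected triple $(X_{t,1-U_t},Y_{t,1-U_t},V_{t,1-U_t})$ never enters $H_n^{\mathrm{act}}$.

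\emph{Step 1 (conditional independence of the ghost triple).} I will show that, conditionally on $\mathcal T_t$, the ghost triple $(Z_{t,1-U_t},V_{t,1-U_t})$ is distributed as $P_{XY}\otimes\Unif[0,1]$. Two facts drive this. First, by the conditionally i.i.d.\ row construction, the two triples in row $t$ are independent copies of $(Z,V)$, drawn independently of $H_{t-1}^{\mathrm{act}}$ and of $U_t$. Second, every later quantity is produced from selected-path variables and fresh later rows. These are $W_t$, the later rows $\widetilde Z_s$ for $s>t$, the selectors $U_s$, and the retained $(X_s,V_s,\bar Y_s,W_s)$. By the selected-coordinate update condition, none of them is a function of the ghost triple. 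An induction over $s=t,\dots,n$ should then show that the joint law factorizes as
\[
\Law\bigl(H_n^{\mathrm{act}},U_t,Z_{t,1-U_t},V_{t,1-U_t}\bigr)=\Law\bigl(H_n^{\mathrm{act}},U_t\bigr)\otimes P_{XY}\otimes\Unif[0,1].
\]
\emph{Step 2 (averaging the coin, then the label).} The importance weight is $p_{t,1-U_t}=\pi_t(W_{t-1},X_{t,1-U_t})$, and $W_{t-1}$ is $\mathcal T_t$-measurable. Given $\mathcal T_t$ and $X_{t,1-U_t}$, the coin $V_{t,1-U_t}$ is still uniform, so $\E[Q_{t,1-U_t}\mid\mathcal T_t,Z_{t,1-U_t}]=p_{t,1-U_t}$. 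This gives
\[
\E\!\left[\frac{Q_{t,1-U_t}}{p_{t,1-U_t}}\ell(W_n,Z_{t,1-U_t})\,\middle|\,\mathcal T_t\right]=\E\bigl[\ell(W_n,Z_{t,1-U_t})\mid\mathcal T_t\bigr]=R(W_n).
\]
The last equality uses Step 1 together with the $\mathcal T_t$-measurability of $W_n$. The lower bound $p_{t,u}\ge p_{\min}$ makes every term integrable.

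\emph{Step 3 (assembly).} For each $t$ the tower property gives that the inner double expectation in the definition of $L_n^{\mathrm{IW,ho}}$ equals the unconditional expectation of the summand. By Step 2 this is $\E[R(W_n)]$. Averaging over $t\in[n]$ then yields $L_n^{\mathrm{IW,ho}}=\E[R(W_n)]$.

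The main obstacle is Step 1. There $W_n$ depends on the future trajectory, so the argument must rule out any route by which the ghost triple influences later rows or selectors. I would handle this by writing the whole process as a sequence of kernels, each of which takes the selected-path variables and fresh independent randomness as inputs. Once that representation is in place, the factorization follows by integrating out the ghost coordinate last.
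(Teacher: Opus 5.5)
Your proposal is correct and follows essentially the same route as the paper. Both condition on the selected-transcript field \(\mathcal T_t=\sigma(H_n^{\mathrm{act}},U_t)\), average out the unselected query coin to remove the importance weight (using that \(W_{t-1}\) is \(\mathcal T_t\)-measurable), use that the ghost example is an independent \(P_{XY}\)-draw given \(\mathcal T_t\) while \(W_n\) is \(\mathcal T_t\)-measurable, and finish with the tower property and an average over \(t\); the only difference is that you sketch the kernel/induction argument for the conditional independence of the ghost triple, which the paper simply asserts.
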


\begin{proof}
Fix \(t\in[n]\).  Write
\[
Z_t'\coloneqq Z_{t,1-U_t},\qquad
V_t'\coloneqq V_{t,1-U_t},\qquad
p_t'\coloneqq p_{t,1-U_t},\qquad
Q_t'\coloneqq Q_{t,1-U_t}.
\]
Let
\begin{align}
\mathcal T_t
\coloneqq
\sigma(H_n^{\mathrm{act}},U_t)
\label{eq:def-active-selected-transcript-field}
\end{align}
be the selected-transcript sigma-field for the population-risk calculation.  This field contains the entire learner-side active-learning transcript, together with the current selector, but it does not reveal the unselected coordinate \((Z_t',V_t')\). In particular, the selected-coordinate update rule implies that \(W_{t-1}\) and \(W_n\) are \(\mathcal T_t\)-measurable.

First condition on \(\mathcal T_t\vee\sigma(Z_t')\).  Under this conditioning, \(W_{t-1}\), \(W_n\), and \(Z_t'=(X_t',Y_t')\) are fixed, so
\[
p_t'=
\pi_t(W_{t-1},X_t')
\]
is fixed.  The unselected query coin \(V_t'\) is still an independent \(\mathrm{Unif}[0,1]\) variable. Hence
\begin{align}
\mathbb E\!\left[
\frac{Q_t'}{p_t'}
\middle|
\mathcal T_t\vee\sigma(Z_t')
\right]
&=
\mathbb E\!\left[
\frac{\mathbf 1\{V_t'\le p_t'\}}{p_t'}
\middle|
\mathcal T_t\vee\sigma(Z_t')
\right]
=1.
\label{eq:active-iw-pop-query-coin}
\end{align}
Therefore,
\begin{align}
\mathbb E\!\left[
\frac{Q_t'}{p_t'}\,\ell(W_n,Z_t')
\right]
=
\mathbb E\!\left[
\ell(W_n,Z_t')
\right].
\label{eq:active-iw-pop-remove-query}
\end{align}
Moreover, conditional on \(\mathcal T_t\), the unselected example \(Z_t'\) is an independent draw from \(P_{XY}\), while \(W_n\) is \(\mathcal T_t\)-measurable. Thus
\begin{align}
\mathbb E\!\left[
\ell(W_n,Z_t')\mid\mathcal T_t
\right]
=
\int \ell(W_n,z)P_{XY}(dz)
=
R(W_n).
\label{eq:active-iw-pop-risk-after-selected-transcript}
\end{align}
Taking expectations in \eqref{eq:active-iw-pop-risk-after-selected-transcript} and using \eqref{eq:active-iw-pop-remove-query} gives
\[
\mathbb E\!\left[
\frac{Q_{t,1-U_t}}{p_{t,1-U_t}}\,
\ell(W_n,Z_{t,1-U_t})
\right]
=
\mathbb E[R(W_n)].
\]
Averaging over \(t=1,\dots,n\) proves \eqref{eq:active-pop-representation}.
\end{proof}

\begin{lemma}[Terminal active-learning row-swap identity]
\label{lem:active-row-swap-identity}
Assume the active-learning supersample construction and the selected-coordinate update condition described above. This is the step that connects the final predictor \(W_n\) to the roundwise selector correlations used in Section~\ref{sec:general-framework}. Define
\[
M_{n,t}^+\coloneqq p_{\min}L_{n,t}^+,
\qquad
M_{n,t}^-\coloneqq p_{\min}L_{n,t}^-.
\]
Then \(M_{n,t}^+,M_{n,t}^-\in[0,1]\).  For every \(C\ge0\), with
\(\widetilde\varepsilon_t^{(C)}\coloneqq
\varepsilon_t-C/(C+2)\),
\begin{align}
p_{\min}L_n^{\mathrm{IW,ho}}
-
(1+C)p_{\min}L_n^{\mathrm{IW}}
=
\frac{C+2}{n}
\sum_{t=1}^n
\mathbb E\!\left[
\mathbb E\!\left[
\widetilde\varepsilon_t^{(C)}M_{n,t}^+
\middle|G_{t-1}^{\mathrm{act}}
\right]
\right].
\label{eq:active-row-swap-shifted}
\end{align}
In particular, for \(C=0\),
\begin{align}
p_{\min}\!\left(
L_n^{\mathrm{IW,ho}}
-
L_n^{\mathrm{IW}}
\right)
=
\frac{2}{n}
\sum_{t=1}^n
\mathbb E\!\left[
\mathbb E\!\left[
\varepsilon_tM_{n,t}^+
\middle|G_{t-1}^{\mathrm{act}}
\right]
\right].
\label{eq:active-row-swap-slow}
\end{align}
\end{lemma}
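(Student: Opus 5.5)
The plan is to reproduce the one-branch argument of Lemma~\ref{lem:basic-seq-sym}, replacing the post-update state \(W_t\) by the terminal predictor \(W_n\). The main work is to show that \(W_n\) still depends on row \(t\) only through the selected triple. Scaling by \(p_{\min}\) is harmless because the identity is linear in the evaluated loss. The bound \(M^\pm_{n,t}\in[0,1]\) follows at once from \(\ell\in[0,1]\), \(Q_{t,u}\in\{0,1\}\) and \(p_{t,u}\ge p_{\min}\).

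Fix \(t\) and write \(T_{t,u}\coloneqq(X_{t,u},Y_{t,u},V_{t,u})\). The first and key step is a factorization claim. There should be a measurable kernel \(\kappa_t\) such that
\[
\Law\bigl(W_n\mid H_{t-1}^{\mathrm{act}},T_{t,0},T_{t,1},U_t\bigr)=\kappa_t\bigl(\cdot\mid H_{t-1}^{\mathrm{act}},T_{t,U_t}\bigr).
\]
To prove it, first use the selected-coordinate update: \(H_t^{\mathrm{act}}\), including \(W_t=\Psi_t(S_t,W_{t-1})\), is a function of \(H_{t-1}^{\mathrm{act}}\) and the selected feedback \((X_t,p_t,Q_t,\bar Y_t)\), which is itself a function of \((W_{t-1},T_{t,U_t})\). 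Next, use the conditionally i.i.d.\ construction. Future rows \(T_{s,u}\), \(s>t\), are fresh draws given the learner history, and future selectors are fair given \(\mathcal G^{\mathrm{act}}_{s-1}\). Hence the joint law of everything after round \(t\) is conditionally independent of \((T_{t,1-U_t},U_t)\) given \(H_t^{\mathrm{act}}\). Composing these kernels for \(s=t+1,\dots,n\) gives \(\kappa_t\). I expect this step to be the main obstacle. The difficulty is bookkeeping: one must check that the proof-side coordinates never leak into later kernels. I would handle it by backward induction on \(s\), showing that \(\Law(W_n\mid H_s^{\mathrm{act}},\text{all past rows and selectors})\) depends only on \(H_s^{\mathrm{act}}\).

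Given the factorization, define
\[
\varphi_t(h,a,b)\coloneqq \frac{\mathbf 1\{v_b\le \pi_t(w_{t-1}(h),x_b)\}}{\pi_t(w_{t-1}(h),x_b)}\,p_{\min}\int \ell(w,(x_b,y_b))\,\kappa_t(dw\mid h,a),
\]
where \(a\) is the update triple and \(b=(x_b,y_b,v_b)\) is the evaluation triple. Conditioning on \(U_t\) and using selector symmetry gives the training and holdout terms
\[
\begin{aligned}
p_{\min}\,\E\Bigl[\tfrac{Q_t}{p_t}\ell(W_n,Z_t)\Bigr]&=\tfrac12\E\bigl[\varphi_t(H_{t-1},T_{t,0},T_{t,0})+\varphi_t(H_{t-1},T_{t,1},T_{t,1})\bigr],\\
p_{\min}\,\E\Bigl[\tfrac{Q_{t,1-U_t}}{p_{t,1-U_t}}\ell(W_n,Z_{t,1-U_t})\Bigr]&=\tfrac12\E\bigl[\varphi_t(H_{t-1},T_{t,0},T_{t,1})+\varphi_t(H_{t-1},T_{t,1},T_{t,0})\bigr].
\end{aligned}
\]
The two triples are i.i.d.\ given \(H_{t-1}^{\mathrm{act}}\), so the row-swap identities \eqref{eq:proof-basic-cross}--\eqref{eq:proof-basic-diag} apply. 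The training term therefore equals \(\E\varphi_t(T_{t,0},T_{t,0})\), and the holdout term equals \(\E\varphi_t(T_{t,1},T_{t,0})\).

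Finally, compute the shifted correlation. Since \(M^+_{n,t}\) is evaluated at the \(0\)-triple, the event \(U_t=1\) contributes \(\tfrac{C+2}{2}\bigl(1-\tfrac{C}{C+2}\bigr)\varphi_t(T_{t,1},T_{t,0})=\varphi_t(T_{t,1},T_{t,0})\). The event \(U_t=0\) contributes \(\tfrac{C+2}{2}\bigl(-1-\tfrac{C}{C+2}\bigr)\varphi_t(T_{t,0},T_{t,0})=-(1+C)\varphi_t(T_{t,0},T_{t,0})\). This yields
\[
(C+2)\,\E\bigl[\widetilde\varepsilon_t^{(C)}M^+_{n,t}\bigr]=\E\varphi_t(T_{t,1},T_{t,0})-(1+C)\,\E\varphi_t(T_{t,0},T_{t,0}).
\]
The right-hand side matches the holdout minus \((1+C)\) times the training term from the previous step. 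Inserting the inner conditional expectation given \(G_{t-1}^{\mathrm{act}}\) by the tower property and averaging over \(t\) proves \eqref{eq:active-row-swap-shifted}. Setting \(C=0\) then gives \eqref{eq:active-row-swap-slow}.
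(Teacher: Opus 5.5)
Your proposal is correct and is essentially the paper's proof. The paper's swap-and-flip invariance (exchange the two triples of row $t$ and replace $U_t$ by $1-U_t$) gives $\mathbb E[M_{n,t}^-\mathbf 1\{U_t=0\}]=\mathbb E[M_{n,t}^+\mathbf 1\{U_t=1\}]$ and $\mathbb E[M_{n,t}^-\mathbf 1\{U_t=1\}]=\mathbb E[M_{n,t}^+\mathbf 1\{U_t=0\}]$; these are exactly your cross and diagonal identities for $\varphi_t$, and your kernel $\kappa_t$ spells out the step the paper asserts in one line (``the future selected recursion is unchanged in distribution'').

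For that factorization step, fairness of $U_s$ given $G_{s-1}^{\mathrm{act}}$ alone does not give your conditional-independence claim. For example, $U_{t+1}=U_t\oplus\mathbf 1\{X_{t+1,0}>X_{t+1,1}\}$ is fair given $G_t^{\mathrm{act}}$, yet it makes the law of $W_n$ depend on $U_t$, and the identity then fails. Your backward induction should therefore use that the selectors are independent fair bits, independent of all other randomness. The paper's argument relies on the same implicit assumption.
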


\begin{proof}
For each fixed \(t\), define
\[
\alpha_t\coloneqq \mathbb E\!\left[M_{n,t}^+\mathbf1\{U_t=0\}\right],
\qquad
\beta_t\coloneqq \mathbb E\!\left[M_{n,t}^+\mathbf1\{U_t=1\}\right].
\]
Consider the transformation that swaps the two coordinates of the current
active-learning row and simultaneously replaces \(U_t\) by \(1-U_t\).  The
selected triple \((X_{t,U_t},Y_{t,U_t},V_{t,U_t})\) is unchanged by this
transformation, and the future selected recursion is therefore unchanged in
distribution because the terminal output \(W_n\) is constructed only from
selected coordinates and learner-side randomness.  The transformation only
exchanges which coordinate is called ``0'' and which coordinate is called ``1''
for the proof-side evaluation.  Hence the first-coordinate terminal loss is
exchanged with the second-coordinate terminal loss, and
\[
\mathbb E\!\left[M_{n,t}^-\mathbf1\{U_t=0\}\right]=\beta_t,
\qquad
\mathbb E\!\left[M_{n,t}^-\mathbf1\{U_t=1\}\right]=\alpha_t.
\]
Using these identities and the definitions of the IW selected and holdout
risks,
\[
p_{\min}L_n^{\mathrm{IW}}
=
\frac{2}{n}\sum_{t=1}^n \alpha_t,
\qquad
p_{\min}L_n^{\mathrm{IW,ho}}
=
\frac{2}{n}\sum_{t=1}^n \beta_t.
\]
Moreover,
\[
(C+2)\,\mathbb E\!\left[
\mathbb E\!\left[
\widetilde\varepsilon_t^{(C)}M_{n,t}^+
\middle|G_{t-1}^{\mathrm{act}}
\right]
\right]
=
2\beta_t-2(1+C)\alpha_t.
\]
Summing over \(t\) proves \eqref{eq:active-row-swap-shifted}.  The case
\(C=0\) gives \eqref{eq:active-row-swap-slow}.
\end{proof}

\subsection{Proof of Theorem~\ref{thm:active-iw-slow}}
\label{app:proofs-active-learning-application}
\label{app:proof-active-iw-slow}

\begin{proof}
By Lemma~\ref{lem:active-row-swap-identity} with \(C=0\),
\[
p_{\min}\left(
L_n^{\mathrm{IW,ho}}
-
L_n^{\mathrm{IW}}
\right)
=
\frac{2}{n}\sum_{t=1}^n
\mathbb E\!\left[
\mathbb E\!\left[
\varepsilon_t M_{n,t}^+
\middle| G_{t-1}^{\mathrm{act}}
\right]\right].
\]
For each $t$, the one-step bound used in Theorem~\ref{thm:general-rademacher} gives
\[
\left|
\mathbb E\!\left[
\mathbb E\!\left[
\varepsilon_t M_{n,t}^+
\middle| G_{t-1}^{\mathrm{act}}
\right]\right]
\right|
\le
\sqrt{
2\,I(M_{n,t}^+;U_t\mid G_{t-1}^{\mathrm{act}})
}.
\]
Since $M_{n,t}^+=p_{\min}L_{n,t}^+$ and $p_{\min}>0$, MI is unchanged by this deterministic rescaling:
\[
I(M_{n,t}^+;U_t\mid G_{t-1}^{\mathrm{act}})
=
I(L_{n,t}^+;U_t\mid G_{t-1}^{\mathrm{act}}).
\]
Combining these inequalities and using Lemma~\ref{lem:active-pop-representation}, which identifies $L_n^{\mathrm{IW,ho}}$ with $\mathbb E[R(W_n)]$, proves \eqref{eq:active-iw-slow-sum}.
\end{proof}

\subsection{Fast-rate refinements for active learning}
\label{app:active-fast-rate-refinements}

The main active-learning section states only the slow-rate consequence. For completeness, we also record the Shifted-Rademacher refinements obtained from the same normalized importance-weighted losses.

\begin{theorem}[IWAL-type SCMI bound for the terminal predictor]
\label{thm:active-iw-wang}
Assume that \(\ell : \mathcal W \times \mathcal Z \to [0,1]\). For every \(C,\eta>0\) satisfying \eqref{eq:wang-feasible},
\begin{align}
\mathbb E\!\left[
R(W_n) \right]
\le
(1+C)L_n^{\mathrm{IW}}
+
\frac{1}{\eta p_{\min} n}\sum_{t=1}^n I(L_{n,t}^+;U_t \mid G_{t-1}^{\mathrm{act}}).
\label{eq:active-iw-wang}
\end{align}
Moreover, the following explicit specialization of \eqref{eq:active-iw-wang} holds:
\begin{align}
\mathbb E\!\left[
R(W_n) \right]
\le
L_n^{\mathrm{IW}}
+
\frac{8}{p_{\min}n}\sum_{t=1}^n I(L_{n,t}^+;U_t \mid G_{t-1}^{\mathrm{act}})
+
4\sqrt{\frac{2L_n^{\mathrm{IW}}}{p_{\min}n}\sum_{t=1}^n I(L_{n,t}^+;U_t \mid G_{t-1}^{\mathrm{act}})}.
\label{eq:active-iw-explicit}
\end{align}
If \(L_n^{\mathrm{IW}}=0\), then the explicit specialization can be sharpened to
\begin{align}
\mathbb E\!\left[
R(W_n) \right]
\le
\frac{2}{p_{\min}n\log 2}\sum_{t=1}^n I(L_{n,t}^+;U_t \mid G_{t-1}^{\mathrm{act}}).
\label{eq:active-iw-interpolation}
\end{align}
\end{theorem}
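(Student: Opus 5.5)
The plan is to mirror the proof of Theorem~\ref{thm:general-wang}, replacing the predictable-weight symmetrization by the terminal row-swap identity of Lemma~\ref{lem:active-row-swap-identity}, and working with the normalized losses \(M_{n,t}^+=p_{\min}L_{n,t}^+\in[0,1]\). Fix \(C,\eta>0\) satisfying \eqref{eq:wang-feasible}. Lemma~\ref{lem:active-row-swap-identity} with this \(C\) expresses \(p_{\min}L_n^{\mathrm{IW,ho}}-(1+C)p_{\min}L_n^{\mathrm{IW}}\) as \(\frac{C+2}{n}\sum_t\E[\E[\widetilde\varepsilon_t^{(C)}M_{n,t}^+\mid G_{t-1}^{\mathrm{act}}]]\). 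Selector symmetry given \(\mathcal G_{t-1}^{\mathrm{act}}\) is part of the construction, so Lemma~\ref{lem:one-step-wang} applies at each round with \(G=G_{t-1}^{\mathrm{act}}\), \(\ell=M_{n,t}^+\in[0,1]\), \(U=U_t\). This gives a per-round bound of \(\eta^{-1}I(M_{n,t}^+;U_t\mid G_{t-1}^{\mathrm{act}})\).

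The rescaling by the deterministic constant \(p_{\min}>0\) is a bijection, so \(I(M_{n,t}^+;U_t\mid G_{t-1}^{\mathrm{act}})=I(L_{n,t}^+;U_t\mid G_{t-1}^{\mathrm{act}})\). Summing over \(t\), using Lemma~\ref{lem:active-pop-representation} to replace \(L_n^{\mathrm{IW,ho}}\) by \(\E[R(W_n)]\), and dividing by \(p_{\min}\) yields \eqref{eq:active-iw-wang}.

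For \eqref{eq:active-iw-explicit}, I would repeat the optimization in the proof of Corollary~\ref{cor:general-wang-explicit}. Take \(\eta=C/8\) for \(0<C\le1\), which is feasible. With \(L\coloneqq L_n^{\mathrm{IW}}\) and \(x\coloneqq J/(p_{\min}n)\), where \(J\) denotes the SCMI sum, the bound becomes \(\E[R(W_n)]\le L+CL+8x/C\). Choose \(C=\sqrt{8x/L}\) when this value is at most one, and \(C=1\) otherwise. The cases \(L=0\) or \(x=0\) are handled as in that proof. Together these give \(L+8x+4\sqrt{2xL}\).

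For \eqref{eq:active-iw-interpolation}, set \(L_n^{\mathrm{IW}}=0\) in \eqref{eq:active-iw-wang} and let \(C\to\infty\). The constraint \eqref{eq:wang-feasible} then permits every \(\eta<(\log 2)/2\), giving the constant \(2/\log 2\).

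The main obstacle is the legitimacy of the first step rather than the algebra. Unlike Theorem~\ref{thm:general-wang}, the loss involves the terminal \(W_n\), and the query indicators \(Q_{t,u}\) are not \(\mathcal H_{t-1}\)-measurable. I would therefore check carefully that Lemma~\ref{lem:one-step-wang} only needs \(M_{n,t}^+\in[0,1]\) and conditional uniformity of \(U_t\) given \(G_{t-1}^{\mathrm{act}}\). It places no measurability requirement on the loss, so the future-dependence of \(W_n\) is harmless at that stage. All structural work is then absorbed by Lemma~\ref{lem:active-row-swap-identity}, which I take as given.
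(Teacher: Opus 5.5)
Your proposal is correct and follows the paper's proof step for step. The ingredients match in order: the shifted terminal row-swap identity of Lemma~\ref{lem:active-row-swap-identity} for the normalized losses $M_{n,t}^+$, then Lemma~\ref{lem:one-step-wang} applied roundwise with $G=G_{t-1}^{\mathrm{act}}$, then invariance of the CMI under rescaling by $p_{\min}$, then Lemma~\ref{lem:active-pop-representation} to identify $L_n^{\mathrm{IW,ho}}$ with $\mathbb E[R(W_n)]$, and finally the same choices $\eta=C/8$ (with the case split on $\sqrt{8x/L}$) and $C\to\infty$. Your observation that Lemma~\ref{lem:one-step-wang} needs only $M_{n,t}^+\in[0,1]$ and conditional uniformity of $U_t$ is also the reason the paper can apply it unchanged despite $W_n$ depending on the future trajectory.
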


\begin{remark}
Theorem~\ref{thm:active-iw-wang} controls a genuine generalization gap. The left-hand side is the population risk of the terminal predictor \(W_n\), and the right-hand side contains the expectation-level importance-weighted selected loss from \eqref{eq:active-iw-train}. No comparator and no regret term appear. The last two displays are not separate assumptions; they are explicit choices of the feasible parameters in the first display.
\end{remark}

\paragraph{Proof of Theorem~\ref{thm:active-iw-wang}.}
\label{app:proof-active-iw-wang}

\begin{proof}
Use the normalized losses \(M_{n,t}^{\pm}\) from
Lemma~\ref{lem:active-row-swap-identity}, and write
\(\widetilde\varepsilon_t=\varepsilon_t-C/(C+2)\).  By
Lemma~\ref{lem:active-row-swap-identity},
\begin{align}
p_{\min}L_n^{\mathrm{IW,ho}}
-
(1+C)p_{\min}L_n^{\mathrm{IW}}
=
\frac{C+2}{n}\sum_{t=1}^n \mathbb E\!\left[\mathbb E\!\left[\widetilde\varepsilon_t M_{n,t}^+\middle| G_{t-1}^{\mathrm{act}}\right]\right].
\label{eq:proof-active-wang-shift}
\end{align}
By Lemma~\ref{lem:one-step-wang}, applied with
\begin{align}
G = G_{t-1}^{\mathrm{act}},
\qquad
\ell = M_{n,t}^+,
\qquad
U = U_t,
\end{align}
and using \(\mathbb P(U_t=0\mid\mathcal G_{t-1}^{\mathrm{act}})=\mathbb P(U_t=1\mid\mathcal G_{t-1}^{\mathrm{act}})=1/2\), we have
\begin{align}
(C+2)\,\mathbb E\!\left[
\mathbb E\!\left[
\widetilde\varepsilon_t M_{n,t}^+
\middle|G_{t-1}^{\mathrm{act}}
\right]
\right]
\le
\frac{1}{\eta}I(M_{n,t}^+;U_t \mid G_{t-1}^{\mathrm{act}})
=
\frac{1}{\eta}I(L_{n,t}^+;U_t \mid G_{t-1}^{\mathrm{act}}),
\label{eq:proof-active-wang-onestep}
\end{align}
where the equality uses the invariance of MI under multiplication by the deterministic constant \(p_{\min}>0\).

Summing \eqref{eq:proof-active-wang-onestep} over \(t\) and substituting into \eqref{eq:proof-active-wang-shift} yields
\begin{align}
L_n^{\mathrm{IW,ho}}
\le
(1+C)L_n^{\mathrm{IW}}
+
\frac{1}{\eta p_{\min}n}\sum_{t=1}^n I(L_{n,t}^+;U_t \mid G_{t-1}^{\mathrm{act}}).
\end{align}
Lemma~\ref{lem:active-pop-representation} identifies the left-hand side with \(\mathbb E\!\left[R(W_n)\right]\). This proves \eqref{eq:active-iw-wang}.

It remains to prove the explicit specialization. Fix \(0<C \le 1\) and choose \(\eta=C/8\). As in the proof of Corollary~\ref{cor:general-wang-explicit}, the feasibility condition \eqref{eq:wang-feasible} holds. Applying \eqref{eq:active-iw-wang} with this choice gives
\begin{align}
\mathbb E\!\left[R(W_n)\right]
\le
(1+C)L_n^{\mathrm{IW}}
+
\frac{8}{Cp_{\min}n}\sum_{t=1}^n I(L_{n,t}^+;U_t \mid G_{t-1}^{\mathrm{act}}).
\label{eq:proof-active-explicit-start}
\end{align}
Set
\begin{align}
L \coloneqq  L_n^{\mathrm{IW}},
\qquad
x \coloneqq  \frac{1}{p_{\min}n}\sum_{t=1}^n I(L_{n,t}^+;U_t \mid G_{t-1}^{\mathrm{act}}).
\end{align}
Then \eqref{eq:proof-active-explicit-start} reads
\begin{align}
\mathbb E\!\left[R(W_n)\right] \le L + CL + \frac{8x}{C}.
\end{align}
If \(L=0\), choosing \(C=1\) in this display gives
\(\mathbb E[R(W_n)]\le 8x\), so \eqref{eq:active-iw-explicit} holds. Hence assume \(L>0\) for the optimization over \(C\).
If \(x=0\), letting \(C\downarrow0\) gives \(\mathbb E[R(W_n)]\le L\), so \eqref{eq:active-iw-explicit} holds. Hence assume also \(x>0\).
If \(\sqrt{8x/L} \le 1\), choose \(C=\sqrt{8x/L}\) and obtain
\begin{align}
\mathbb E\!\left[R(W_n)\right]
\le
L + 4\sqrt{2xL}.
\end{align}
If \(\sqrt{8x/L} > 1\), choose \(C=1\) and obtain
\begin{align}
\mathbb E\!\left[R(W_n)\right] \le 2L + 8x.
\end{align}
In this second case, \(x > L/8\), hence \(4\sqrt{2xL} \ge L\), and therefore
\begin{align}
2L + 8x \le L + 8x + 4\sqrt{2xL}.
\end{align}
Together with the zero cases above, combining the two cases proves \eqref{eq:active-iw-explicit}.

If \(L_n^{\mathrm{IW}}=0\), then \eqref{eq:active-iw-wang} yields
\begin{align}
\mathbb E\!\left[R(W_n)\right]
\le
\frac{1}{\eta p_{\min}n}\sum_{t=1}^n I(L_{n,t}^+;U_t \mid G_{t-1}^{\mathrm{act}})
\end{align}
for every feasible pair \((C,\eta)\). Letting \(C \to \infty\) permits every \(\eta < (\log 2)/2\). Hence
\begin{align}
\mathbb E\!\left[
R(W_n) \right]
\le
\frac{2}{p_{\min}n\log 2}\sum_{t=1}^n I(L_{n,t}^+;U_t \mid G_{t-1}^{\mathrm{act}}),
\end{align}
which proves \eqref{eq:active-iw-interpolation}.
\end{proof}

\begin{proposition}[Query-aware decomposition]
\label{prop:active-iw-query-aware}
For each \(t \in [n]\), abbreviate
\[
G\coloneqq G_{t-1}^{\mathrm{act}},
\qquad
Q\coloneqq Q_{t,0},
\qquad
L\coloneqq L_{n,t}^+.
\]
For a realized value of the conditioning variable \(G\), write
\(I_G(\cdot;\cdot)\) for MI under the regular conditional law given \(G\). Since \(Q\) is \(G\)-measurable in the proof-side filtration,
\begin{align}
I(L;U_t\mid G)
=
I(L;U_t\mid G,Q).
\label{eq:active-iw-query-aware-a}
\end{align}
Moreover, \(L=0\) on \(\{Q=0\}\), while on \(\{Q=1\}\),
\[
L=\frac{1}{p_{t,0}}\ell(W_n,Z_{t,0}),
\]
and \(p_{t,0}\) is \(G\)-measurable and bounded away from zero. Hence
\begin{align}
I(L_{n,t}^+;U_t\mid G_{t-1}^{\mathrm{act}})
=
\mathbb E\!\left[
Q_{t,0}\,
I_{G_{t-1}^{\mathrm{act}}}
\!\left(\ell(W_n,Z_{t,0});U_t\right)
\right].
\label{eq:active-iw-query-aware-b}
\end{align}
In particular, by data processing,
\begin{align}
I(L_{n,t}^+;U_t\mid G_{t-1}^{\mathrm{act}})
\le
\mathbb E\!\left[
Q_{t,0}\,
I_{G_{t-1}^{\mathrm{act}}}
\!\left(W_n;U_t\right)
\right].
\label{eq:active-iw-query-aware-c}
\end{align}
\end{proposition}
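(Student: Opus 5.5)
The plan is to work pointwise in the realized value $g$ of $G\coloneqq G_{t-1}^{\mathrm{act}}$ and then average. For \eqref{eq:active-iw-query-aware-a}, note that $Q=Q_{t,0}=\mathbf 1\{V_{t,0}\le \pi_t(W_{t-1},X_{t,0})\}$. Here $W_{t-1}$ is a component of $H_{t-1}^{\mathrm{act}}$, and $(X_{t,0},V_{t,0})$ are coordinates of $G$, so $Q$ is $\sigma(G)$-measurable. Then $\sigma(G,Q)=\sigma(G)$ and the two conditional mutual informations coincide. Formally, the chain rule gives $I(L;U_t\mid G,Q)=I(L,Q;U_t\mid G)-I(Q;U_t\mid G)$. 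The last term vanishes because $Q$ is a function of $G$, and $I(L,Q;U_t\mid G)=I(L;U_t\mid G)$ for the same reason.

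For \eqref{eq:active-iw-query-aware-b}, I would fix $g$ and split on the value $q(g)\in\{0,1\}$ of $Q$. On $\{q(g)=0\}$, $L\equiv0$ under the regular conditional law given $G=g$. A constant carries no information, so $I_g(L;U_t)=0$, which equals $q(g)\,I_g(\ell(W_n,Z_{t,0});U_t)$. On $\{q(g)=1\}$, $L=c(g)\,\ell(W_n,Z_{t,0})$ with $c(g)=1/p_{t,0}(g)\in[1,1/p_{\min}]$, and $c(g)$ is a deterministic positive constant under the conditional law. The map $x\mapsto c(g)x$ is a measurable bijection with measurable inverse, so by data processing in both directions, $I_g(L;U_t)=I_g(\ell(W_n,Z_{t,0});U_t)$. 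Combining the two cases gives $I_g(L;U_t)=q(g)\,I_g(\ell(W_n,Z_{t,0});U_t)$ for almost every $g$. Taking expectation over $G$ and recalling that the scalar CMI is by definition $\mathbb E[I_G(\cdot;\cdot)]$ yields \eqref{eq:active-iw-query-aware-b}.

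For \eqref{eq:active-iw-query-aware-c}, I would apply conditional data processing under each regular conditional law. Given $G=g$, the point $Z_{t,0}$ is fixed, so $\ell(W_n,Z_{t,0})$ is a measurable function of $W_n$, and therefore $I_g(\ell(W_n,Z_{t,0});U_t)\le I_g(W_n;U_t)$. Multiplying by the nonnegative $q(g)$ and integrating concludes the proof.

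The only delicate point is measure-theoretic. The factorization requires the regular conditional law given $G$ to exist (all spaces are standard Borel) and the functions $q(g)$ and $p_{t,0}(g)$ to be chosen as measurable versions. It also requires that, under $P(\cdot\mid G=g)$, these quantities are genuinely constants, so that the conditional MI of a rescaled variable equals the unscaled one for almost every $g$. I expect this verification, rather than any inequality, to be the main step needing care. It follows from the disintegration and the fact that $(W_{t-1},X_{t,0},V_{t,0})$ are coordinates of $G$.
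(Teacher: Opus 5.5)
Your proposal is correct and follows the paper's proof essentially step by step. You use $G$-measurability of $Q_{t,0}$ and $p_{t,0}$ for \eqref{eq:active-iw-query-aware-a}, a pointwise case split on $Q_{t,0}$ with invariance of mutual information under the positive $G$-measurable rescaling $1/p_{t,0}$ followed by averaging over $G$ for \eqref{eq:active-iw-query-aware-b}, and conditional data processing with $Z_{t,0}$ fixed for \eqref{eq:active-iw-query-aware-c}; your chain-rule derivation of \eqref{eq:active-iw-query-aware-a} and your attention to measurable versions under the regular conditional law are only more explicit than the paper.
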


\paragraph{Proof of Proposition~\ref{prop:active-iw-query-aware}.}
\label{app:proof-active-iw-query-aware}

\begin{proof}
Fix \(t\in[n]\) and write
\[
G\coloneqq G_{t-1}^{\mathrm{act}},
\qquad
Q\coloneqq Q_{t,0},
\qquad
L\coloneqq L_{n,t}^+.
\]
By construction, \(G\) contains
\(X_{t,0},Y_{t,0},V_{t,0},X_{t,1},Y_{t,1},V_{t,1}\). Hence \(p_{t,0}\) and \(Q_{t,0}\) are
\(G\)-measurable. This immediately gives
\[
I(L;U_t\mid G)=I(L;U_t\mid G,Q),
\]
which proves \eqref{eq:active-iw-query-aware-a}.

Condition now on a realized value of \(G\). If \(Q=0\), then \(L=0\), so the
CMI between \(L\) and \(U_t\) is zero. If \(Q=1\),
then
\[
L=p_{t,0}^{-1}\ell(W_n,Z_{t,0}).
\]
The multiplier \(p_{t,0}^{-1}\) is strictly positive and \(G\)-measurable, so
deterministic rescaling does not change the CMI.
Therefore, under the conditional law given \(G\),
\[
I_G(L;U_t)
=
Q\,I_G(\ell(W_n,Z_{t,0});U_t).
\]
Averaging this identity over \(G\) proves \eqref{eq:active-iw-query-aware-b}.

Finally, \(Z_{t,0}\) is \(G\)-measurable, and \(\ell(W_n,Z_{t,0})\) is a
measurable function of \((W_n,G)\). The conditional data-processing inequality
therefore gives
\[
I_G(\ell(W_n,Z_{t,0});U_t)
\le
I_G(W_n;U_t).
\]
Multiplying by \(Q\) and averaging over \(G\) proves
\eqref{eq:active-iw-query-aware-c}.
\end{proof}

\begin{remark}
The factor \(Q_{t,0}\) appears because the proof-side conditioning field
\(G_{t-1}^{\mathrm{act}}\) already contains the coordinate-wise query coin
\(V_{t,0}\) and therefore fixes whether the first coordinate is queried. If one
instead works with a coarser field before the query coin is exposed, the same
chain-rule calculation produces a \(p_{t,0}\)-weighted expression. The SCMI term
used in Theorem~\ref{thm:active-iw-wang} is the finer, post-coin quantity.
\end{remark}

\subsection{Why the population-risk proof uses a selected-transcript field}
\label{app:active-T-vs-G-discussion}

The population-risk identity and the terminal row-swap identity use different conditioning objects because they average over different randomness. In the population-risk identity, the goal is to prove the inverse-probability relation
\[
\mathbb E\!\left[
\frac{Q_{t,1-U_t}}{p_{t,1-U_t}}\ell(W_n,Z_{t,1-U_t})
\right]
=
\mathbb E[R(W_n)].
\]
For this calculation, the terminal predictor must be fixed while the unselected query coin remains available for averaging. This is why the proof conditions on the selected-transcript field \(\mathcal T_t=\sigma(H_n^{\mathrm{act}},U_t)\) from \eqref{eq:def-active-selected-transcript-field}: it fixes the learner-side transcript that determines \(W_n\) while leaving the unselected query coin \(V_{t,1-U_t}\) available for averaging.

The SCMI row-swap field \(\mathcal G^{\rm act}_{t-1}=\sigma(G^{\rm act}_{t-1})\) has the opposite purpose. It fixes the current paired row
\[
(X_{t,0},Y_{t,0},V_{t,0},X_{t,1},Y_{t,1},V_{t,1})
\]
before the current selector is averaged over. This is exactly what is needed for the selected--ghost comparison and for the CMI term
\[
I\!\left(L_{n,t}^+;U_t\mid G^{\rm act}_{t-1}\right),
\]
but it is not the right field for the population-risk identity. First, \(W_n\) is not generally \(\mathcal G^{\rm act}_{t-1}\)-measurable, because future selected feedback is not included in \(G^{\rm act}_{t-1}\). Second, \(\mathcal G^{\rm act}_{t-1}\) already fixes both query coins. Therefore
\[
\mathbb E\!\left[
\frac{Q_{t,1-U_t}}{p_{t,1-U_t}}
\middle|
\mathcal G^{\rm act}_{t-1}
\right]
=
\frac12\left(
\frac{Q_{t,0}}{p_{t,0}}
+
\frac{Q_{t,1}}{p_{t,1}}
\right),
\]
which is not equal to \(1\) pathwise. Thus \(\mathcal T_t\) is used for the importance-weighted population-risk calculation, whereas \(G^{\rm act}_{t-1}\) is used for the row-swap and SCMI calculation.

\subsection{Discussion: terminal and non-terminal active-learning outputs}
\label{app:active-terminal-output-discussion}

Lemma~\ref{lem:active-pop-representation} is stated for the final output $W_n$ because Theorem~\ref{thm:active-iw-slow} evaluates the final predictor. The identity itself is not tied to final-time evaluation. What is needed is that the evaluated predictor is a learner-side output constructed only from selected active-learning information and learner-side randomness, and hence does not use any unselected coordinate $(Z_{t,1-U_t},V_{t,1-U_t})$.

More precisely, let $\bar W$ be any random predictor measurable with respect to the selected active-learning history, for example $W_m$ for some fixed $m\le n$. If $W_n$ is replaced by $\bar W$ in the IW holdout summands, then the same argument as in Lemma~\ref{lem:active-pop-representation} gives
\begin{align}
\frac{1}{n}\sum_{t=1}^n
\mathbb E\!\left[
\mathbb E\!\left[
\frac{Q_{t,1-U_t}}{p_{t,1-U_t}}\ell(\bar W,Z_{t,1-U_t})
\middle|G_{t-1}^{\mathrm{act}}
\right]
\right]
=
\mathbb E[R(\bar W)].
\label{eq:active-pop-nonterminal-output}
\end{align}
Thus, taking $\bar W=W_n$ recovers \eqref{eq:active-pop-representation}, while taking $\bar W=W_m$ gives the population risk of $W_m$.

If the evaluated predictor is allowed to vary with the row, the target changes accordingly. For instance, if the $t$-th summand is evaluated at a selected-history predictor $\bar W_t$, then
\begin{align}
\frac{1}{n}\sum_{t=1}^n
\mathbb E\!\left[
\frac{Q_{t,1-U_t}}{p_{t,1-U_t}}\ell(\bar W_t,Z_{t,1-U_t})
\right]
=
\frac{1}{n}\sum_{t=1}^n \mathbb E[R(\bar W_t)].
\label{eq:active-pop-roundwise-output}
\end{align}
In particular, using $W_t$ in the $t$-th summand gives the average population risk $n^{-1}\sum_{t=1}^n\mathbb E[R(W_t)]$, not $\mathbb E[R(W_n)]$ in general. The terminal notation in the main theorem is therefore a choice of evaluation target, not an additional requirement of the importance-weighting identity.

\section{Bandit proofs}
\label{app:proofs-bandit-application}

This appendix proves Theorem~\ref{thm:bandit-main}. The proof is presented first: after fixing the proof-side transcript, we prove the smoothing, empirical-optimization, selected--ghost, second-moment, Bernstein-transfer, and entropy-reduction lemmas in the order in which they are used. The two conceptual points that are not needed to follow the algebra---why direct substitution into Theorem~\ref{thm:general-bernstein} fails and why the bandit proof retains past proof-side feedback---are collected after the proof. Throughout, fix a terminal time \(t\), and let \(\mathcal P_K\) be the set of probability vectors on \([K]\).

As in the main text, the behavior policy \(\pi_s\) is only required to be predictable and to satisfy the exploration lower bound \(\pi_s(a)\ge\epsilon_s\). A canonical implementation is to set \(\pi_s=\widetilde\rho_{s-1}^{\rm exp}\) for \(s\ge2\), with an initial policy satisfying \(\pi_1(a)\ge\epsilon_1\). The proof below uses only the lower bound, and therefore also covers other predictable exploration schedules.

Recall that the proof-side row-enlarged random element is defined before revealing the current selector by
\begin{align*}
G_0^{\rm bd}
&=
(H_0^{\rm bd},\pi_1,A_{1,0},R_{1,0},A_{1,1},R_{1,1}),
\\
G_{s-1}^{\rm bd}
&=
(G_{s-2}^{\rm bd},U_{s-1},\pi_s,A_{s,0},R_{s,0},A_{s,1},R_{s,1}),
\qquad s\ge2.
\end{align*}
We write \(\mathcal G_{s-1}^{\rm bd}\coloneqq \sigma(G_{s-1}^{\rm bd})\) only when a sigma-field is needed for conditional expectations. The random element \(G_{s-1}^{\rm bd}\) contains the current proof-side feedback pair but not the current selector, and
\[
\mathbb P(U_s=0\mid\mathcal G_{s-1}^{\rm bd})
=
\mathbb P(U_s=1\mid\mathcal G_{s-1}^{\rm bd})
=
\frac12
\qquad\text{a.s.}
\]
For \(u\in\{0,1\}\) and \(a\in[K]\), define
\[
    R^a_{s,u}
    \coloneqq 
    \frac{\mathbf 1\{A_{s,u}=a\}R_{s,u}}{\pi_s(a)},
    \qquad
    G_{s,u}(a)\coloneqq R^{a^\star}_{s,u}-R^a_{s,u}.
\]
For \(\rho\in\mathcal P_K\), write
\[
    G_{s,u}(\rho)\coloneqq \sum_{a=1}^K \rho(a)G_{s,u}(a).
\]
\subsection{Proof roadmap and decomposition}
\label{app:bandit-proof-roadmap}

The objective is the same static-regret quantity as in the main theorem,
\(\mathbb E[\Delta(\widetilde\rho_t^{\rm exp})]\). We split it into the three terms
\begin{align}
\mathbb E[\Delta(\widetilde\rho_t^{\rm exp})]
&=
\underbrace{\mathbb E[\Delta(\widetilde\rho_t^{\rm exp})-\Delta(\rho_t^{\rm exp})]}_{\text{smoothing}}
+
\underbrace{\left(\mathbb E[\Delta(\rho_t^{\rm exp})]-2\mathbb E[\widehat\Delta_t(\rho_t^{\rm exp})]\right)}_{\text{selector-SCMI transfer}}
+
\underbrace{2\mathbb E[\widehat\Delta_t(\rho_t^{\rm exp})]}_{\text{empirical optimization}}.
\label{eq:bandit-proof-decomposition}
\end{align}
Lemma~\ref{lem:bandit-smoothing-cost-revised} bounds the smoothing term by \(K\epsilon_{t+1}\), and Lemma~\ref{lem:bandit-expweights-empirical-revised} bounds the empirical-optimization term by \(2\log K/\gamma_t\). The rest of the proof is devoted to the middle term.

To handle the middle term, draw a virtual arm \(\bar A_t\) from \(\rho_t^{\rm exp}\). By Lemma~\ref{lem:bandit-virtual-arm-identities}, this turns the posterior averages into selected and ghost posterior-average identities. Hence the selector-SCMI transfer term is a train--holdout gap for the signed process \(G_{s,u}(\bar A_t)\). The bookkeeping behind the auxiliary draw is included in Subsection~\ref{app:bandit-transfer-preliminaries}.

For \(s\le t\), define the terminally evaluated fixed-branch gap and the corresponding selector-SCMI budget by
\[
    \mathsf G^{\rm bd}_{s,t}\coloneqq G_{s,0}(\bar A_t),
    \qquad
    \varepsilon_s\coloneqq 2U_s-1,
\]
\[
    \mathcal I_t^{\rm bd,G}
    \coloneqq 
    \sum_{s=1}^t
    I\!\left(\mathsf G^{\rm bd}_{s,t};U_s
        \mid G^{\rm bd}_{s-1}\right).
\]
The selector condition \(\mathbb P(U_s=0\mid G^{\rm bd}_{s-1})=\mathbb P(U_s=1\mid G^{\rm bd}_{s-1})=1/2\) implies that \(U_s\) is independent of \(G^{\rm bd}_{s-1}\), and hence \(I(U_s;G^{\rm bd}_{s-1})=0\). Therefore the same quantity can also be written as
\[
    \mathcal I_t^{\rm bd,G}
    =
    \sum_{s=1}^t
    I\!\left((G^{\rm bd}_{s-1},\mathsf G^{\rm bd}_{s,t});U_s\right).
\]
The proof below is organized in four blocks.  First, Lemmas~\ref{lem:bandit-smoothing-cost-revised} and~\ref{lem:bandit-expweights-empirical-revised} handle the two non-transfer terms in \eqref{eq:bandit-proof-decomposition}.  Second, Subsection~\ref{app:bandit-transfer-preliminaries} introduces the auxiliary arm and proves the selected--ghost and row-swap identities, which turn the transfer term into signed selector correlations of \(\mathsf G^{\rm bd}_{s,t}\).  Third, Subsection~\ref{app:bandit-scmi-transfer-proof} proves the Bernstein type inequality: the branch on which the fixed coordinate is unselected gives the usual importance-weighted bandit factor, while the selected branch is paid for by selector information.  Finally, Lemma~\ref{lem:bandit-entropy-reduction-revised} reduces the selector-SCMI budget to a posterior KL term, and the proof of Theorem~\ref{thm:bandit-main} combines the pieces. Subsection~\ref{app:bandit-direct-bernstein-discussion} explains, after the proof, why this route is not a direct application of Theorem~\ref{thm:general-bernstein}.

\subsection{Smoothing and exponential weights}

\begin{lemma}[Smoothing cost]
\label{lem:bandit-smoothing-cost-revised}
For every \(\rho\in\mathcal P_K\) and every \(0\le\epsilon\le 1/K\), define
\[
    \widetilde\rho(a)\coloneqq (1-K\epsilon)\rho(a)+\epsilon.
\]
Then
\[
    \Delta(\widetilde\rho)\le \Delta(\rho)+K\epsilon.
\]
\end{lemma}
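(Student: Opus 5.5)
The plan is to compute $\Delta(\widetilde\rho)$ directly from the definition $\Delta(\rho)=\sum_a\rho(a)\Delta(a)$ and use linearity in the distribution. Substituting $\widetilde\rho(a)=(1-K\epsilon)\rho(a)+\epsilon$ gives
\[
\Delta(\widetilde\rho)=(1-K\epsilon)\Delta(\rho)+\epsilon\sum_{a=1}^K\Delta(a).
\]
Before using this, I would note that $\widetilde\rho$ really is a probability vector. Its entries are nonnegative because $0\le\epsilon\le1/K$, and they sum to $(1-K\epsilon)+K\epsilon=1$.

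The bound then follows from two elementary facts about the gaps. Since rewards lie in $[0,1]$, every mean satisfies $r(a)\in[0,1]$. Since $a^\star$ is optimal, $0\le\Delta(a)=r(a^\star)-r(a)\le 1$ for every arm. It follows that $\Delta(\rho)\ge0$, so $(1-K\epsilon)\Delta(\rho)\le\Delta(\rho)$ (using $1-K\epsilon\ge0$). Also $\epsilon\sum_a\Delta(a)\le\epsilon K$. Combining the two gives $\Delta(\widetilde\rho)\le\Delta(\rho)+K\epsilon$.

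The argument could be sharpened, since $\Delta(a^\star)=0$ yields $(K-1)\epsilon$, but this is not needed. There is no real obstacle here. The only point to state explicitly is where boundedness of the rewards enters, namely in $\Delta(a)\le 1$, and where nonnegativity of the gaps enters, namely in dropping the factor $(1-K\epsilon)$.
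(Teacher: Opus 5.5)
Your proof is correct and is essentially the paper's own argument: expand $\Delta(\widetilde\rho)=(1-K\epsilon)\Delta(\rho)+\epsilon\sum_a\Delta(a)$ by linearity and use $0\le\Delta(a)\le 1$. One small slip is that the step $(1-K\epsilon)\Delta(\rho)\le\Delta(\rho)$ uses $\epsilon\ge0$ together with $\Delta(\rho)\ge0$, not $1-K\epsilon\ge0$; the latter is only needed for $\widetilde\rho$ to be a probability vector.
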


\begin{proof}
Since rewards lie in \([0,1]\), we have \(0\le\Delta(a)\le1\) for every arm.
Therefore
\[
    \Delta(\widetilde\rho)
    =
    (1-K\epsilon)\Delta(\rho)
    +
    \epsilon\sum_{a=1}^K\Delta(a)
    \le
    \Delta(\rho)+K\epsilon.
\]
\end{proof}

\begin{lemma}[Exponential-weights empirical optimization]
\label{lem:bandit-expweights-empirical-revised}
Let
\[
    \rho_t^{\rm exp}(a)
    =
    \frac{\exp\{\gamma_t\widehat r_t(a)\}}
    {\sum_{b=1}^K\exp\{\gamma_t\widehat r_t(b)\}}.
\]
Then, pathwise,
\[
    \widehat\Delta_t(\rho_t^{\rm exp})
    \le
    \frac{\log K}{\gamma_t}.
\]
\end{lemma}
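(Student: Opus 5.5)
The plan is to read $\rho_t^{\rm exp}$ as the exact maximizer of a KL-regularized linear objective. We then test the resulting variational inequality against the point mass at the optimal arm $a^\star$.

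First, we rewrite $\widehat\Delta_t(\rho)$ as $\widehat r_t(a^\star)-\sum_a\rho(a)\widehat r_t(a)$. This is just linearity in $\rho$, using $\sum_a\rho(a)=1$. So it suffices to show
\[
\widehat r_t(a^\star)-\sum_a\rho_t^{\rm exp}(a)\widehat r_t(a)\le \frac{\log K}{\gamma_t}.
\]

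Second, we invoke the Gibbs variational principle with the uniform prior $\mu(a)=1/K$. For any $\rho\in\mathcal P_K$,
\[
\gamma_t\sum_a\rho(a)\widehat r_t(a)-\KL(\rho\|\mu)\le \log\sum_a\mu(a)e^{\gamma_t\widehat r_t(a)}.
\]
Equality holds exactly at $\rho=\rho_t^{\rm exp}$, because $\rho_t^{\rm exp}$ is the tilt of $\mu$ by $e^{\gamma_t\widehat r_t}$. This can be checked directly: $\log\rho_t^{\rm exp}(a)-\log\mu(a)=\gamma_t\widehat r_t(a)-\log Z$, where $Z\coloneqq\sum_b\mu(b)e^{\gamma_t\widehat r_t(b)}$. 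Hence
\[
\gamma_t\langle\rho_t^{\rm exp},\widehat r_t\rangle-\KL(\rho_t^{\rm exp}\|\mu)=\log Z.
\]

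Third, we apply the inequality at $\rho=\delta_{a^\star}$, for which $\KL(\delta_{a^\star}\|\mu)=\log K$. This gives $\gamma_t\widehat r_t(a^\star)-\log K\le\log Z$. Combining with the equality at $\rho_t^{\rm exp}$ yields
\[
\gamma_t\widehat r_t(a^\star)-\log K\le\gamma_t\langle\rho_t^{\rm exp},\widehat r_t\rangle-\KL(\rho_t^{\rm exp}\|\mu)\le \gamma_t\langle\rho_t^{\rm exp},\widehat r_t\rangle,
\]
since KL is nonnegative. Dividing by $\gamma_t>0$ gives the claim pathwise. Nothing probabilistic is used; the only inputs are the definition of $\rho_t^{\rm exp}$ and finiteness of $\widehat r_t$, which holds because $\pi_s(a)\ge\epsilon_s>0$.

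There is no real obstacle. The only care needed is the equality case of the variational formula, which we verify by the direct log-ratio computation rather than by citing Donsker--Varadhan.
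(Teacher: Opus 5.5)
Your proposal is correct and follows essentially the same route as the paper. Both arguments use the Gibbs variational formula for $\rho_t^{\rm exp}$ relative to the uniform prior, compare against the point mass $\delta_{a^\star}$ (whose divergence from the uniform prior is $\log K$), and drop the nonnegative divergence of $\rho_t^{\rm exp}$. Your explicit log-ratio check of the equality case only makes this step self-contained; the argument is otherwise unchanged.
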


\begin{proof}
Let \(\pi_0\) be the uniform distribution on \([K]\).  The Gibbs variational
formula gives
\[
    \sum_{a=1}^K \rho_t^{\rm exp}(a)\widehat r_t(a)
    -
    \frac1{\gamma_t}\mathrm{KL}(\rho_t^{\rm exp}\Vert\pi_0)
    =
    \sup_{\rho\in\mathcal P_K}
    \left\{
        \sum_{a=1}^K\rho(a)\widehat r_t(a)
        -
        \frac1{\gamma_t}\mathrm{KL}(\rho\Vert\pi_0)
    \right\}.
\]
Comparing with the point mass \(\delta_{a^\star}\) yields
\[
    \sum_{a=1}^K \rho_t^{\rm exp}(a)\widehat r_t(a)
    \ge
    \widehat r_t(a^\star)-\frac{\log K}{\gamma_t}.
\]
Hence
\[
    \widehat\Delta_t(\rho_t^{\rm exp})
    =
    \widehat r_t(a^\star)
    -
    \sum_{a=1}^K\rho_t^{\rm exp}(a)\widehat r_t(a)
    \le
    \frac{\log K}{\gamma_t}.
\]
\end{proof}

\subsection{Preparation: auxiliary arm and row-swap identities}
\label{app:bandit-transfer-preliminaries}
\label{app:bandit-auxiliary-arm-kernel}

\paragraph{Auxiliary-arm kernel.}
The auxiliary arm construction should be read as a proof-side randomization, not as an observation available to the learner.  The learner computes \(\rho_t^{\rm exp}\) from the selected transcript only.  The proof then adds an independent random seed and draws \(\bar A_t\) from the selected-history-measurable distribution \(\rho_t^{\rm exp}\).  Even when the proof later conditions on a larger object containing ghost feedback coordinates, those ghost coordinates are not inputs to the sampling rule for \(\bar A_t\).

\paragraph{Virtual-arm selected--ghost identities.}

\begin{lemma}[Virtual-arm selected--ghost identities]
\label{lem:bandit-virtual-arm-identities}
For the virtual arm \(\bar A_t\) drawn from \(\rho_t^{\rm exp}\),
\begin{align}
    \mathbb E[\widehat\Delta_t(\rho_t^{\rm exp})]
    &=
    \frac1t\sum_{s=1}^t
    \mathbb E\bigl[G_{s,U_s}(\bar A_t)\bigr],
    \label{eq:bandit-selected-virtual-identity}
    \\
    \mathbb E[\Delta(\rho_t^{\rm exp})]
    &=
    \frac1t\sum_{s=1}^t
    \mathbb E\bigl[G_{s,1-U_s}(\bar A_t)\bigr].
    \label{eq:bandit-ghost-virtual-identity}
\end{align}
\end{lemma}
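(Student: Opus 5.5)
Both identities follow from linearity once the auxiliary draw is averaged out, plus, for the ghost identity, an unbiasedness computation for the importance-weighted estimator on the unselected coordinate.

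\emph{Selected identity.} Let $\mathcal T_t$ denote the selected-transcript field generated by $H_t^{\rm bd}$. The posterior $\rho_t^{\rm exp}$ is $\mathcal T_t$-measurable. The virtual arm $\bar A_t$ is drawn from $\rho_t^{\rm exp}$ using an independent seed. For each $s\le t$, $G_{s,U_s}(a)=R_s^{a^\star}-R_s^a$ is a function of the selected feedback $(\pi_s,A_s,R_s)$, hence $\mathcal T_t$-measurable. Therefore
\[
\mathbb E[G_{s,U_s}(\bar A_t)\mid \mathcal T_t]=\sum_a\rho_t^{\rm exp}(a)G_{s,U_s}(a)=G_{s,U_s}(\rho_t^{\rm exp}).
\]
Averaging over $s$ gives $\widehat\Delta_t(\rho_t^{\rm exp})$ by the identity stated after \eqref{eq:def-bandit-G}. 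Taking expectations yields \eqref{eq:bandit-selected-virtual-identity}.

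\emph{Ghost identity.} Fix $s\le t$ and condition on the enlarged field $\mathcal T_t\vee\sigma(U_s)$. This field fixes $\rho_t^{\rm exp}$, the policy $\pi_s$, and the selected coordinate, but not the unselected pair $(A_{s,1-U_s},R_{s,1-U_s})$.

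The key claim is that, conditionally on this field, the ghost pair is still a fresh draw. That is, $A_{s,1-U_s}\sim\pi_s$ and then $R_{s,1-U_s}\sim P_{A_{s,1-U_s}}$. This holds because the two coordinates are conditionally independent given $\mathcal F_{s-1}^{\rm bd}$. It also uses that $U_s$ is uniform given $G_{s-1}^{\rm bd}$, and that the learner uses only selected coordinates after round $s$. So the future selected transcript depends on the ghost pair only through nothing, which is the same argument as in Lemma~\ref{lem:active-pop-representation}. I expect this conditional-independence bookkeeping, with the recursively enlarged proof field, to be the main point requiring care. I would argue it by writing the joint law of all round-$s$ and later variables and noting that the ghost coordinate's kernel factors out.

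Given the claim, unbiasedness of importance weighting follows from $\pi_s(a)\ge\epsilon_s>0$:
\[
\mathbb E[R_{s,1-U_s}^a\mid\cdot]=\pi_s(a)r(a)/\pi_s(a)=r(a).
\]
Hence $\mathbb E[G_{s,1-U_s}(a)\mid\cdot]=\Delta(a)$. Integrating $a$ against $\rho_t^{\rm exp}$, which is fixed under this conditioning, and using the independent seed for $\bar A_t$ gives
\[
\mathbb E[G_{s,1-U_s}(\bar A_t)]=\mathbb E[\Delta(\rho_t^{\rm exp})].
\]
This holds for every $s$, so averaging over $s\le t$ gives \eqref{eq:bandit-ghost-virtual-identity}. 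Finite sums and bounded weights, which are at most $1/\epsilon_s$, justify all interchanges of expectation.
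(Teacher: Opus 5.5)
Your proposal is correct and follows the paper's proof. For the selected identity you condition on the selected transcript and average out the virtual arm. For the ghost identity you use that the unselected coordinate is a fresh draw from $\pi_s$ given the selected transcript, so the importance-weighted gap is unbiased for $\Delta(a)$. The only differences are cosmetic: you condition on $\mathcal T_t\vee\sigma(U_s)$ and integrate $\bar A_t$ out through its independent seed, whereas the paper conditions on $(H_t^{\rm sel},\bar A_t)$ directly, and you spell out the conditional-independence bookkeeping that the paper asserts in one line.
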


\begin{proof}
Let \(H_t^{\rm sel}\) denote the selected bandit transcript up to time \(t\), including the behavior policies, selected actions, and observed rewards.  Conditional on \(H_t^{\rm sel}\), the posterior \(\rho_t^{\rm exp}\) and all selected statistics \(G_{s,U_s}(a)\) are fixed, while \(\bar A_t\) is drawn from \(\rho_t^{\rm exp}\).  Therefore
\[
\begin{aligned}
    \mathbb E\!\left[
        \frac1t\sum_{s=1}^tG_{s,U_s}(\bar A_t)
        \middle| H_t^{\rm sel}
    \right]
    &=
    \sum_{a=1}^K\rho_t^{\rm exp}(a)
    \frac1t\sum_{s=1}^tG_{s,U_s}(a)  \\
    &=
    \widehat\Delta_t(\rho_t^{\rm exp}).
\end{aligned}
\]
Taking expectations proves \eqref{eq:bandit-selected-virtual-identity}.

For the ghost identity, fix \(s\le t\).  Conditional on the selected transcript and on \(\bar A_t\), the unselected feedback coordinate \((A_{s,1-U_s},R_{s,1-U_s})\) is not used in the selected path. It is an independent observed-feedback draw from the same behavior policy \(\pi_s\). Hence, for every arm \(a\),
\[
    \mathbb E\bigl[G_{s,1-U_s}(a)
        \mid H_t^{\rm sel},\bar A_t\bigr]
    =
    \Delta(a).
\]
Substituting \(a=\bar A_t\) gives
\[
    \mathbb E\bigl[G_{s,1-U_s}(\bar A_t)
        \mid H_t^{\rm sel},\bar A_t\bigr]
    =
    \Delta(\bar A_t).
\]
Averaging over \(\bar A_t\sim\rho_t^{\rm exp}\) conditionally on the selected transcript yields
\[
    \mathbb E\bigl[G_{s,1-U_s}(\bar A_t)
        \mid H_t^{\rm sel}\bigr]
    =
    \Delta(\rho_t^{\rm exp}).
\]
This identity holds for each \(s\), so averaging over \(s=1,\ldots,t\) and then taking expectation proves \eqref{eq:bandit-ghost-virtual-identity}.
\end{proof}

\paragraph{Sequential row-swap identity.}

\begin{lemma}[Sequential row-swap identity]
\label{lem:bandit-row-swap-revised}
For every \(t\ge1\),
\[
    \mathbb E[\Delta(\rho_t^{\rm exp})]-\mathbb E[\widehat\Delta_t(\rho_t^{\rm exp})]
    =
    \frac2t\sum_{s=1}^t
    \mathbb E\bigl[\varepsilon_s\mathsf G^{\rm bd}_{s,t}\bigr].
\]
\end{lemma}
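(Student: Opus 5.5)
The plan is to reduce the claim, via Lemma~\ref{lem:bandit-virtual-arm-identities}, to a roundwise identity in which the coordinate labels are swapped and the selector is flipped, in the spirit of the terminal active-learning Lemma~\ref{lem:active-row-swap-identity}. By that lemma, the left-hand side equals
\[
\frac1t\sum_{s=1}^t\Bigl(\mathbb E\bigl[G_{s,1-U_s}(\bar A_t)\bigr]-\mathbb E\bigl[G_{s,U_s}(\bar A_t)\bigr]\Bigr).
\]
It therefore suffices to show, for each fixed $s\le t$,
\[
\mathbb E\bigl[G_{s,1-U_s}(\bar A_t)\bigr]-\mathbb E\bigl[G_{s,U_s}(\bar A_t)\bigr]=2\,\mathbb E\bigl[\varepsilon_s G_{s,0}(\bar A_t)\bigr].
\]
Write
\[
\alpha_s\coloneqq\mathbb E\bigl[G_{s,0}(\bar A_t)\mathbf 1\{U_s=0\}\bigr],
\qquad
\beta_s\coloneqq\mathbb E\bigl[G_{s,0}(\bar A_t)\mathbf 1\{U_s=1\}\bigr].
\]
With this notation, $\mathbb E[\varepsilon_s G_{s,0}(\bar A_t)]=\beta_s-\alpha_s$.

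The key step is the swap symmetry. Consider the transformation that exchanges $(A_{s,0},R_{s,0})\leftrightarrow(A_{s,1},R_{s,1})$ and replaces $U_s$ by $1-U_s$, leaving all other randomness fixed. I would verify three invariances in turn.
\begin{enumerate}
\item \emph{The joint law is invariant.} Conditional on $\mathcal F_{s-1}^{\rm bd}$, the two coordinates are i.i.d.\ draws from $\pi_s$ and $P_{A}$, so the pair is exchangeable. Moreover, $U_s$ is uniform and independent of $G_{s-1}^{\rm bd}$, so flipping it preserves its law jointly with the swapped pair.
\item \emph{The selected pair is unchanged.} The selected feedback $(A_{s,U_s},R_{s,U_s})$ is the same before and after the transformation. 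Hence the whole selected transcript $H_t^{\rm sel}$, the posterior $\rho_t^{\rm exp}$, and the auxiliary draw $\bar A_t$ are unchanged. This uses that future rows $s'>s$ are generated from kernels depending only on the selected history, and that $\bar A_t$ uses an independent seed.
\item \emph{The coordinate gaps are exchanged.} Since $\pi_s$ is unchanged, the transformation maps $G_{s,0}(\bar A_t)$ to $G_{s,1}(\bar A_t)$.
\end{enumerate}
Together these give
\[
\mathbb E\bigl[G_{s,1}(\bar A_t)\mathbf 1\{U_s=1\}\bigr]=\alpha_s,
\qquad
\mathbb E\bigl[G_{s,1}(\bar A_t)\mathbf 1\{U_s=0\}\bigr]=\beta_s.
\]

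The bookkeeping then closes the argument. Splitting on the value of $U_s$,
\[
\mathbb E\bigl[G_{s,U_s}(\bar A_t)\bigr]=\alpha_s+\alpha_s=2\alpha_s,
\qquad
\mathbb E\bigl[G_{s,1-U_s}(\bar A_t)\bigr]=\beta_s+\beta_s=2\beta_s.
\]
The per-round gap is thus $2(\beta_s-\alpha_s)=2\,\mathbb E[\varepsilon_s\mathsf G^{\rm bd}_{s,t}]$. Summing over $s$ and dividing by $t$ gives the lemma. Integrability is automatic because $|G_{s,u}(a)|\le 1/\epsilon_s$, using $\pi_s(a)\ge\epsilon_s$.

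I expect the main obstacle to be the rigorous justification of the invariance in the first two items. The later rounds' policies $\pi_{s'}$ and rows $\widetilde Z_{s'}$ depend on the past only through selected quantities. The enlarged field $G_{s'-1}^{\rm bd}$ retains $U_s$ and the ghost coordinates, but the selector conditions at later rounds are stated given this enlarged field. I would handle this by realizing the process through a coupling in which each $U_{s'}$ is an independent fair coin and each row is generated from a kernel of the selected history. Under that coupling, the transformed process has the same law, provided the selector symmetry ensures that later $U_{s'}$ do not depend on which coordinate was labeled $0$ at round $s$.
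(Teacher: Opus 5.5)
Your proposal is correct and follows essentially the same route as the paper. You reduce to per-round selected--ghost differences via Lemma~\ref{lem:bandit-virtual-arm-identities}, then invoke the symmetry that swaps the two round-$s$ feedback coordinates while flipping $U_s$, which leaves the selected transcript, $\rho_t^{\rm exp}$, and $\bar A_t$ unchanged. Your $\alpha_s/\beta_s$ bookkeeping is equivalent to the paper's step $\mathbb E[\varepsilon_s G_{s,1}(\bar A_t)]=-\mathbb E[\varepsilon_s G_{s,0}(\bar A_t)]$ combined with the pointwise identity $G_{s,1-U_s}(\bar A_t)-G_{s,U_s}(\bar A_t)=\varepsilon_s\{G_{s,0}(\bar A_t)-G_{s,1}(\bar A_t)\}$. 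Your coupling remark (independent fair selectors, rows generated from selected-history kernels) makes explicit an invariance that the paper only asserts informally.
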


\begin{proof}
By Lemma~\ref{lem:bandit-virtual-arm-identities},
\[
    \mathbb E[\Delta(\rho_t^{\rm exp})]-\mathbb E[\widehat\Delta_t(\rho_t^{\rm exp})]
    =
    \frac1t\sum_{s=1}^t
    \mathbb E\bigl[
        G_{s,1-U_s}(\bar A_t)-G_{s,U_s}(\bar A_t)
    \bigr].
\]
It remains to rewrite the selected--ghost difference by using the fixed branch \(0\).  Fix
\(s\).  Conditional on the pre-row selected history, the feedback pair
\((A_{s,0},R_{s,0}),(A_{s,1},R_{s,1})\) is exchangeable and \(U_s\) is an independent \(\mathrm{Bernoulli}(1/2)\)
selector.  If we swap the labels \(0\) and \(1\) in the current row and
simultaneously replace \(U_s\) by \(1-U_s\), the selected feedback
\((A_{s,U_s},R_{s,U_s})\) is unchanged.  Consequently, the future selected history, the
future behavior policies, the terminal posterior \(\rho_t^{\rm exp}\), and the
conditional law of \(\bar A_t\) are unchanged.  This local row-swap symmetry
implies
\[
    \mathbb E\bigl[\varepsilon_sG_{s,1}(\bar A_t)\bigr]
    =
    -\mathbb E\bigl[\varepsilon_sG_{s,0}(\bar A_t)\bigr].
\]
Since
\[
    G_{s,1-U_s}(\bar A_t)-G_{s,U_s}(\bar A_t)
    =
    \varepsilon_s\{G_{s,0}(\bar A_t)-G_{s,1}(\bar A_t)\},
\]
we obtain
\[
    \mathbb E\bigl[
        G_{s,1-U_s}(\bar A_t)-G_{s,U_s}(\bar A_t)
    \bigr]
    =
    2\mathbb E\bigl[\varepsilon_sG_{s,0}(\bar A_t)\bigr]
    =
    2\mathbb E\bigl[\varepsilon_s\mathsf G^{\rm bd}_{s,t}\bigr].
\]
Substituting this identity into the previous display proves the lemma.
\end{proof}

\subsection{Selector-SCMI Bernstein transfer}
\label{app:bandit-scmi-transfer-proof}
\label{app:bandit-scmi-bernstein-transfer}

\subsubsection{Selector comparison for the square term}

The following elementary comparison is the device used later to convert a
ghost-branch square bound into an unconditional square bound.  It keeps the
conditioning explicit because this is the nonstandard part of the bandit
transfer argument.

\begin{lemma}[Selector comparison for a bounded square]
\label{lem:selector-comparison-square-revised}
Let \(G\) be a random element, and let \(U\in\{0,1\}\) satisfy
\[
    \mathbb P(U=0\mid G)=\mathbb P(U=1\mid G)=\frac12
    \qquad\text{a.s.}
\]
Let \(X\) be any real-valued random variable on the same probability space,
possibly dependent on \(U\) and \(G\), with \(|X|\le b\).  Then
\[
    \mathbb E[X^2]
    \le
    \frac32\mathbb E[X^2\mid U=1]
    +
    20b^2 I(X;U\mid G).
\]
\end{lemma}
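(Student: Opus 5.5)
The plan is to reduce the claim to a one-sided comparison between the two selector branches and to obtain that comparison from the shifted-Rademacher one-step inequality, Lemma~\ref{lem:one-step-wang}. If \(b=0\), then \(X=0\) a.s. and the claim is trivial, so assume \(b>0\) and set \(Y\coloneqq X^2/b^2\in[0,1]\). Write \(E_u\coloneqq \mathbb E[Y\mid U=u]\) for \(u\in\{0,1\}\). Selector symmetry gives \(\mathbb P(U=1)=1/2\) unconditionally. Hence
\[
\mathbb E[Y]=\tfrac12E_0+\tfrac12E_1
\qquad\text{and}\qquad
\mathbb E[Y\mathbf 1\{U=u\}]=\tfrac12E_u .
\]
It therefore suffices to show \(E_0\le 2E_1+c\,I(X;U\mid G)\) for some \(c\le 40\). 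Then \(\mathbb E[Y]\le \tfrac32E_1+\tfrac c2 I(X;U\mid G)\), and multiplying by \(b^2\) gives the claim.

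To get the branch comparison, I apply Lemma~\ref{lem:one-step-wang} with the flipped selector \(U'\coloneqq 1-U\) and with \(\ell=Y\). The flipped selector is still conditionally uniform given \(G\), and \(I(Y;U'\mid G)=I(Y;U\mid G)\). With \(\varepsilon'=2U'-1\), the unconditional form of the lemma unpacks as follows, using \(\mathbb E[\varepsilon' Y]=\tfrac12(E_0-E_1)\) and \(\mathbb E[Y]=\tfrac12(E_0+E_1)\):
\[
(C+2)\,\mathbb E\bigl[\widetilde\varepsilon' Y\bigr]
=\tfrac{C+2}{2}(E_0-E_1)-\tfrac C2(E_0+E_1)
=E_0-(1+C)E_1 .
\]
The lemma then yields
\[
E_0\le (1+C)E_1+\tfrac1\eta I(Y;U\mid G)
\]
for every feasible pair \((C,\eta)\) satisfying \eqref{eq:wang-feasible}.

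I take \(C=1\) and \(\eta=1/8\), which is feasible by the check already made in the proof of Corollary~\ref{cor:general-wang-explicit}. This gives \(E_0\le 2E_1+8\,I(Y;U\mid G)\). Since \(Y\) is a measurable function of \(X\), conditional data processing gives \(I(Y;U\mid G)\le I(X;U\mid G)\). Combining with the first paragraph,
\[
\mathbb E[X^2]\le \tfrac32\,\mathbb E[X^2\mid U=1]+4b^2 I(X;U\mid G),
\]
which is stronger than the stated constant \(20\).

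The only delicate point is bookkeeping. Lemma~\ref{lem:one-step-wang} is stated with outer expectations \(\mathbb E[\mathbb E[\cdot\mid G]]\), and these must be translated into the unconditional branch means \(E_u\). This translation works because the outer expectation of the conditional correlation equals the plain expectation \(\mathbb E[\widetilde\varepsilon' Y]\), and because \(\mathbb P(U=u)=1/2\) turns \(\mathbb E[Y\mathbf 1\{U=u\}]\) into \(\tfrac12E_u\). I would check the sign convention of \(\widetilde\varepsilon'\) carefully, since flipping \(U\) is exactly what places the unselected branch \(U=0\) on the left-hand side.
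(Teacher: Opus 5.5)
Your argument is correct, and it reaches the lemma by a different route than the paper. The paper works pointwise in $g$. Writing $p_u(g)=\mathbb E[Y\mid G=g,U=u]$ and $m(g)=\tfrac12(p_0(g)+p_1(g))$, it splits on whether $p_0(g)\le 2p_1(g)$. In the imbalanced case it passes $Y$ through an auxiliary channel $D\sim\mathrm{Bernoulli}(Y)$, so that $\mathbb P(U=0\mid D=1,G=g)\ge 2/3$. This gives $I(Y;U\mid G=g)\ge I(D;U\mid G=g)\ge m(g)\,\mathrm{KL}(\mathrm{Bernoulli}(2/3)\Vert\mathrm{Bernoulli}(1/2))>m(g)/20$, and the paper then integrates over $G$.

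You instead observe that the one-sided branch comparison is exactly the shifted-Rademacher inequality of Lemma~\ref{lem:one-step-wang}, applied to the flipped selector $1-U$. Your bookkeeping $(C+2)\,\mathbb E[\widetilde\varepsilon' Y]=E_0-(1+C)E_1$ is right. The pair $(C,\eta)=(1,1/8)$ is feasible because $e^{1/4}+e^{-1/2}\approx 1.89\le 2$. The result has constant $4$ instead of $20$; the largest feasible $\eta$ at $C=1$ is $\tfrac12\log\frac{1+\sqrt5}{2}$, which would give about $2.08$.

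Your route buys several things. It is short and reuses machinery already proved in the paper. It yields the general trade-off $\mathbb E[Y]\le(1+C/2)E_1+\frac{1}{2\eta}I(Y;U\mid G)$ over all feasible pairs. It also makes the conceptual point that the bandit selector comparison is the same mechanism as the shifted-Rademacher fast-rate bound. Inserted into Lemma~\ref{lem:bandit-selector-square-revised}, it would slightly lower the constant $52$ in Theorem~\ref{thm:bandit-main}.

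The paper's route buys self-containment: it uses only data processing and one Bernoulli KL evaluation, with no variational formula or feasibility condition. It also gives a transparent ``imbalance between branches forces selector information'' interpretation. Its pointwise-in-$g$ form is not an essential difference, since your Donsker--Varadhan step can equally be run conditionally on $G=g$.
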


\begin{proof}
If \(b=0\), then \(X=0\) almost surely and the claim is immediate.  Assume
\(b>0\), and normalize
\[
    Y\coloneqq \frac{X^2}{b^2}\in[0,1].
\]
Since \(Y\) is a measurable function of \(X\), data processing gives
\[
    I(Y;U\mid G)
    \le
    I(X;U\mid G).
\]
Thus it is enough to prove the normalized inequality
\[
    \mathbb E[Y]
    \le
    \frac32\mathbb E[Y\mid U=1]
    +
    20 I(Y;U\mid G).
\]

We prove this bound conditionally on the proof-side information.  Fix a regular
conditional law given a realization \(G=g\).  By the displayed assumption, the conditional law of \(U\) given \(G=g\) is uniform.  Write
\[
    p_u(g)\coloneqq \mathbb E[Y\mid G=g,U=u],
    \qquad
    m(g)\coloneqq \mathbb E[Y\mid G=g]
    =\frac{p_0(g)+p_1(g)}2,
\]
and let
\[
    i(g)\coloneqq I(Y;U\mid G=g)
\]
be the ordinary mutual information under this conditional law.  The key
pointwise inequality is
\[
    m(g)
    \le
    \frac32 p_1(g)+20 i(g).
\]

There are two cases.  If \(p_0(g)\le 2p_1(g)\), then branch
\(0\) is not much larger than branch \(1\), and
\[
    m(g)=\frac{p_0(g)+p_1(g)}2
    \le
    \frac32 p_1(g).
\]
Thus the pointwise inequality holds without using the information term.

It remains to consider the case \(p_0(g)>2p_1(g)\).  In this case,
the value of \(Y\) is much larger on branch \(U=0\) than on branch
\(U=1\).  We turn this imbalance into a lower bound on selector information.
Under the conditional law given \(G=g\), introduce a Bernoulli
variable \(D\) by
\[
    D\mid(Y,U,G=g)\sim\mathrm{Bernoulli}(Y),
\]
independently of everything else given \((Y,U,G=g)\).  Then
\[
    \mathbb P(D=1\mid U=u,G=g)=p_u(g),
    \qquad
    \mathbb P(D=1\mid G=g)=m(g).
\]
Since \(D\) is generated from \(Y\) without using any additional
information about \(U\), data processing gives
\[
    i(g)
    =I(Y;U\mid G=g)
    \ge
    I(D;U\mid G=g).
\]
On the event \(D=1\), Bayes' rule gives
\[
    q(g)
    \coloneqq 
    \mathbb P(U=0\mid D=1,G=g)
    =
    \frac{p_0(g)}{p_0(g)+p_1(g)}
    \ge
    \frac23.
\]
Thus observing \(D=1\) makes the selector biased toward branch \(0\).  Since
\(U\) is uniform given \(G=g\), the mutual information between
\(D\) and \(U\) decomposes as
\[
\begin{aligned}
    I(D;U\mid G=g)
    &=
    \sum_{d\in\{0,1\}}
    \mathbb P(D=d\mid G=g)
    \KL\!\left(
        P_{U\mid D=d,G=g}
        \middle\Vert
        P_{U\mid G=g}
    \right)  \\
    &\ge
    \mathbb P(D=1\mid G=g)
    \KL\!\left(
        \mathrm{Bernoulli}(q(g))
        \middle\Vert
        \mathrm{Bernoulli}\!\left(\frac12\right)
    \right)  \\
    &\ge
    m(g)
    \KL\!\left(
        \mathrm{Bernoulli}\!\left(\frac23\right)
        \middle\Vert
        \mathrm{Bernoulli}\!\left(\frac12\right)
    \right)
    >
    \frac{m(g)}{20}.
\end{aligned}
\]
In the second inequality we used that
\(q\mapsto\KL(\mathrm{Bernoulli}(q)\Vert\mathrm{Bernoulli}(1/2))\) is
increasing for \(q\ge1/2\).  The last inequality follows from
\[
    \KL\!\left(
        \mathrm{Bernoulli}\!\left(\frac23\right)
        \middle\Vert
        \mathrm{Bernoulli}\!\left(\frac12\right)
    \right)
    =
    \frac23\log\frac43+
    \frac13\log\frac23
    >
    \frac1{20}.
\]
Hence \(m(g)\le 20i(g)\), and the pointwise inequality follows in
this case as well.

Finally, integrate the pointwise inequality with respect to the law of
\(G\).  We have
\[
    \mathbb E[m(G)]=\mathbb E[Y],
    \qquad
    \mathbb E[i(G)]=I(Y;U\mid G).
\]
Moreover, the displayed assumption gives \(\mathbb P(U=1)=1/2\) and
\[
    \mathbb E[Y\mid U=1]
    =
    \frac{\mathbb E[p_1(G)\,\mathbb P(U=1\mid G)]}{\mathbb P(U=1)}
    =
    \mathbb E[p_1(G)].
\]
Therefore
\[
    \mathbb E[Y]
    \le
    \frac32\mathbb E[Y\mid U=1]
    +20I(Y;U\mid G).
\]
Multiplying by \(b^2\) and using
\(I(Y;U\mid G)\le I(X;U\mid G)\) completes the proof.
\end{proof}

\subsubsection{Ghost-branch second moment and square control}

The next two lemmas provide the variance input for the selector-SCMI transfer. Lemma~\ref{lem:bandit-ghost-second-moment-revised} controls the genuine ghost branch by the usual importance-weighted bandit calculation. Lemma~\ref{lem:bandit-selector-square-revised} then converts this ghost-branch control into an unconditional square bound for the fixed-branch statistic, paying any selected-branch inflation by selector information.

\begin{lemma}[Ghost-branch second moment]
\label{lem:bandit-ghost-second-moment-revised}
For every \(s\le t\),
\[
    \mathbb E\bigl[(\mathsf G^{\rm bd}_{s,t})^2\mid U_s=1\bigr]
    \le
    \frac{2}{\epsilon_t\Delta_{\min}}
    \mathbb E\bigl[\Delta(\rho_t^{\rm exp})\mid U_s=1\bigr].
\]
\end{lemma}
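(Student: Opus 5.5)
The plan is to condition on the event \(\{U_s=1\}\), where the fixed coordinate \(0\) is the \emph{ghost} coordinate of row \(s\). I then compute the second moment of \(G_{s,0}(a)\) for each fixed arm \(a\) by the usual importance-weighting calculation, and average over the virtual arm \(\bar A_t\sim\rho_t^{\rm exp}\). Let \(H_t^{\rm sel}\) denote the selected transcript, as in the proof of Lemma~\ref{lem:bandit-virtual-arm-identities}.

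\emph{Step 1 (ghost independence).} On \(\{U_s=1\}\), the learner receives \((A_{s,1},R_{s,1})\). By construction, the pair \((A_{s,0},R_{s,0})\) is drawn from \(\pi_s\) and \(P_{A_{s,0}}\), independently of \((A_{s,1},R_{s,1})\) given \(\mathcal F_{s-1}^{\rm bd}\). The selector \(U_s\) is uniform given \(\mathcal G_{s-1}^{\rm bd}\), hence independent of the current row. The future selected recursion, the policies \(\pi_{s+1},\dots\), the posterior \(\rho_t^{\rm exp}\), and the auxiliary draw of \(\bar A_t\) use only selected coordinates and independent seeds. Therefore, conditional on \((U_s=1, H_t^{\rm sel},\bar A_t)\), the pair \((A_{s,0},R_{s,0})\) still has law \(A\sim\pi_s\), \(R\mid A\sim P_A\), with \(\pi_s\) fixed. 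This is the same reasoning as in the ghost identity of Lemma~\ref{lem:bandit-virtual-arm-identities}, now restricted to the branch \(U_s=1\). I expect this to be the main obstacle: the argument must track that conditioning on \(U_s=1\) does not leak information about coordinate \(0\), and that \(\bar A_t\) is sampled from a selected-history-measurable law.

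\emph{Step 2 (per-arm second moment).} Fix an arm \(a\). If \(a=a^\star\), then \(G_{s,0}(a)=0\). If \(a\neq a^\star\), the indicators \(\mathbf 1\{A_{s,0}=a^\star\}\) and \(\mathbf 1\{A_{s,0}=a\}\) are disjoint, so the cross term in the square vanishes. Since rewards lie in \([0,1]\), this gives
\[
\mathbb E\bigl[G_{s,0}(a)^2\mid \cdot\bigr]
=\frac{\mathbb E[R^2\mid A=a^\star]}{\pi_s(a^\star)}+\frac{\mathbb E[R^2\mid A=a]}{\pi_s(a)}
\le \frac{2}{\epsilon_s}\le\frac{2}{\epsilon_t}.
\]
Here I use \(\pi_s(\cdot)\ge\epsilon_s\) and that \(\epsilon\) is nonincreasing with \(s\le t\). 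Combining both cases, the bound is at most \((2/\epsilon_t)\mathbf 1\{a\ne a^\star\}\). Uniqueness of the optimal arm gives \(\mathbf 1\{a\ne a^\star\}\le\Delta(a)/\Delta_{\min}\).

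\emph{Step 3 (averaging).} Substitute \(a=\bar A_t\) and use the fact that \(\bar A_t\sim\rho_t^{\rm exp}\) given \(H_t^{\rm sel}\). This yields
\[
\mathbb E\bigl[(\mathsf G^{\rm bd}_{s,t})^2\mid U_s=1,H_t^{\rm sel}\bigr]\le\frac{2}{\epsilon_t\Delta_{\min}}\Delta(\rho_t^{\rm exp}).
\]
Taking conditional expectation given \(\{U_s=1\}\) by the tower property proves the lemma.
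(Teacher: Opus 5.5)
Your proposal is correct and follows essentially the same route as the paper: on $\{U_s=1\}$ coordinate $0$ is a fresh ghost draw from $\pi_s$, independent of the selected transcript and of $\bar A_t$. The paper then uses the same disjoint-indicator bound $\mathbb E[G(a)^2]\le 1/\pi(a^\star)+1/\pi(a)\le 2/\epsilon_t$, the same bound $\mathbf 1\{a\ne a^\star\}\le\Delta(a)/\Delta_{\min}$, and the same averaging over $\bar A_t\sim\rho_t^{\rm exp}$.
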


\begin{proof}
On the event \(U_s=1\), the fixed coordinate \(0\) is the unselected coordinate.  Thus \((A_{s,0},R_{s,0})\) is not used in the selected history.  The terminal posterior \(\rho_t^{\rm exp}\) is a function of the selected history, and \(\bar A_t\) is drawn from this posterior using fresh randomness.  Hence, after conditioning on the selected history, on \(\rho_t^{\rm exp}\), and on the behavior policy \(\pi_s\), the coordinate \((A_{s,0},R_{s,0})\) is an independent fresh bandit-feedback sample from \(\pi_s\), independent of \(\bar A_t\).

It remains to do a fixed-distribution calculation.  Fix \(\rho\in\mathcal P_K\), draw \(\bar A\sim\rho\), and draw an independent feedback coordinate \((A,R)\) under a policy \(\pi\) satisfying \(\pi(a)\ge\epsilon_t\) for all arms.  Define
\[
    R^a\coloneqq \frac{\mathbf 1\{A=a\}R}{\pi(a)},
    \qquad
    G(a)\coloneqq R^{a^\star}-R^a.
\]
If \(a=a^\star\), then \(G(a)=0\).  If \(a\ne a^\star\), then
\[
    G(a)
    =
    \frac{\mathbf 1\{A=a^\star\}R}{\pi(a^\star)}
    -
    \frac{\mathbf 1\{A=a\}R}{\pi(a)}.
\]
The two indicators cannot be one at the same time.  Since \(0\le R\le1\),
\[
\begin{aligned}
    \mathbb E[G(a)^2]
    &\le
    \mathbb E\!\left[\frac{\mathbf 1\{A=a^\star\}R^2}{\pi(a^\star)^2}\right]
    +
    \mathbb E\!\left[\frac{\mathbf 1\{A=a\}R^2}{\pi(a)^2}\right]  \\
    &\le
    \frac1{\pi(a^\star)}+
    \frac1{\pi(a)}
    \le
    \frac2{\epsilon_t}.
\end{aligned}
\]
Averaging over \(\bar A\sim\rho\) gives
\[
\begin{aligned}
    \mathbb E[G(\bar A)^2\mid \rho]
    &\le
    \frac2{\epsilon_t}
    \sum_{a\ne a^\star}\rho(a)   \\
    &\le
    \frac2{\epsilon_t\Delta_{\min}}
    \sum_{a\ne a^\star}\rho(a)\Delta(a)
    =
    \frac2{\epsilon_t\Delta_{\min}}\Delta(\rho).
\end{aligned}
\]
Applying this conditional calculation with \(\rho=\rho_t^{\rm exp}\) on the event \(U_s=1\) proves the claim.
\end{proof}

The preceding lemma controls only the branch \(U_s=1\). The next lemma is the bridge to the Bernstein type transfer: it bounds the unconditional second moment needed by the exponential-moment step. If the selected branch makes the square large, the fixed-branch statistic must reveal information about whether \(U_s=0\) or \(U_s=1\), and this cost is charged by \(I(\mathsf G^{\rm bd}_{s,t};U_s\mid G^{\rm bd}_{s-1})\).

\begin{lemma}[Sequential selector square bound]
\label{lem:bandit-selector-square-revised}
For every \(t\ge1\),
\[
    \frac1t\sum_{s=1}^t
    \mathbb E\bigl[(\mathsf G^{\rm bd}_{s,t})^2\bigr]
    \le
    \frac{6}{\epsilon_t\Delta_{\min}}\mathbb E[\Delta(\rho_t^{\rm exp})]
    +
    \frac{20}{t\epsilon_t^2}\mathcal I_t^{\rm bd,G}.
\]
\end{lemma}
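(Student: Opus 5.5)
The plan is to apply Lemma~\ref{lem:selector-comparison-square-revised} roundwise and then bound the ghost branch by Lemma~\ref{lem:bandit-ghost-second-moment-revised}. Fix \(s\le t\) and take \(G=G^{\rm bd}_{s-1}\), \(U=U_s\), \(X=\mathsf G^{\rm bd}_{s,t}=G_{s,0}(\bar A_t)\). The selector symmetry \(\mathbb P(U_s=0\mid\mathcal G^{\rm bd}_{s-1})=1/2\) is exactly the hypothesis of the comparison lemma, so we only need an almost-sure bound \(|X|\le b\).

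\textbf{Envelope.} Since \(G_{s,0}(a)=R^{a^\star}_{s,0}-R^a_{s,0}\), at most one indicator is nonzero and \(R\in[0,1]\). Hence \(|G_{s,0}(a)|\le 1/\min_a\pi_s(a)\le 1/\epsilon_s\). Because \(\epsilon\) is nonincreasing and \(s\le t\), this gives \(1/\epsilon_s\le 1/\epsilon_t\), so \(b=1/\epsilon_t\) is a valid choice. The comparison lemma then gives
\[
\mathbb E\bigl[(\mathsf G^{\rm bd}_{s,t})^2\bigr]\le \frac32\,\mathbb E\bigl[(\mathsf G^{\rm bd}_{s,t})^2\mid U_s=1\bigr]+\frac{20}{\epsilon_t^2}\,I\bigl(\mathsf G^{\rm bd}_{s,t};U_s\mid G^{\rm bd}_{s-1}\bigr).
\]

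\textbf{Ghost branch.} By Lemma~\ref{lem:bandit-ghost-second-moment-revised}, the first term is at most \(\frac{3}{\epsilon_t\Delta_{\min}}\mathbb E[\Delta(\rho_t^{\rm exp})\mid U_s=1]\). We must convert this conditional expectation into an unconditional one. Since \(\Delta(\rho_t^{\rm exp})\ge 0\) and \(\mathbb P(U_s=1)=1/2\),
\[
\mathbb E[\Delta(\rho_t^{\rm exp})\mid U_s=1]=2\,\mathbb E[\Delta(\rho_t^{\rm exp})\mathbf 1\{U_s=1\}]\le 2\,\mathbb E[\Delta(\rho_t^{\rm exp})].
\]
This factor of \(2\) accounts for the constant \(6=\frac32\cdot 2\cdot 2\) in the statement. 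It is crude, but it is all the claimed constant requires. Since no sharper exchangeability argument is needed, the bound is valid even though \(\rho_t^{\rm exp}\) depends on \(U_s\).

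\textbf{Averaging.} Averaging over \(s=1,\dots,t\) yields \(\frac{6}{\epsilon_t\Delta_{\min}}\mathbb E[\Delta(\rho_t^{\rm exp})]\) plus \(\frac{20}{t\epsilon_t^2}\sum_s I(\mathsf G^{\rm bd}_{s,t};U_s\mid G^{\rm bd}_{s-1})=\frac{20}{t\epsilon_t^2}\mathcal I_t^{\rm bd,G}\), which is the claim.

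The main point to check is the envelope step. The uniform lower bound \(\pi_s(a)\ge\epsilon_s\ge\epsilon_t\) must be used so that one \(b\) works for all \(s\le t\). One must also confirm that Lemma~\ref{lem:bandit-ghost-second-moment-revised} applies with the \(\epsilon_t\) normalization. Everything else is bookkeeping with nonnegativity.
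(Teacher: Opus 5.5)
Your proposal is correct and follows the paper's proof step for step. You use the envelope \(b=\epsilon_t^{-1}\) from \(\pi_s(a)\ge\epsilon_s\ge\epsilon_t\), the selector-comparison lemma with \(G=G^{\rm bd}_{s-1}\) and \(U=U_s\), the ghost-branch second-moment lemma, and the conversion \(\mathbb E[\Delta(\rho_t^{\rm exp})\mid U_s=1]\le 2\,\mathbb E[\Delta(\rho_t^{\rm exp})]\) via nonnegativity and \(\mathbb P(U_s=1)=1/2\), which yields \(6=\tfrac32\cdot2\cdot2\) and, after averaging over \(s\), the term \(\mathcal I_t^{\rm bd,G}\) exactly as in the paper.
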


\begin{proof}
Fix \(s\le t\).  Because \((\epsilon_s)\) is nonincreasing, \(\pi_s(a)\ge\epsilon_t\) for every arm.  Since each feedback coordinate contains only one played arm,
\[
    |G_{s,0}(a)|\le \epsilon_t^{-1},
    \qquad a\in[K].
\]
Thus \(|\mathsf G^{\rm bd}_{s,t}|=|G_{s,0}(\bar A_t)|\le\epsilon_t^{-1}\).

Apply Lemma~\ref{lem:selector-comparison-square-revised} with
\[
    U=U_s,
    \qquad
    G=G^{\rm bd}_{s-1},
    \qquad
    X=\mathsf G^{\rm bd}_{s,t},
    \qquad
    b=\epsilon_t^{-1}.
\]
By construction,
\(\mathbb P(U_s=0\mid G_{s-1}^{\rm bd})=\mathbb P(U_s=1\mid G_{s-1}^{\rm bd})=1/2\), so Lemma~\ref{lem:selector-comparison-square-revised} gives
\[
\begin{aligned}
    \mathbb E\bigl[(\mathsf G^{\rm bd}_{s,t})^2\bigr]
    &\le
    \frac32
    \mathbb E\bigl[(\mathsf G^{\rm bd}_{s,t})^2\mid U_s=1\bigr]  \\
    &\quad+
    20\epsilon_t^{-2}
    I\!\left(\mathsf G^{\rm bd}_{s,t};U_s
        \mid G^{\rm bd}_{s-1}\right).
\end{aligned}
\]
The first term is now a genuine ghost-branch quantity, so Lemma~\ref{lem:bandit-ghost-second-moment-revised} yields
\[
    \mathbb E\bigl[(\mathsf G^{\rm bd}_{s,t})^2\mid U_s=1\bigr]
    \le
    \frac{2}{\epsilon_t\Delta_{\min}}
    \mathbb E\bigl[\Delta(\rho_t^{\rm exp})\mid U_s=1\bigr].
\]
Finally, \(\Delta(\rho_t^{\rm exp})\ge0\) and \(\mathbb P(U_s=1)=1/2\), so
\[
    \mathbb E\bigl[\Delta(\rho_t^{\rm exp})\mid U_s=1\bigr]
    \le
    2\mathbb E[\Delta(\rho_t^{\rm exp})].
\]
Combining the last three displays gives
\[
    \mathbb E\bigl[(\mathsf G^{\rm bd}_{s,t})^2\bigr]
    \le
    \frac{6}{\epsilon_t\Delta_{\min}}\mathbb E[\Delta(\rho_t^{\rm exp})]
    +
    20\epsilon_t^{-2}
    I\!\left(\mathsf G^{\rm bd}_{s,t};U_s
        \mid G^{\rm bd}_{s-1}\right).
\]
Averaging this inequality over \(s=1,\ldots,t\) proves the claim.
\end{proof}

\subsubsection{Transfer inequality}

The transfer lemma now combines the preceding ingredients. Lemma~\ref{lem:bandit-row-swap-revised} rewrites the middle term as signed selector correlations. The conditional Donsker--Varadhan step below bounds each signed correlation by one selector-SCMI increment plus a second-moment term. Lemma~\ref{lem:bandit-selector-square-revised} supplies the second-moment bound and leaves a copy of the target regret that can be absorbed into the left-hand side.

\begin{lemma}[Selector-SCMI Bernstein transfer]
\label{lem:bandit-scmi-bernstein-transfer-revised}
For every \(t\ge1\),
\[
    \mathbb E[\Delta(\rho_t^{\rm exp})]
    \le
    2\mathbb E[\widehat\Delta_t(\rho_t^{\rm exp})]
    +
    \frac{52}{t\epsilon_t\Delta_{\min}}
    \mathcal I_t^{\rm bd,G}.
\]
\end{lemma}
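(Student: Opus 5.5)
The plan is to follow the Bernstein template of Theorem~\ref{thm:general-bernstein}. The second-moment input will not be the Bernstein condition \eqref{eq:general-bernstein-condition}; it will be the sequential selector square bound of Lemma~\ref{lem:bandit-selector-square-revised}.

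\emph{Step 1.} By Lemma~\ref{lem:bandit-row-swap-revised},
\[
\mathbb E[\Delta(\rho_t^{\rm exp})]-\mathbb E[\widehat\Delta_t(\rho_t^{\rm exp})]
=\frac2t\sum_{s=1}^t \mathbb E\bigl[\varepsilon_s\mathsf G^{\rm bd}_{s,t}\bigr].
\]
So it suffices to bound each signed selector correlation.

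\emph{Step 2.} For each $s\le t$, apply Lemma~\ref{lem:one-step-bernstein} with $G=G^{\rm bd}_{s-1}$, $\ell=\mathsf G^{\rm bd}_{s,t}$, $U=U_s$ and $b=\epsilon_t^{-1}$. This is legitimate for two reasons. The selector is conditionally uniform given $G^{\rm bd}_{s-1}$ by construction. The bound $|\mathsf G^{\rm bd}_{s,t}|\le\epsilon_t^{-1}$ was already checked in the proof of Lemma~\ref{lem:bandit-selector-square-revised}. By the tower property $\mathbb E[\mathbb E[\varepsilon_s\ell\mid G]]=\mathbb E[\varepsilon_s\ell]$. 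Summing over $s$ and using $2\psi_b(\lambda)/\lambda\le c_\lambda\coloneqq \lambda/(1-\lambda b/3)$ from \eqref{eq:psi-b-bound} gives, for every $0<\lambda<3\epsilon_t$,
\[
\mathbb E[\Delta(\rho_t^{\rm exp})]-\mathbb E[\widehat\Delta_t(\rho_t^{\rm exp})]
\le \frac{2}{\lambda t}\mathcal I_t^{\rm bd,G}
+ c_\lambda\,\frac1t\sum_{s=1}^t\mathbb E\bigl[(\mathsf G^{\rm bd}_{s,t})^2\bigr].
\]

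\emph{Step 3.} Insert Lemma~\ref{lem:bandit-selector-square-revised}. The square term becomes
\[
c_\lambda\cdot\frac{6}{\epsilon_t\Delta_{\min}}\mathbb E[\Delta(\rho_t^{\rm exp})]
+c_\lambda\cdot\frac{20}{t\epsilon_t^2}\mathcal I_t^{\rm bd,G}.
\]
Choose $\lambda$ so that $c_\lambda=\epsilon_t\Delta_{\min}/12$, which makes the coefficient of $\mathbb E[\Delta(\rho_t^{\rm exp})]$ exactly $1/2$. Explicitly,
\[
\lambda=\frac{\epsilon_t\Delta_{\min}/12}{1+\Delta_{\min}/36}<3\epsilon_t .
\]
Absorbing half of the regret into the left-hand side and multiplying by $2$ produces the factor $2$ in front of $\mathbb E[\widehat\Delta_t(\rho_t^{\rm exp})]$. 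The information coefficient is then
\[
\frac{2}{t}\Bigl(\frac{2}{\lambda}+\frac{20c_\lambda}{\epsilon_t^2}\Bigr)
=\frac{2}{t\epsilon_t\Delta_{\min}}\Bigl(24\bigl(1+\tfrac{\Delta_{\min}}{36}\bigr)+\tfrac{5}{3}\Delta_{\min}^2\Bigr).
\]

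\emph{Main obstacle.} Verifying the constant $52$ is the step I expect to fail. With only $\Delta_{\min}\le1$, the bracket is at most $24+\tfrac{2}{3}+\tfrac{5}{3}\approx 26.33$, giving $\approx 52.7$ rather than $52$. I would try three fixes in order:
\begin{itemize}
\item[(i)] Re-optimize $\lambda$ jointly, balancing $2/\lambda$ against $20c_\lambda/\epsilon_t^2$, instead of fixing $c_\lambda$ a priori.
\item[(ii)] Use a slightly sharper envelope than \eqref{eq:psi-b-bound}, for instance $\psi_b(\lambda)$ itself, at the small value $\lambda b\le 1/12$ used here.
\item[(iii)] Exploit that $\Delta_{\min}<1$ strictly whenever $K\ge2$ and rewards are in $[0,1]$ with non-degenerate gaps, or accept a constant of the same order.
\end{itemize}
The structural argument, namely row-swap, Donsker--Varadhan/Bernstein, the selector square bound, and absorption, is otherwise routine. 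The only other care needed is that the absorption step requires $\mathbb E[\Delta(\rho_t^{\rm exp})]<\infty$, which holds trivially since $0\le\Delta\le1$.
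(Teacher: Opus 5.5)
\textbf{Verdict: genuine gap.} Your skeleton matches the paper's proof step for step: the row-swap identity, a conditional Donsker--Varadhan step, Lemma~\ref{lem:bandit-selector-square-revised}, and absorbing half of the regret. As written, however, it does not prove the lemma with the constant $52$, and none of your three fixes closes the gap.

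\textbf{Where the constant is lost.} The loss is in Step 2. Lemma~\ref{lem:one-step-bernstein} uses the Bennett envelope, so the square term carries $c_\lambda=\lambda/(1-\lambda b/3)$ rather than $\lambda$. Forcing $c_\lambda=\epsilon_t\Delta_{\min}/12$ then inflates the dominant term $2/\lambda$ from $24/(\epsilon_t\Delta_{\min})$ to $24(1+\Delta_{\min}/36)/(\epsilon_t\Delta_{\min})$. Your resulting constant $2(24+\tfrac23\Delta_{\min}+\tfrac53\Delta_{\min}^2)$ is at most $52$ only when $\Delta_{\min}\le(\sqrt{31}-1)/5\approx0.91$.

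\textbf{Why your fixes fail.}
\begin{itemize}
\item Fix (i) changes nothing. To obtain the factor $2$ you need $c_\lambda\le\epsilon_t\Delta_{\min}/12$: a smaller $c_\lambda$ can be rounded up using $\mathbb E[\Delta(\rho_t^{\rm exp})]\ge0$, while a larger one puts a coefficient above $2$ on $\mathbb E[\widehat\Delta_t(\rho_t^{\rm exp})]$. In the scaled variable $x=\lambda/\epsilon_t$, the objective $2/x+20x/(1-x/3)$ is decreasing up to $x\approx0.29$, but the constraint forces $x\le 3/37\approx0.08$. So your boundary choice is already optimal.
\item Fix (ii) also fails. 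Exactly, $2\psi_b(\lambda)/\lambda=\epsilon_t(x+x^2/3+x^3/12+\cdots)$. This has the same first-order correction $x^2/3$ as the Bernstein bound, so the constant stays $\approx52.66$ at $\Delta_{\min}=1$.
\item Fix (iii) rests on a false premise. $\Delta_{\min}=1$ is attainable, for example with $P_{a^\star}=\delta_1$ and $P_a=\delta_0$ for $a\ne a^\star$. Accepting a constant of the same order proves a different statement.
\end{itemize}

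\textbf{The missing idea.} Under the product reference law the multiplier is a symmetric Rademacher sign, so no Bernstein-type correction is needed. Apply conditional Donsker--Varadhan to $(\mathsf G^{\rm bd}_{s,t},U_s)$ given $G^{\rm bd}_{s-1}$ with the self-normalized test function $f(x,u)=\lambda(2u-1)x-\lambda^2x^2/2$. With $U_s'$ an independent uniform copy, $\mathbb E_{U_s'}[e^{f(x,U_s')}]=e^{-\lambda^2x^2/2}\cosh(\lambda x)\le1$ for every real $x$. This gives
\[
\mathbb E\bigl[\varepsilon_s\mathsf G^{\rm bd}_{s,t}\bigr]\le\frac1\lambda I\bigl(\mathsf G^{\rm bd}_{s,t};U_s\mid G^{\rm bd}_{s-1}\bigr)+\frac\lambda2\,\mathbb E\bigl[(\mathsf G^{\rm bd}_{s,t})^2\bigr].
\]
In effect $c_\lambda=\lambda$ exactly, and boundedness is not even used in this step. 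With $\lambda=\epsilon_t\Delta_{\min}/12$, the regret coefficient is $1/2$. The information coefficient after absorption is $2(24+\tfrac53\Delta_{\min}^2)/(t\epsilon_t\Delta_{\min})\le 154/(3t\epsilon_t\Delta_{\min})<52/(t\epsilon_t\Delta_{\min})$, which is the paper's argument.

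A Bennett-type envelope built on $\cosh$, which kills the odd moments and hence the $x^2/3$ term, would also rescue the constant. The test-function trick is simpler.
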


\begin{proof}
By Lemma~\ref{lem:bandit-row-swap-revised}, the transfer term satisfies
\[
    \mathbb E[\Delta(\rho_t^{\rm exp})]-\mathbb E[\widehat\Delta_t(\rho_t^{\rm exp})]
    =
    \frac2t\sum_{s=1}^t
    \mathbb E\bigl[\varepsilon_s\mathsf G^{\rm bd}_{s,t}\bigr].
\]
We bound each signed correlation.  Conditionally on \(G^{\rm bd}_{s-1}\), the selector \(U_s\) is an independent \(\mathrm{Bernoulli}(1/2)\) bit.  The conditional Donsker--Varadhan inequality, applied to \((\mathsf G^{\rm bd}_{s,t},U_s)\) given \(G^{\rm bd}_{s-1}\), gives, for every \(\lambda>0\),
\[
    \mathbb E\bigl[\varepsilon_s\mathsf G^{\rm bd}_{s,t}\bigr]
    \le
    \frac1\lambda
    I\!\left(\mathsf G^{\rm bd}_{s,t};U_s
        \mid G^{\rm bd}_{s-1}\right)
    +
    \frac\lambda2
    \mathbb E\bigl[(\mathsf G^{\rm bd}_{s,t})^2\bigr].
\]
where we used the following inequality; under the reference law in which \(U_s'\) is an independent uniform copy of \(U_s\), for every \(x\in\mathbb R\),
\[
    \mathbb E_{U_s'}\exp\left\{
        \lambda(2U_s'-1)x-\frac{\lambda^2x^2}{2}
    \right\}
    =
    e^{-\lambda^2x^2/2}\cosh(\lambda x)
    \le 1.
\]
Summing over \(s\le t\) gives
\[
    \mathbb E[\Delta(\rho_t^{\rm exp})]-\mathbb E[\widehat\Delta_t(\rho_t^{\rm exp})]
    \le
    \frac{2}{\lambda t}\mathcal I_t^{\rm bd,G}
    +
    \frac\lambda t\sum_{s=1}^t
    \mathbb E\bigl[(\mathsf G^{\rm bd}_{s,t})^2\bigr].
\]
We now insert Lemma~\ref{lem:bandit-selector-square-revised}. This is the point where the selected-branch difficulty is converted into the selector-SCMI budget:
\[
\begin{aligned}
    \mathbb E[\Delta(\rho_t^{\rm exp})]-\mathbb E[\widehat\Delta_t(\rho_t^{\rm exp})]
    &\le
    \frac{6\lambda}{\epsilon_t\Delta_{\min}}\mathbb E[\Delta(\rho_t^{\rm exp})]  \\
    &\quad+
    \left(
        \frac{2}{\lambda t}
        +
        \frac{20\lambda}{t\epsilon_t^2}
    \right)
    \mathcal I_t^{\rm bd,G}.
\end{aligned}
\]
Choose \(\lambda\coloneqq \epsilon_t\Delta_{\min}/12\).  Then the coefficient of \(\mathbb E[\Delta(\rho_t^{\rm exp})]\) is \(1/2\).  Also, since rewards lie in \([0,1]\), \(\Delta_{\min}\le1\), and therefore
\[
    \frac2\lambda+\frac{20\lambda}{\epsilon_t^2}
    =
    \frac{24}{\epsilon_t\Delta_{\min}}
    +
    \frac{5\Delta_{\min}}{3\epsilon_t}
    \le
    \frac{77}{3\epsilon_t\Delta_{\min}}.
\]
Thus
\[
    \mathbb E[\Delta(\rho_t^{\rm exp})]-\mathbb E[\widehat\Delta_t(\rho_t^{\rm exp})]
    \le
    \frac12\mathbb E[\Delta(\rho_t^{\rm exp})]
    +
    \frac{77}{3t\epsilon_t\Delta_{\min}}\mathcal I_t^{\rm bd,G}.
\]
Moving the half of \(\mathbb E[\Delta(\rho_t^{\rm exp})]\) to the left and multiplying by two gives
\[
    \mathbb E[\Delta(\rho_t^{\rm exp})]
    \le
    2\mathbb E[\widehat\Delta_t(\rho_t^{\rm exp})]
    +
    \frac{154}{3t\epsilon_t\Delta_{\min}}
    \mathcal I_t^{\rm bd,G}.
\]
Since \(154/3\le52\), the stated bound follows.
\end{proof}

\subsection{Entropy reduction and proof of Theorem~\ref{thm:bandit-main}}

\begin{lemma}[Entropy reduction for the selector-SCMI term]
\label{lem:bandit-entropy-reduction-revised}
For every prior \(\pi_0\in\mathcal P_K\),
\[
    \mathcal I_t^{\rm bd,G}
    \le
    \mathbb E\bigl[\mathrm{KL}(\rho_t^{\rm exp}\Vert\pi_0)\bigr].
\]
In particular, for the uniform prior,
\[
    \mathcal I_t^{\rm bd,G}\le \log K.
\]
\end{lemma}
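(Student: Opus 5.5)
We bound each increment by an information term involving the virtual arm and the whole selector path, then sum by the chain rule. Write \(\mathcal T_t\) for the full proof-side transcript \(G^{\rm bd}_{t-1}\) together with \(U_t\). This fixes every feedback pair, every selector and every behavior policy, and hence the selected history and the posterior \(\rho_t^{\rm exp}\).

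\textbf{Step 1: reduce each increment to the arm.} By \eqref{eq:def-bandit-G}, conditional on \(G^{\rm bd}_{s-1}\) the pair \((A_{s,0},R_{s,0})\) and \(\pi_s\) are fixed. Hence \(\mathsf G^{\rm bd}_{s,t}=G_{s,0}(\bar A_t)\) is a measurable function of \(\bar A_t\) given \(G^{\rm bd}_{s-1}\). Conditional data processing then gives
\[
I(\mathsf G^{\rm bd}_{s,t};U_s\mid G^{\rm bd}_{s-1})\le I(\bar A_t;U_s\mid G^{\rm bd}_{s-1}).
\]

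\textbf{Step 2: pass to the selector prefix and sum.} The recursive definition \eqref{eq:bandit-proof-field-recursive} gives
\[
G^{\rm bd}_{s-1}=(G^{\rm bd}_{s-2},U_{s-1},\pi_s,\text{pair}_s).
\]
Write \(F_s\) for the feedback-and-policy part up to round \(s\). Then \(G^{\rm bd}_{s-1}\) is equivalent to \((F_s,U_{<s})\).

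I would enlarge the conditioning to the whole feedback part \(F_t\) of \(\mathcal T_t\). The aim is to show that conditioning on \(F_t\) instead of \(F_s\) can only increase the information \(\bar A_t\) carries about \(U_s\). To see this, write the chain rule
\[
I(\bar A_t,F_{s+1:t};U_s\mid G^{\rm bd}_{s-1})=I(F_{s+1:t};U_s\mid G^{\rm bd}_{s-1})+I(\bar A_t;U_s\mid G^{\rm bd}_{s-1},F_{s+1:t}).
\]
The left side upper bounds \(I(\bar A_t;U_s\mid G^{\rm bd}_{s-1})\). The first term on the right is not zero in general, because future policies depend on \(U_s\).

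To avoid this term, I would use the counterfactual-tree coupling of Appendix~\ref{app:seq-covers-littlestone}. Sample all node-level feedback \(\mathsf T\) independently of \(U^t\), so that the selectors are i.i.d.\ fair bits independent of \(\mathsf T\), and read the realized process off along the path \(U^t\). As in the proof of Theorem~\ref{thm:online-seq-growth}, \(G^{\rm bd}_{s-1}\) is then a function of \(\Xi_s=(\mathsf T,U_{<s})\) and \(U_s\perp\Xi_s\). This yields
\[
I(\bar A_t;U_s\mid G^{\rm bd}_{s-1})\le I(\bar A_t;U_s\mid \mathsf T,U_{<s}).
\]
Summing over \(s\le t\) and applying the chain rule gives
\[
\mathcal I_t^{\rm bd,G}\le I(\bar A_t;U^t\mid\mathsf T).
\]
One must check that the law of \(G^{\rm bd}_{s-1}\) under the coupling matches the original construction, so that the left side is unchanged. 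This matching holds because the coupled process and the original one have the same law.

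\textbf{Step 3: reduce to a posterior KL.} Given \((\mathsf T,U^t)\), the arm \(\bar A_t\) has law \(\rho_t^{\rm exp}\), by the auxiliary-arm kernel. The golden formula gives, for any fixed \(\pi_0\),
\[
I(\bar A_t;U^t\mid\mathsf T)=\mathbb E\,\mathrm{KL}(\rho_t^{\rm exp}\Vert P_{\bar A_t\mid\mathsf T})\le\mathbb E\,\mathrm{KL}(\rho_t^{\rm exp}\Vert\pi_0),
\]
since the conditional marginal given \(\mathsf T\) minimizes the expected KL over priors. For the uniform prior, \(\mathrm{KL}(\rho\Vert\pi_0)=\log K-H(\rho)\le\log K\).

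\textbf{Main obstacle.} The hard part is Step 2. The conditioning field \(G^{\rm bd}_{s-1}\) of the statement is not literally a prefix of a single ordered transcript that is independent of the selectors, because the behavior policies \(\pi_{s'}\) depend on earlier selectors. The tree coupling is the tool I would try first. The delicate point is making rigorous the claim that \(U_s\) is conditionally independent of \(\Xi_s\) given \(G^{\rm bd}_{s-1}\), which is what lets the extra conditioning on \(\Xi_s\) be inserted without loss. If that fails, the fallback is to bound \(\mathcal I_t^{\rm bd,G}\le I(\bar A_t;U^t\mid\mathsf T)\) by \(H(\bar A_t)\le\log K\) directly, which recovers the uniform-prior case.
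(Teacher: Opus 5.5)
Your proof is correct, but Step 2 takes a detour that the paper avoids. The paper never needs the transcript to be independent of the selectors. It writes $V_s\coloneqq(\pi_s,A_{s,0},R_{s,0},A_{s,1},R_{s,1})$ and notes that $G^{\rm bd}_{s-1}=(H^{\rm bd}_0,V_1,U_1,\ldots,V_{s-1},U_{s-1},V_s)$ is exactly the prefix preceding $U_s$ in the single interleaved transcript $G^{\rm all}_t=(H^{\rm bd}_0,V_1,U_1,\ldots,V_t,U_t)$. It then expands $I(\bar A_t;G^{\rm all}_t)$ by the chain rule in that order. The selector increments are precisely $I(\bar A_t;U_s\mid G^{\rm bd}_{s-1})$. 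The remaining terms, $I(\bar A_t;H^{\rm bd}_0)$ and $I(\bar A_t;V_s\mid\cdots)$, are merely nonnegative and are dropped. Finally, $I(\bar A_t;G^{\rm all}_t)=\mathbb E[\mathrm{KL}(\rho_t^{\rm exp}\Vert q)]\le\mathbb E[\mathrm{KL}(\rho_t^{\rm exp}\Vert\pi_0)]$.

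Your chain-rule attempt with $F_{s+1:t}$ failed because you placed future feedback in the conditioning of the $U_s$ term. In the interleaved expansion, future feedback appears in later terms instead. So the ``main obstacle'' you name is not real: the recursive definition \eqref{eq:bandit-proof-field-recursive} is built precisely so that $G^{\rm bd}_{s-1}$ is such a prefix.

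Your coupling route is nonetheless valid, and the step you call delicate is immediate. Since $U_s\perp\Xi_s$ and $G^{\rm bd}_{s-1}$ is a function of $\Xi_s$, you get $U_s\perp\Xi_s\mid G^{\rm bd}_{s-1}$ in one line.

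What your route buys is a slightly sharper intermediate quantity. $G^{\rm all}_t$ is a function of $(\mathsf T,U^t)$, and $\bar A_t$ depends on $(\mathsf T,U^t)$ only through $G^{\rm all}_t$. Hence $I(\bar A_t;U^t\mid\mathsf T)\le I(\bar A_t;\mathsf T,U^t)=I(\bar A_t;G^{\rm all}_t)$. In other words, you charge only selector information and not information carried by the feedback itself. The price is building the full counterfactual tree and checking that its law matches the original. The final KL bound is the same either way.

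Your proposed fallback does not actually bypass Step 2, since it still relies on $\mathcal I_t^{\rm bd,G}\le I(\bar A_t;U^t\mid\mathsf T)$. The genuine coupling-free fallback is the paper's chain rule followed by $I(\bar A_t;G^{\rm all}_t)\le H(\bar A_t)\le\log K$.
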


\begin{proof}
Since
\[
    \mathsf G^{\rm bd}_{s,t}=G_{s,0}(\bar A_t)
\]
is a measurable function of \(\bar A_t\) and the proof-side context
\(G^{\rm bd}_{s-1}\), data processing gives
\[
    I\!\left(\mathsf G^{\rm bd}_{s,t};U_s
        \mid G^{\rm bd}_{s-1}\right)
    \le
    I\!\left(\bar A_t;U_s
        \mid G^{\rm bd}_{s-1}\right).
\]
Therefore
\[
    \mathcal I_t^{\rm bd,G}
    \le
    \sum_{s=1}^t
    I\!\left(\bar A_t;U_s
        \mid G^{\rm bd}_{s-1}\right).
\]
For notational clarity, write
\[
    V_s\coloneqq (\pi_s,A_{s,0},R_{s,0},A_{s,1},R_{s,1}),
    \qquad
    G_{s-1}^{\rm bd}=(H_0^{\rm bd},V_1,U_1,\ldots,V_{s-1},U_{s-1},V_s),
\]
with the obvious interpretation for \(s=1\), and set
\[
    G_t^{\rm all}\coloneqq (H_0^{\rm bd},V_1,U_1,\ldots,V_t,U_t).
\]
Thus \(G_{s-1}^{\rm bd}\) is the random context immediately before revealing
\(U_s\) in the ordered proof transcript
\[
    H_0^{\rm bd},V_1,U_1,V_2,U_2,\ldots,V_t,U_t.
\]
No additional copy of \(\mathcal F_{s-1}^{\rm bd}\) is needed, because the selected
past is reconstructed from the previously revealed rows and selectors, and
\(\pi_s\) is included directly in \(V_s\).  The chain rule for ordinary mutual
information gives
\[
\begin{aligned}
    I(\bar A_t;G_t^{\rm all})
    &=
    I(\bar A_t;H_0^{\rm bd})  \\
    &\quad+
    \sum_{s=1}^t
    I\!\left(\bar A_t;V_s
        \mid H_0^{\rm bd},V_1,U_1,\ldots,V_{s-1},U_{s-1}\right) \\
    &\quad+
    \sum_{s=1}^t
    I\!\left(\bar A_t;U_s
        \mid G_{s-1}^{\rm bd}\right).
\end{aligned}
\]
The first two terms on the right are nonnegative, and hence
\[
    \sum_{s=1}^t
    I\!\left(\bar A_t;U_s
        \mid G^{\rm bd}_{s-1}\right)
    \le
    I(\bar A_t;G_t^{\rm all}).
\]
By the auxiliary-arm construction, the conditional law of \(\bar A_t\) given \(G_t^{\rm all}\) is the selected-history posterior \(\rho_t^{\rm exp}\). Let \(q\) denote the marginal law of \(\bar A_t\). Then
\[
    I(\bar A_t;G_t^{\rm all})
    =
    \mathbb E\bigl[\mathrm{KL}(\rho_t^{\rm exp}\Vert q)\bigr].
\]
For any prior \(\pi_0\), the prior comparison identity gives
\[
    \mathbb E\bigl[\mathrm{KL}(\rho_t^{\rm exp}\Vert\pi_0)\bigr]
    =
    \mathbb E\bigl[\mathrm{KL}(\rho_t^{\rm exp}\Vert q)\bigr]
    +
    \mathrm{KL}(q\Vert\pi_0)
    \ge
    I(\bar A_t;G_t^{\rm all}).
\]
Combining the previous displays proves
\[
    \mathcal I_t^{\rm bd,G}
    \le
    \mathbb E\bigl[\mathrm{KL}(\rho_t^{\rm exp}\Vert\pi_0)\bigr].
\]
If \(\pi_0\) is uniform on \([K]\), then
\(\mathrm{KL}(\rho_t^{\rm exp}\Vert\pi_0)\le\log K\) almost surely, and the
second claim follows.
\end{proof}

\paragraph{Proof of Theorem~\ref{thm:bandit-main}.}
\begin{proof}
We follow the decomposition in \eqref{eq:bandit-proof-decomposition}.  Lemma~\ref{lem:bandit-smoothing-cost-revised} bounds the smoothing term by \(K\epsilon_{t+1}\).  Lemma~\ref{lem:bandit-scmi-bernstein-transfer-revised} gives
\[
    \mathbb E[\Delta(\rho_t^{\rm exp})]
    \le
    2\mathbb E[\widehat\Delta_t(\rho_t^{\rm exp})]
    +
    \frac{52}{t\epsilon_t\Delta_{\min}}
    \mathcal I_t^{\rm bd,G},
\]
which controls the selector-SCMI transfer term.  Finally, Lemma~\ref{lem:bandit-expweights-empirical-revised} gives
\[
    \mathbb E[\widehat\Delta_t(\rho_t^{\rm exp})]
    \le
    \frac{\log K}{\gamma_t}.
\]
Combining these three bounds gives
\[
    \mathbb E\bigl[\Delta(\widetilde\rho_t^{\rm exp})\bigr]
    \le
    K\epsilon_{t+1}
    +
    \frac{2\log K}{\gamma_t}
    +
    \frac{52}{t\epsilon_t\Delta_{\min}}
    \mathcal I_t^{\rm bd,G}.
\]
This proves the first claim.

By Lemma~\ref{lem:bandit-entropy-reduction-revised}, for every prior
\(\pi_0\in\mathcal P_K\),
\[
    \mathcal I_t^{\rm bd,G}
    \le
    \mathbb E\bigl[\mathrm{KL}(\rho_t^{\rm exp}\Vert\pi_0)\bigr].
\]
Taking \(\pi_0\) to be the uniform distribution gives
\[
    \mathcal I_t^{\rm bd,G}\le\log K.
\]
Therefore
\[
    \mathbb E\bigl[\Delta(\widetilde\rho_t^{\rm exp})\bigr]
    \le
    K\epsilon_{t+1}
    +
    \frac{2\log K}{\gamma_t}
    +
    \frac{52\log K}{t\epsilon_t\Delta_{\min}}.
\]

It remains to verify the cumulative rate.  Choose
\[
    \gamma_t=\frac{2\log K}{K\epsilon_t}.
\]
Then
\[
    \frac{2\log K}{\gamma_t}=K\epsilon_t.
\]
Since \((\epsilon_t)\) is nonincreasing,
\[
    K\epsilon_{t+1}\le K\epsilon_t.
\]
Thus
\[
    \mathbb E\bigl[\Delta(\widetilde\rho_t^{\rm exp})\bigr]
    \le
    2K\epsilon_t
    +
    \frac{52\log K}{t\epsilon_t\Delta_{\min}}.
\]
Set
\[
    \epsilon_t
    =
    \min\left\{
        \frac1K,
        \sqrt{\frac{52\log K}{Kt\Delta_{\min}}}
    \right\}.
\]
In the nontrivial range where the second term in the minimum is active,
\[
    K\epsilon_t
    =
    \sqrt{\frac{52K\log K}{t\Delta_{\min}}},
    \qquad
    \frac{52\log K}{t\epsilon_t\Delta_{\min}}
    =
    \sqrt{\frac{52K\log K}{t\Delta_{\min}}}.
\]
Hence the one-step regret is bounded by a universal numerical constant times
\[
    \sqrt{\frac{K\log K}{t\Delta_{\min}}}.
\]
In the initial range where \(\epsilon_t=1/K\), we use the trivial bound
\(\Delta(\widetilde\rho_t^{\rm exp})\le1\).  The length of this initial range is
at most of order
\[
    \frac{K\log K}{\Delta_{\min}}.
\]
Therefore its total contribution up to horizon \(T\) is at most a universal
constant times
\[
    \min\left\{T,\frac{K\log K}{\Delta_{\min}}\right\}
    \le
    \sqrt{\frac{KT\log K}{\Delta_{\min}}}.
\]
For the remaining range,
\[
    \sum_{t=1}^T
    \sqrt{\frac{K\log K}{t\Delta_{\min}}}
    \le
    C\sqrt{\frac{KT\log K}{\Delta_{\min}}}
\]
for a universal constant \(C>0\).  Combining the two ranges proves
\[
    \sum_{t=1}^T
    \mathbb E\bigl[\Delta(\widetilde\rho_t^{\rm exp})\bigr]
    \le
    C\sqrt{\frac{KT\log K}{\Delta_{\min}}}.
\]
\end{proof}

\begin{remark}[Gap-dependent tuning]
The one-step bound above holds for any nonincreasing exploration schedule
satisfying \(\pi_s(a)\ge\epsilon_s\). The displayed cumulative rate uses
\(\Delta_{\min}\) only to optimize this schedule, and should be read as the
standard gap-dependent, oracle-tuned consequence of the one-step inequality.
Removing this gap knowledge by an adaptive schedule is orthogonal to the
selector-SCMI argument and is not pursued here.
\end{remark}

\subsection{Why the direct supersample Bernstein step fails}
\label{app:bandit-direct-bernstein-discussion}

The proof above is deliberately not written as a direct application of Theorem~\ref{thm:general-bernstein}. The issue is not the Bernstein information inequality itself. Rather, the second-moment condition required by Theorem~\ref{thm:general-bernstein} is not verified for the terminal bandit statistic by the naive fixed-distribution calculation. There is one obstruction, but it has two useful manifestations.

First, the fixed-distribution Bernstein calculation is valid only before the terminal posterior is inserted. For a fixed distribution \(\rho\), or for an auxiliary arm independent of the current feedback coordinate, the usual importance-weighted calculation gives the gap-dependent scale
\[
\mathbb E\!\left[G_{s,u}(\rho)^2\mid \mathcal F^{\rm bd}_{s-1}\right]
\lesssim
\frac{1}{\epsilon_t\Delta_{\min}}\Delta(\rho),
\]
where \(\mathcal F^{\rm bd}_{s-1}=\mathcal H^{\rm bd}_{s-1}\vee\sigma(\pi_s)\) is the ordinary selected bandit history augmented by the predictable behavior policy. This is the variance input behind the bandit rate.

The statistic used in the supersample transfer is not fixed in this sense. It is
\[
\mathsf G^{\rm bd}_{s,t}=G_{s,0}(\bar A_t),
\qquad
\bar A_t\mid \mathcal H_t^{\rm bd}\sim \rho_t^{\rm exp},
\]
and \(\rho_t^{\rm exp}\) is computed from the selected trajectory. A direct invocation of Theorem~\ref{thm:general-bernstein} would require a condition of the form
\[
\frac1t\sum_{s=1}^t
\mathbb E\!\left[
\mathbb E\!\left[
G_{s,0}(\bar A_t)^2
\middle|
G^{\rm bd}_{s-1}
\right]
\right]
\le
B\,\mathbb E[\Delta(\rho_t^{\rm exp})].
\]
This does not follow from the fixed-\(\rho\) calculation, because \(\bar A_t\) depends on the selected feedback path.

Second, the fixed branch \(0\) changes its role depending on the selector:
\[
\begin{array}{c|cc}
U_s & G_{s,0}(\bar A_t) & G_{s,1}(\bar A_t)\\
\hline
0 & \text{selected branch} & \text{ghost branch}\\
1 & \text{ghost branch} & \text{selected branch}
\end{array}
\]
Thus \(G_{s,0}(\bar A_t)\) is a genuine ghost-coordinate statistic only on the branch \(U_s=1\). On the branch \(U_s=0\), it is evaluated on selected feedback and may be coupled to \(\bar A_t\) through the terminal posterior. The ghost-coordinate second-moment calculation is therefore unavailable simultaneously on both branches.

Lemma~\ref{lem:selector-comparison-square-revised} repairs this precise gap. Applied to \(X=\mathsf G^{\rm bd}_{s,t}\), it gives
\[
\mathbb E[X^2]
\le
\frac32\mathbb E[X^2\mid U_s=1]
+
20\epsilon_t^{-2}
I\!\left(\mathsf G^{\rm bd}_{s,t};U_s\mid G^{\rm bd}_{s-1}\right).
\]
The conditional term is the genuine ghost-branch second moment controlled by Lemma~\ref{lem:bandit-ghost-second-moment-revised}; the possible selected-branch inflation is charged to the selector-SCMI increment. This is the additional correction needed for the supersample bandit proof.

\section{Why the bandit proof uses a past-retaining proof context}
\label{app:bandit-proof-filtration-choice}

This section compares two proof-design choices. The local proof contexts used in Sections~\ref{sec:general-framework} and~\ref{sec:active-learning-application} are minimal for one-step selected--ghost comparisons. The bandit proof instead retains past proof-side rows because its entropy-reduction step involves a terminal posterior tied to the whole selected trajectory.

\paragraph{Local contexts in the general and active-learning bounds.}
In the general SCMI theorem, the local row-swap field is
\[
\mathcal G^{\rm loc}_{t-1}
\coloneqq 
\mathcal H_{t-1}\vee\sigma(\widetilde Z_t).
\]
It contains exactly the selected past and the current paired row. This is enough to make the current selector conditionally uniform,
\[
\mathbb P(U_t=0\mid\mathcal G^{\rm loc}_{t-1})
=
\mathbb P(U_t=1\mid\mathcal G^{\rm loc}_{t-1})
=
\frac12,
\]
and to fix the two coordinates being compared in the round-\(t\) row swap. It deliberately does not retain past ghost rows, because they are irrelevant to the one-step comparison and are never observed by the learner.

The same principle is used in active learning. The local active field
\[
\mathcal G^{\rm act}_{t-1}
=
\sigma(H^{\rm act}_{t-1},X_{t,0},Y_{t,0},V_{t,0},X_{t,1},Y_{t,1},V_{t,1})
\]
fixes the current features, labels, query coins, query probabilities, and query indicators. It is the right field for the terminal row-swap identity because the proof compares the two current coordinates after the terminal predictor has been induced by the selected path. The separate selected-transcript field \(\mathcal T_t\) in Appendix~\ref{app:active-T-vs-G-discussion} is used only for the importance-weighted population-risk identity, not for the SCMI row swap.

\paragraph{Past-retaining versions are possible but not minimal.}
One could formulate the general SCMI proof with a larger, bandit-like proof transcript. For example, in an i.i.d.-row or fully coupled construction, one may condition on
\[
\overline{\mathcal G}^{\rm seq}_{t-1}
\coloneqq 
\sigma\!\left(
\widetilde Z_1,U_1,
\ldots,
\widetilde Z_{t-1},U_{t-1},
\widetilde Z_t,
\Xi^{\rm alg}_{<t}
\right),
\]
where \(\Xi^{\rm alg}_{<t}\) denotes learner-side randomness up to time \(t-1\). Since this larger field contains the selected past and the current row, the one-step identity can still be written as
\[
\mathrm{Err}^{\rm seq}_n
=
\frac{2}{n}\sum_{t=1}^n
\mathbb E\!\left[
\mathbb E\!\left[
\varepsilon_t L_t^+
\middle|
\overline{\mathcal G}^{\rm seq}_{t-1}
\right]
\right],
\]
leading to the valid bound
\[
\left|\mathrm{Err}^{\rm seq}_n\right|
\le
\frac{2}{n}\sum_{t=1}^n
\sqrt{2I\!\left(L_t^+;U_t\mid \overline G^{\rm seq}_{t-1}\right)}.
\]
Here \(\overline G^{\rm seq}_{t-1}\) is any random element generating \(\overline{\mathcal G}^{\rm seq}_{t-1}\).

The analogous active-learning transcript could retain previous active rows and selectors, for instance
\[
\overline{\mathcal G}^{\rm act}_{t-1}
\coloneqq 
\sigma\!\left(
G^{\rm act}_{0},U_1,
G^{\rm act}_{1},U_2,
\ldots,
G^{\rm act}_{t-2},U_{t-1},
G^{\rm act}_{t-1}
\right),
\]
with the obvious truncation when \(t=1\). The row-swap identity could then be written with \(\overline{\mathcal G}^{\rm act}_{t-1}\) in place of \(\mathcal G^{\rm act}_{t-1}\), and the resulting information term would be
\[
I\!\left(M_{n,t}^+;U_t\mid \overline G^{\rm act}_{t-1}\right).
\]
These formulations are valid proof variants, but they are not the minimal statements. They condition on proof-side ghost information from earlier rounds even though that information is not used by the learner and is not needed for the current selected--ghost comparison.

There is also no information-theoretic reason to expect the larger fields to improve the bound. Conditional mutual information is not monotone in the conditioning field. More explicitly, for any additional proof-side object \(B\),
\[
I(X;Y\mid Z,B)-I(X;Y\mid Z)
=
I(X;B\mid Y,Z)-I(X;B\mid Z),
\]
whose sign is not fixed. Thus, in general,
\[
I\!\left(L_t^+;U_t\mid \overline G^{\rm seq}_{t-1}\right)
\not\le
I\!\left(L_t^+;U_t\mid G^{\rm loc}_{t-1}\right),
\]
and the reverse inequality need not hold either. The local fields in Sections~\ref{sec:general-framework} and~\ref{sec:active-learning-application} therefore give the clearest interpretation of the SCMI increment: information about the current selector revealed by the current selected-coordinate update, after only the information needed for that row comparison has been fixed.

\paragraph{Why bandits require a larger transcript.}
The bandit proof has an additional terminal object, the virtual arm \(\bar A_t\sim\rho_t^{\rm exp}\). Since \(\rho_t^{\rm exp}\) is computed from the whole selected trajectory, \(\bar A_t\) is coupled to all selected feedback up to time \(t\). Consequently, the terminally evaluated variables
\[
\mathsf G^{\rm bd}_{s,t}=G_{s,0}(\bar A_t),
\qquad s=1,\ldots,t,
\]
are linked across time through a common posterior. The proof must not only perform row-wise selected--ghost comparisons; it must also reduce the sum of selector-information increments to a single posterior KL term.

A purely local bandit field such as
\[
\mathcal G^{{\rm bd},{\rm loc}}_{s-1}
\coloneqq 
\mathcal H^{\rm bd}_{s-1}
\vee
\sigma(\pi_s,A_{s,0},R_{s,0},A_{s,1},R_{s,1})
\]
would still be meaningful for a one-step row comparison. However, the sum
\[
\sum_{s=1}^t
I\!\left(\bar A_t;U_s\mid \mathcal G^{{\rm bd},{\rm loc}}_{s-1}\right)
\]
does not follow a single nested reveal order. The conditioning objects are local to different rows, and they are not the contexts immediately preceding \(U_s\) in one common proof transcript. Therefore the clean chain-rule reduction to a posterior KL term would require an additional argument.

The past-retaining field is chosen precisely to avoid this problem. Write
\[
V_s\coloneqq (\pi_s,A_{s,0},R_{s,0},A_{s,1},R_{s,1}),
\qquad
G^{\rm bd}_{s-1}=(H^{\rm bd}_0,V_1,U_1,\ldots,V_{s-1},U_{s-1},V_s).
\]
Then \(G^{\rm bd}_{s-1}\) is the context immediately before revealing \(U_s\) in the ordered proof transcript
\[
H^{\rm bd}_0,V_1,U_1,V_2,U_2,\ldots,V_t,U_t.
\]
Consequently, Lemma~\ref{lem:bandit-entropy-reduction-revised} can use the ordinary chain rule:
\[
\sum_{s=1}^t
I\!\left(\bar A_t;U_s\mid G^{\rm bd}_{s-1}\right)
\le
I\!\left(\bar A_t;G_t^{\rm all}\right)
\le
\mathbb E\!\left[\KL(\rho_t^{\rm exp}\Vert\pi_0)\right].
\]
This is the reason for retaining past proof-side feedback in the bandit proof. The enlargement is not introduced to make the local row swap valid; it is introduced so that the row-wise selector-information costs can be summed by a single entropy-reduction argument. Even here, the proof does not condition on the current selector \(U_s\) before measuring its information: \(U_s\) remains the variable being charged.

\section{Ordinary-MI baseline for the bandit posterior}
\label{app:bandit-ordinary-mi-bernstein}

We also record a more direct bound based on ordinary mutual information.  This
baseline is useful for comparison: in the finite-arm setting it gives the same
horizon order, but it charges the dependence of the whole terminal arm on the
selected history rather than the roundwise selector-SCMI increments.  The
argument treats the terminal arm \(\bar A_t\) as a randomized output of the
selected bandit history and controls its adaptivity bias by the ordinary MI
between \(\bar A_t\) and that history.  Since the behavior policy \(\pi_s\) is
predictable, after conditioning on the ordinary pre-round field it is just a
fixed sampling distribution.  Thus the proof is the ordinary-MI analogue of the
Bernstein argument in Section~\ref{sec:general-framework}: the only
bandit-specific step is to verify the fixed-arm Bernstein condition.

Let the ordinary observed bandit history be defined recursively by
\[
    H_s^{\rm bd}\coloneqq (H_{s-1}^{\rm bd},\pi_s,A_s,R_s),
    \qquad
    \mathcal H_s^{\rm bd}\coloneqq \sigma(H_s^{\rm bd}).
\]
As in the main bandit section, the smoothed distribution \(\widetilde\rho_s^{\rm exp}\) is computed from the observed transcript and is not added as a separate history component.
Let \(\bar A_t\sim\rho_t^{\rm exp}\) be drawn conditionally on
\(H_t^{\rm bd}\), and define
\[
    \mathcal J_t^{\rm bd}
    \coloneqq 
    I(\bar A_t;H_t^{\rm bd}).
\]
The empirical gap is written using the selected observations only:
\[
    \widehat\Delta_t(\rho)
    =
    \frac1t\sum_{s=1}^t G_s(\rho),
    \qquad
    G_s(\rho)\coloneqq \sum_{a=1}^K\rho(a)G_s(a),
\]
where
\[
    G_s(a)\coloneqq R_s^{a^\star}-R_s^a,
    \qquad
    R_s^a\coloneqq \frac{\mathbf1\{A_s=a\}R_s}{\pi_s(a)}.
\]

\begin{lemma}[Fixed-arm bandit Bernstein condition]
\label{lem:ordinary-mi-fixed-arm-bandit-bernstein}
For every arm \(a\in[K]\) and every \(s\le t\), define
\[
    M_s(a)\coloneqq \Delta(a)-G_s(a).
\]
Then, conditionally on the ordinary pre-round bandit field
\(\mathcal H_{s-1}^{\rm bd}\vee\sigma(\pi_s)\),
\[
    \mathbb E[M_s(a)\mid\mathcal H_{s-1}^{\rm bd},\pi_s]=0,
    \qquad
    |M_s(a)|\le 2\epsilon_t^{-1},
\]
and
\[
    \mathbb E[M_s(a)^2\mid\mathcal H_{s-1}^{\rm bd},\pi_s]
    \le
    \frac{2}{\epsilon_t\Delta_{\min}}\Delta(a).
\]
Consequently, the fixed-arm process satisfies the Bernstein condition with
\[
    b=2\epsilon_t^{-1},
    \qquad
    B=\frac{2}{\epsilon_t\Delta_{\min}}.
\]
\end{lemma}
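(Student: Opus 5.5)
The plan is a direct fixed-arm computation conditional on the pre-round field $\mathcal F_{s-1}^{\rm bd}=\mathcal H_{s-1}^{\rm bd}\vee\sigma(\pi_s)$. Once we condition on this field, $\pi_s$ is a fixed sampling distribution with $\pi_s(b)\ge\epsilon_s\ge\epsilon_t$ for all $b$, by the nonincreasing schedule and $s\le t$. The fresh pair $(A_s,R_s)$ is drawn as $A_s\sim\pi_s$ and then $R_s\sim P_{A_s}$. Nothing else enters, since the arm $a$ is deterministic. If $a=a^\star$, then $G_s(a)=0$ and $\Delta(a)=0$, so $M_s(a)=0$ and every claim is trivial. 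I therefore fix $a\neq a^\star$ throughout.

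\textbf{Mean-zero property.} I use the elementary identity
\[
\mathbb E[R_s^b\mid\mathcal F_{s-1}^{\rm bd}]=\pi_s(b)\,r(b)/\pi_s(b)=r(b)
\]
for every arm $b$. This gives $\mathbb E[G_s(a)\mid\mathcal F_{s-1}^{\rm bd}]=r(a^\star)-r(a)=\Delta(a)$, hence $\mathbb E[M_s(a)\mid\mathcal F_{s-1}^{\rm bd}]=0$.

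\textbf{Boundedness.} Only one arm is played, so at most one of the indicators $\mathbf 1\{A_s=a^\star\}$ and $\mathbf 1\{A_s=a\}$ is nonzero. Hence $|G_s(a)|\le\epsilon_t^{-1}$. Together with $0\le\Delta(a)\le1\le\epsilon_t^{-1}$ (using $\epsilon_t\le1/K<1$), this gives $|M_s(a)|\le 2\epsilon_t^{-1}$.

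\textbf{Second moment.} The conditional variance is bounded by the second moment, $\mathbb E[M_s(a)^2\mid\cdot]\le\mathbb E[G_s(a)^2\mid\cdot]$. This holds because $M_s(a)$ is $G_s(a)$ centred at its conditional mean. The disjoint-indicator calculation already carried out in Lemma~\ref{lem:bandit-ghost-second-moment-revised} then gives
\[
\mathbb E[G_s(a)^2\mid\mathcal F_{s-1}^{\rm bd}]\le \frac{1}{\pi_s(a^\star)}+\frac{1}{\pi_s(a)}\le\frac{2}{\epsilon_t}.
\]
Since $a\neq a^\star$, we have $\Delta(a)\ge\Delta_{\min}$, so $1\le\Delta(a)/\Delta_{\min}$. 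Multiplying through yields $\frac{2}{\epsilon_t\Delta_{\min}}\Delta(a)$, which also covers the case $a=a^\star$ because both sides vanish there.

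\textbf{Bernstein condition.} The final clause is a restatement of these facts. The process $M_s(a)$ is a conditionally centred, bounded increment with $b=2\epsilon_t^{-1}$, and its conditional second moment is at most $B\,\Delta(a)$ with $B=2/(\epsilon_t\Delta_{\min})$. Here $\Delta(a)$ is exactly the mean excess $\mathbb E[G_s(a)\mid\cdot]$, which plays the role of $R^{\rm ho}$ in \eqref{eq:general-bernstein-condition}.

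I do not expect a genuine obstacle. The only point needing care is that the bound must use $\epsilon_t$ rather than $\epsilon_s$, which relies on the monotonicity of $(\epsilon_s)$. I would also state explicitly that the conditioning field contains $\pi_s$, so the importance weights are fixed constants in the computation.
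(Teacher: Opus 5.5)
Your proposal is correct and follows the paper's proof essentially step by step. You use importance-weighting unbiasedness for the centring, the disjoint indicators together with $\pi_s(\cdot)\ge\epsilon_s\ge\epsilon_t$ for the range $1+\epsilon_t^{-1}\le 2\epsilon_t^{-1}$, and the bound ``variance $\le$ second moment'' combined with $1/\pi_s(a^\star)+1/\pi_s(a)\le 2/\epsilon_t$ and $\Delta(a)\ge\Delta_{\min}$ for the second-moment estimate.
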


\begin{proof}
Conditioning on \(\mathcal H_{s-1}^{\rm bd}\vee\sigma(\pi_s)\) fixes the
behavior policy \(\pi_s\).  Hence the selected feedback at round \(s\) is an
ordinary bandit sample from this fixed policy.  The importance-weighting identity
gives
\[
    \mathbb E[R_s^a\mid\mathcal H_{s-1}^{\rm bd},\pi_s]=r(a),
\]
and therefore
\(\mathbb E[G_s(a)\mid\mathcal H_{s-1}^{\rm bd},\pi_s]=\Delta(a)\).  This proves
the martingale-difference property of \(M_s(a)\).

For the range, \(0\le\Delta(a)\le1\) and
\(|G_s(a)|\le\epsilon_t^{-1}\) for \(s\le t\), because
\(\pi_s(a)\ge\epsilon_s\ge\epsilon_t\).  Thus
\[
    |M_s(a)|\le 1+\epsilon_t^{-1}\le 2\epsilon_t^{-1}.
\]
For the conditional second moment, the case \(a=a^\star\) is trivial.  If
\(a\ne a^\star\), then
\[
    G_s(a)
    =
    \frac{\mathbf1\{A_s=a^\star\}R_s}{\pi_s(a^\star)}
    -
    \frac{\mathbf1\{A_s=a\}R_s}{\pi_s(a)}.
\]
Only one of the two terms can be nonzero.  Since rewards lie in \([0,1]\) and
\(\pi_s(\cdot)\ge\epsilon_t\),
\[
    \mathbb E[G_s(a)^2\mid\mathcal H_{s-1}^{\rm bd},\pi_s]
    \le
    \frac1{\pi_s(a^\star)}+\frac1{\pi_s(a)}
    \le
    \frac2{\epsilon_t}.
\]
Because \(M_s(a)=\Delta(a)-G_s(a)\) is centered conditionally on the pre-round
field,
\[
    \mathbb E[M_s(a)^2\mid\mathcal H_{s-1}^{\rm bd},\pi_s]
    \le
    \mathbb E[G_s(a)^2\mid\mathcal H_{s-1}^{\rm bd},\pi_s]
    \le
    \frac{2}{\epsilon_t\Delta_{\min}}\Delta(a),
\]
where the last step uses \(\Delta(a)\ge\Delta_{\min}\) for \(a\ne a^\star\).
This is the desired Bernstein condition.
\end{proof}

\begin{lemma}[Ordinary-MI Bernstein transfer]
\label{lem:ordinary-mi-bandit-bernstein-transfer}
For every \(t\ge1\),
\[
    \mathbb E[\Delta(\rho_t^{\rm exp})]
    \le
    \frac{10}{7}\mathbb E[\widehat\Delta_t(\rho_t^{\rm exp})]
    +
    \frac{40}{7t\epsilon_t\Delta_{\min}}\mathcal J_t^{\rm bd}.
\]
Consequently,
\[
    \mathbb E[\Delta(\rho_t^{\rm exp})]
    \le
    \frac{10}{7}\mathbb E[\widehat\Delta_t(\rho_t^{\rm exp})]
    +
    \frac{6}{t\epsilon_t\Delta_{\min}}\mathcal J_t^{\rm bd}.
\]
\end{lemma}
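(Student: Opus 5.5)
The plan is to use the Donsker--Varadhan variational formula for the ordinary mutual information $\mathcal J_t^{\rm bd}=I(\bar A_t;H_t^{\rm bd})=\KL(P_{\bar A_t,H_t^{\rm bd}}\Vert q\otimes P_{H_t^{\rm bd}})$. Here $q$ is the marginal law of $\bar A_t$. The test function is an exponential-supermartingale statistic built from the fixed-arm increments of Lemma~\ref{lem:ordinary-mi-fixed-arm-bandit-bernstein}. For $\lambda\in(0,3/b)$ with $b=2\epsilon_t^{-1}$ and $B=2/(\epsilon_t\Delta_{\min})$, define
\[
f(a,h)\coloneqq \lambda\sum_{s=1}^t M_s(a)-t\,\psi_b(\lambda)\,B\,\Delta(a).
\]
Here $\psi_b$ is the Bennett envelope from \eqref{eq:def-psi-b}, and $M_s(a)=\Delta(a)-G_s(a)$ is a function of $h=H_t^{\rm bd}$ once $a$ is fixed.

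First I control the reference log-moment. Under the product law, $\bar A_t=a$ is independent of the bandit history, so the history is an ordinary bandit process. By Lemma~\ref{lem:ordinary-mi-fixed-arm-bandit-bernstein}, the variables $M_s(a)$ are martingale differences for the pre-round fields $\mathcal H_{s-1}^{\rm bd}\vee\sigma(\pi_s)$. They satisfy $M_s(a)\le b$ and have conditional second moment at most $B\Delta(a)$. The Bennett mgf step in the proof of Lemma~\ref{lem:one-step-bernstein} uses only centering, the one-sided bound $X\le b$, and monotonicity of $r$. It gives $\E[e^{\lambda M_s(a)}\mid\mathcal H_{s-1}^{\rm bd},\pi_s]\le\exp\{\psi_b(\lambda)B\Delta(a)\}$. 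Iterating the tower property over $s=t,t-1,\dots,1$ yields $\E_{q\otimes P}[e^{f}]\le1$, so the log-moment term in Donsker--Varadhan is nonpositive.

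Next I identify the joint-law expectation. Since $\bar A_t\mid H_t^{\rm bd}\sim\rho_t^{\rm exp}$ and $M_s(a)$ is linear in $a$, averaging over $\bar A_t$ gives $\E[\frac1t\sum_s M_s(\bar A_t)]=\E[\Delta(\rho_t^{\rm exp})]-\E[\widehat\Delta_t(\rho_t^{\rm exp})]$ and $\E[\Delta(\bar A_t)]=\E[\Delta(\rho_t^{\rm exp})]$. Donsker--Varadhan then gives
\[
\Bigl(1-\tfrac{\psi_b(\lambda)B}{\lambda}\Bigr)\E[\Delta(\rho_t^{\rm exp})]\le\E[\widehat\Delta_t(\rho_t^{\rm exp})]+\frac{\mathcal J_t^{\rm bd}}{\lambda t}.
\]

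Finally, I tune the constants with $\lambda=\epsilon_t\Delta_{\min}/4$. By \eqref{eq:psi-b-bound}, $\psi_b(\lambda)B/\lambda\le \lambda B/(2(1-\lambda b/3))$. Here $\lambda B=1/2$ and $\lambda b/3=\Delta_{\min}/6\le 1/6$, using $\Delta_{\min}\le1$. The same facts give $\lambda<3/b$. So the ratio is at most $\frac14\cdot\frac65=\frac3{10}$, and the left coefficient is at least $7/10$. Dividing by $7/10$ and using $1/\lambda=4/(\epsilon_t\Delta_{\min})$ gives the constants $10/7$ and $40/7$. The second claim then follows from $40/7\le6$ and $\mathcal J_t^{\rm bd}\ge0$.

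The main obstacle is the supermartingale step. The Bennett envelope must be applied to the one-sided bounded increment $M_s(a)\le b$, and not to $|M_s(a)|\le b$, which also holds here anyway. The variance bound $B\Delta(a)$ must be predictable, which holds because $a$ is fixed under the product law.
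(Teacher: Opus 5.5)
Your proposal is correct and follows the same route as the paper. You apply Donsker--Varadhan to $I(\bar A_t;H_t^{\rm bd})$ against the product law $q\otimes P_{H_t^{\rm bd}}$, obtain a per-arm Bernstein supermartingale bound from Lemma~\ref{lem:ordinary-mi-fixed-arm-bandit-bernstein} with $b=2\epsilon_t^{-1}$ and $B=2/(\epsilon_t\Delta_{\min})$, and choose $\lambda=\epsilon_t\Delta_{\min}/4$ so the absorbed coefficient is at most $3/10$, giving $10/7$ and $40/7$. Your explicit test function with the compensator $-t\psi_b(\lambda)B\Delta(a)$, and the tower-property iteration, just spell out the exponential-moment step that the paper only invokes as ``the usual Bernstein exponential-moment bound.''
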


\begin{proof}
Since \(\bar A_t\mid H_t^{\rm bd}\sim\rho_t^{\rm exp}\),
\[
    \mathbb E[\Delta(\rho_t^{\rm exp})]
    -
    \mathbb E[\widehat\Delta_t(\rho_t^{\rm exp})]
    =
    \frac1t
    \mathbb E\!\left[
        \sum_{s=1}^t M_s(\bar A_t)
    \right].
\]
Lemma~\ref{lem:ordinary-mi-fixed-arm-bandit-bernstein} supplies exactly the
Bernstein inputs needed by the ordinary-MI version of the proof of
Theorem~\ref{thm:general-bernstein}. More explicitly, for each fixed arm
\(a\), the lemma gives the usual Bernstein exponential-moment bound for
\(\sum_{s=1}^t M_s(a)\) conditionally on the predictable policies. Applying
the Donsker--Varadhan variational formula to the joint law of
\((\bar A_t,H_t^{\rm bd})\) and the product reference law that keeps the
marginal law of \(\bar A_t\) fixed yields the ordinary-MI penalty
\(\mathcal J_t^{\rm bd}=I(\bar A_t;H_t^{\rm bd})\). Therefore, for every
\(0<\lambda<3\epsilon_t/2\),
\[
    \mathbb E[\Delta(\rho_t^{\rm exp})]
    -
    \mathbb E[\widehat\Delta_t(\rho_t^{\rm exp})]
    \le
    \frac{1}{\lambda t}\mathcal J_t^{\rm bd}
    +
    \frac{\lambda}{\epsilon_t\Delta_{\min}(1-2\lambda/(3\epsilon_t))}
    \mathbb E[\Delta(\rho_t^{\rm exp})].
\]
This is the same Bernstein calculation as in Section~\ref{sec:general-framework},
with ordinary terminal MI in place of the selector-SCMI sum.

Choose \(\lambda=\epsilon_t\Delta_{\min}/4\).  Since \(\Delta_{\min}\le1\),
\[
    1-\frac{2\lambda}{3\epsilon_t}
    =
    1-\frac{\Delta_{\min}}6
    \ge
    \frac56,
\]
and hence
\[
    \frac{\lambda}{\epsilon_t\Delta_{\min}(1-2\lambda/(3\epsilon_t))}
    \le
    \frac{3}{10}.
\]
Therefore
\[
    \mathbb E[\Delta(\rho_t^{\rm exp})]
    -
    \mathbb E[\widehat\Delta_t(\rho_t^{\rm exp})]
    \le
    \frac{4}{t\epsilon_t\Delta_{\min}}\mathcal J_t^{\rm bd}
    +
    \frac{3}{10}\mathbb E[\Delta(\rho_t^{\rm exp})].
\]
Rearranging yields
\[
    \mathbb E[\Delta(\rho_t^{\rm exp})]
    \le
    \frac{10}{7}\mathbb E[\widehat\Delta_t(\rho_t^{\rm exp})]
    +
    \frac{40}{7t\epsilon_t\Delta_{\min}}\mathcal J_t^{\rm bd}.
\]
The looser display follows from \(40/7<6\).
\end{proof}

\begin{theorem}[Bandit bound with ordinary MI]
\label{thm:bandit-ordinary-mi-bernstein}
Under the assumptions of Theorem~\ref{thm:bandit-main}, for every \(t\ge1\),
\[
    \mathbb E[\Delta(\widetilde\rho_t^{\rm exp})]
    \le
    K\epsilon_{t+1}
    +
    \frac{2\log K}{\gamma_t}
    +
    \frac{6}{t\epsilon_t\Delta_{\min}}\mathcal J_t^{\rm bd}.
\]
Moreover, for every prior \(\pi_0\in\mathcal P_K\),
\[
    \mathcal J_t^{\rm bd}
    \le
    \mathbb E[\KL(\rho_t^{\rm exp}\Vert\pi_0)].
\]
In particular, for the uniform prior,
\[
    \mathbb E[\Delta(\widetilde\rho_t^{\rm exp})]
    \le
    K\epsilon_{t+1}
    +
    \frac{2\log K}{\gamma_t}
    +
    \frac{6\log K}{t\epsilon_t\Delta_{\min}}.
\]
Consequently, with
\[
    \epsilon_t
    =
    \min\left\{
        \frac1K,
        \sqrt{\frac{6\log K}{Kt\Delta_{\min}}}
    \right\},
    \qquad
    \gamma_t=\frac{2\log K}{K\epsilon_t},
\]
there exists a universal constant \(C>0\) such that
\[
    \sum_{t=1}^T
    \mathbb E[\Delta(\widetilde\rho_t^{\rm exp})]
    \le
    C\sqrt{\frac{KT\log K}{\Delta_{\min}}}.
\]
\end{theorem}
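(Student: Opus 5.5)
The plan mirrors the proof of Theorem~\ref{thm:bandit-main}, with the selector-SCMI transfer replaced by Lemma~\ref{lem:ordinary-mi-bandit-bernstein-transfer}. I keep the three-term decomposition \eqref{eq:bandit-proof-decomposition}, but with the factor $10/7$ in place of $2$ on the empirical term.

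\begin{enumerate}
\item Bound the smoothing term by $K\epsilon_{t+1}$ using Lemma~\ref{lem:bandit-smoothing-cost-revised}.
\item Invoke Lemma~\ref{lem:ordinary-mi-bandit-bernstein-transfer}, which gives
\[
\mathbb E[\Delta(\rho_t^{\rm exp})]\le \tfrac{10}{7}\mathbb E[\widehat\Delta_t(\rho_t^{\rm exp})]+\tfrac{6}{t\epsilon_t\Delta_{\min}}\mathcal J_t^{\rm bd}.
\]
\item Bound the empirical term by Lemma~\ref{lem:bandit-expweights-empirical-revised}, which gives $\widehat\Delta_t(\rho_t^{\rm exp})\le \log K/\gamma_t$ pathwise. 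The inequality holds pathwise but $\widehat\Delta_t$ can be negative, so I state the factor comparison only for its positive part: $\tfrac{10}{7}\max\{\widehat\Delta_t,0\}\le 2\max\{\widehat\Delta_t,0\}\le 2\log K/\gamma_t$. On the event where $\widehat\Delta_t<0$ the term is nonpositive and can be dropped. Hence $\tfrac{10}{7}\mathbb E[\widehat\Delta_t(\rho_t^{\rm exp})]\le 2\log K/\gamma_t$.
\end{enumerate}
Adding the three bounds gives the first display of the theorem.

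For the information bound, $\bar A_t$ is drawn from $\rho_t^{\rm exp}$ given $H_t^{\rm bd}$. Therefore
\[
\mathcal J_t^{\rm bd}=I(\bar A_t;H_t^{\rm bd})=\mathbb E[\KL(\rho_t^{\rm exp}\Vert q)],
\]
where $q$ is the marginal law of $\bar A_t$. The prior-comparison identity from the proof of Lemma~\ref{lem:bandit-entropy-reduction-revised} gives
\[
\mathbb E[\KL(\rho_t^{\rm exp}\Vert\pi_0)]=\mathcal J_t^{\rm bd}+\KL(q\Vert\pi_0)\ge \mathcal J_t^{\rm bd}.
\]
For the uniform prior, $\KL\le\log K$ almost surely, which gives the specialized one-step bound.

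The cumulative rate repeats the final computation in the proof of Theorem~\ref{thm:bandit-main}, with the constant $52$ replaced by $6$.
\begin{itemize}
\item The choice $\gamma_t=2\log K/(K\epsilon_t)$ makes the middle term equal to $K\epsilon_t$. Since $\epsilon_{t+1}\le\epsilon_t$, the one-step bound is at most $2K\epsilon_t+6\log K/(t\epsilon_t\Delta_{\min})$.
\item When the square-root branch of $\epsilon_t$ is active, both terms are of order $\sqrt{K\log K/(t\Delta_{\min})}$. Summing over $t$ gives $C\sqrt{KT\log K/\Delta_{\min}}$.
\item When $\epsilon_t=1/K$, this branch can be active only for $t\lesssim K\log K/\Delta_{\min}$. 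There I use the trivial bound $\Delta\le 1$, so these rounds contribute at most $\min\{T,K\log K/\Delta_{\min}\}\le\sqrt{KT\log K/\Delta_{\min}}$.
\end{itemize}

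The only genuinely delicate step is the sign issue in step 3, and it is handled by the positive-part argument. Everything else is assembly of lemmas already proved, so I expect no real obstacle.
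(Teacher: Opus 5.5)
Your proposal is correct and follows the paper's proof essentially line for line. It uses the same ingredients: the smoothing lemma, the ordinary-MI transfer lemma, the exponential-weights bound combined with $10/7\le 2$, the prior-comparison identity $\mathcal J_t^{\rm bd}=\mathbb E[\KL(\rho_t^{\rm exp}\Vert q)]\le\mathbb E[\KL(\rho_t^{\rm exp}\Vert\pi_0)]$, and the same balancing argument with $52$ replaced by $6$. The positive-part detour in step 3 is valid but unnecessary. The bound $\tfrac{10}{7}\mathbb E[\widehat\Delta_t(\rho_t^{\rm exp})]\le\tfrac{10}{7}\log K/\gamma_t\le 2\log K/\gamma_t$ follows directly from the pathwise bound together with $\log K/\gamma_t\ge 0$, so the sign of $\widehat\Delta_t$ never matters.
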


\begin{proof}
Lemma~\ref{lem:bandit-smoothing-cost-revised} gives
\[
    \mathbb E[\Delta(\widetilde\rho_t^{\rm exp})]
    \le
    K\epsilon_{t+1}+\mathbb E[\Delta(\rho_t^{\rm exp})].
\]
By Lemma~\ref{lem:ordinary-mi-bandit-bernstein-transfer},
\[
    \mathbb E[\Delta(\rho_t^{\rm exp})]
    \le
    \frac{10}{7}\mathbb E[\widehat\Delta_t(\rho_t^{\rm exp})]
    +
    \frac{6}{t\epsilon_t\Delta_{\min}}\mathcal J_t^{\rm bd}.
\]
Lemma~\ref{lem:bandit-expweights-empirical-revised} gives
\(\mathbb E[\widehat\Delta_t(\rho_t^{\rm exp})]\le\log K/\gamma_t\).  Therefore
\[
    \mathbb E[\Delta(\widetilde\rho_t^{\rm exp})]
    \le
    K\epsilon_{t+1}
    +
    \frac{10\log K}{7\gamma_t}
    +
    \frac{6}{t\epsilon_t\Delta_{\min}}\mathcal J_t^{\rm bd},
\]
which implies the stated bound because \(10/7\le2\).

Let \(q\) be the marginal law of \(\bar A_t\).  Since
\(\mathbb P(\bar A_t=a\mid H_t^{\rm bd})=\rho_t^{\rm exp}(a)\),
\[
    \mathcal J_t^{\rm bd}
    =
    \mathbb E[\KL(\rho_t^{\rm exp}\Vert q)].
\]
For any prior \(\pi_0\), the prior comparison identity gives
\[
    \mathbb E[\KL(\rho_t^{\rm exp}\Vert\pi_0)]
    =
    \mathbb E[\KL(\rho_t^{\rm exp}\Vert q)]
    +
    \KL(q\Vert\pi_0),
\]
so \(\mathcal J_t^{\rm bd}\le
\mathbb E[\KL(\rho_t^{\rm exp}\Vert\pi_0)]\).  The uniform prior gives
\(\mathcal J_t^{\rm bd}\le\log K\).

The cumulative bound follows by the same balancing argument as in
Theorem~\ref{thm:bandit-main}, with the constant \(52\) replaced by \(6\).
\end{proof}

\paragraph{Comparison with the supersample bound.}
The ordinary-MI proof is shorter because, after conditioning on the ordinary
pre-round bandit field, the policy is fixed and the fixed-arm Bernstein condition
above is exactly the required condition.  No row-swap identity,
selector-comparison lemma, or proof-side entropy chain rule is needed.  The price
is that the information term is coarser: \(\mathcal J_t^{\rm bd}\) charges the
dependence of the whole terminal arm on the selected history, whereas the
supersample theorem charges the roundwise selector-SCMI increments
\(I(\mathsf G_{s,t}^{\rm bd};U_s\mid G_{s-1}^{\rm bd})\).  In the finite
arm case both information quantities are bounded by \(\log K\), so both
arguments give the same square-root-in-the-horizon cumulative rate up to logarithmic
and gap-dependent factors. The ordinary-MI argument should therefore be viewed as a
finite-arm posterior baseline, not as a replacement for the supersample result. It
controls the adaptivity of the terminal randomized arm as a whole, whereas the
selector-SCMI proof identifies when selector information enters the row-wise
selected--ghost comparison. This localization is the part that extends naturally to
the supersample, active-learning, and online-learning interpretations developed in
the paper.

\end{document}